\DeclareMathOperator{\argmin}{argmin}  
\DeclareMathOperator{\argmax}{argmax}
\newtheorem{thm}{Theorem}
\newtheorem*{thm*}{Theorem}
\newtheorem{lemma}[thm]{Lemma}
\theoremstyle{definition}
\newtheorem{defini}{Definition}
\newtheorem{assumption}{Assumption}
\newtheorem*{remark}{Remark}
\newenvironment{itemize*}%
{\begin{itemize}[leftmargin=*,topsep=0pt]%
		\setlength{\itemsep}{0pt}%
		\setlength{\parskip}{0pt}}%
	{\end{itemize}}
\newenvironment{enumerate*}%
{\begin{enumerate}[leftmargin=*,topsep=0pt]%
		\setlength{\itemsep}{0pt}%
		\setlength{\parskip}{0pt}}%
	{\end{enumerate}}
\newcommand\numberthis{\addtocounter{equation}{1}\tag{\theequation}}
\definecolor{darkred}{rgb}{0.7,0,0}
\definecolor{teal}{rgb}{0.3,0.8,0.8}
\definecolor{forestgreen}{rgb}{0.13, 0.55, 0.13}
\newcommand{\kibitz}[2]{\ifnum\Comments=1{\textcolor{#1}{\textsf{\footnotesize #2}}}\fi}
\newcommand{\prob}[0]{\mathbb{P}}
\newcommand{\E}[0]{\mathbb{E}}
\newcommand{\Sset}[0]{\mathcal{S}}
\newcommand{\U}[0]{\mathcal{U}}
\newcommand{\A}[0]{\mathcal{A}}
\newcommand{\B}[0]{\mathcal{B}}
\newcommand{\D}[0]{\mathcal{D}}
\newcommand{\M}[0]{\mathcal{M}}
\newcommand{\err}[0]{\text{err}}
\newcommand{\loss}[0]{\text{loss}}
\newcommand{\ind}[0]{\mathbb{1}}
\title{Offline Policy Evaluation and Optimization
under Confounding}
\author{Chinmaya Kausik$^1$*, Yangyi Lu$^3$*,  Kevin Tan$^2$*\\
Maggie Makar$^1$, Yixin Wang$^1$, Ambuj Tewari$^1$}
\date{\small $^1$University of Michigan\ \ $^2$University of Pennsylvania\ \ $^3$Pinterest}
\begin{document}

\maketitle

\begin{abstract}

Evaluating and optimizing policies in the presence of unobserved confounders is a problem of growing interest in offline reinforcement learning. Using conventional methods for offline RL in the presence of confounding can not only lead to poor decisions and poor policies, but also have disastrous effects in critical applications such as healthcare and education. We map out the landscape of offline policy evaluation for confounded MDPs, distinguishing assumptions on confounding based on whether they are memoryless and on their
effect on the data-collection policies. We characterize settings where consistent value estimates are provably not achievable, and provide algorithms with guarantees to instead estimate lower bounds on the value. When consistent estimates are achievable, we provide algorithms for value estimation with sample complexity guarantees. We also present new algorithms for offline policy improvement and prove local convergence guarantees. Finally, we experimentally evaluate our algorithms on both a gridworld environment and a simulated healthcare setting of managing sepsis patients. In gridworld, our model-based method provides tighter lower bounds than existing methods, while in the sepsis simulator, our methods significantly outperform confounder-oblivious benchmarks.
\end{abstract}

\section{Introduction}
A central problem in sequential decision making is learning from offline data, since collecting data in an online fashion is often prohibitively expensive or unsafe \citep{levine2020survey}. Since real-life data is often affected by latent variables, there has been a rise of interest in formulations of reinforcement learning problems with hidden information \citep{Nair2021opePOMDP, Miao2022opePOMDP, lingxiao2020dovi}. The most general kind of latent information is considered by partially observable MDPs or POMDPs \citep{kaelbling1998pomdps, tennenholtz2019pomdp}, where the latent information can affect both rewards and transitions. However, the reward is often \textit{designed} by the user based only on observable variables. In medical examples, the reward could be given based on observed vitals, but unrecorded genetic conditions and socio-economic status can affect actions taken and future states. These examples motivate the important case of reinforcement learning with unobserved confounders, defined as latent information that affects transitions, but not rewards\footnote{Some papers define confounders using a kind of "memorylessness," and allow them to affect rewards \citep{Zhang2016MDPUC,lingxiao2020dovi}. We only consider unconfounded rewards.} \citep{kallus2020confounding,bruns2021model, brunssmith2023robust}. 

The hardness of learning from offline data under confounding comes from the fact that partially observed transitions can be further obscured by behavior policies that might have known the unrecorded confounder \citep{kallus2020confounding}.\ Two offline data distributions might thus be identical despite coming from different confounded MDPs, if the behavior policies accommodated for this difference (see Theorem~\ref{thm:iid-lower-bound}).

\begin{table*}[t]
\centering
\begin{tabular}{  m{8.3em} | m{5cm}| m{4.8cm}  } 
 & With Sensitivity Constraint & Without Sensitivity Constraint \\ 
  \hline
 Memoryless Confounders & 
 Consistency not possible (Theorem~\ref{thm:iid-lower-bound}, $\Omega(\varepsilon H)$ error lower bound), $O(\varepsilon H^2)$ error upper bound with 3 methods (Theorems~\ref{thm:err_FQE}, \ref{thm:err_CFQE}, \ref{thm:model_base}) & 
 $\Omega(H)$ error lower bound (Theorem~\ref{thm:iid-lower-bound})\\ 
 \hline
 Confounders with Memory & 
 Methods mentioned above have $\Omega(H)$ error lower bounds, even with unconfounded $\pi_b$ and $\pi_e$ (Theorem~\ref{thm:hist-dependent-lower-bound}) & 
$\Omega(H)$ lower bound in general. \newline For global confounders, consistency possible, sample complexity guarantees given (Theorem~\ref{thm:cluster-ope-sample-complexity})  \\ 

 \hline
\end{tabular}
\caption{Hardness of the OPE problem under different assumptions on the nature of confounding present. $\Gamma$ is a so-called sensitivity parameter, with $\Gamma = 1+ O(\varepsilon)$. Higher $\varepsilon$ corresponds to more confounded $\pi_b$.}
\label{table:hardness}
\end{table*}

To provide guarantees for learning from offline data, the most common assumption in previous work is that confounders are "memoryless" (Assumption~\ref{assump:hist_indep_u}). This assumption essentially means that they are sampled afresh at each step independently of past confounders, states, or actions \citep{brunssmith2023robust}. In many real-life applications like healthcare and epidemiology \citep{Daniel2013timevarying, clare2018timevarying, timevarying1, timevarying2}, it is more appropriate to assume that the confounders are sampled "with memory" of previous confounders, and even states and actions. A lot of work also assumes that behavior policies follow a sensitivity constraint (Assumption~\ref{assump:sensitivity}) \citep{kallus2020confounding, bruns2021model}. Motivated by these observations, we take the first step towards providing a structured view of the landscape of offline RL for confounded MDPs, distinguishing settings in terms of sensitivity assumptions and whether confounders have memory. We also introduce and study an important sub-case of confounders with memory, called global confounders (Assumption~\ref{assump:global_u}). Specifically, we ask the following questions for each setting:
\vspace{0cm}
\begin{enumerate}[nosep]
    \item[Q.1.] \textit{If consistent offline policy estimation (OPE) is not possible, can we prove lower bounds on the error? What guarantees can we give for algorithms that instead estimate bounds on the value?}
    \item[Q.2.] \textit{If consistent OPE is possible, then what algorithms achieve this? What is their sample complexity?} 
    \item[Q.3.] \textit{How can we use these insights for offline policy improvement?}
\end{enumerate}
\vspace{-0.1cm}
\paragraph{Paper Structure and Contributions.} We detail our contributions below. A summary of key results is provided in Table~\ref{table:hardness}. 

\textit{OPE for Memoryless Confounders, Section~\ref{sec:history-ind-confounders}:} In Theorem~\ref{thm:iid-lower-bound}, we give the first lower bound for OPE error that depends on a sensitivity parameter $\Gamma$ and horizon length $H$. By choosing $\Gamma$ appropriately, we show that value estimation can be \emph{arbitrarily} bad without a sensitivity constraint. The theorem also \emph{quantitatively} shows that the lower bound on error grows with $H$ and consistent estimates are not possible, even under a sensitivity constraint. To provide algorithms that estimate lower bounds on the value, we modify the CFQE algorithm due to \citep{bruns2021model} to our more general definition of memoryless confounding. We are the first to compute quantitative upper bounds on its error and the error for FQE, in Theorems~\ref{thm:err_FQE} and \ref{thm:err_CFQE}. We further provide a new model-based algorithm that improves over CFQE for stationary transition structures, and provide guarantees for it in Theorems~\ref{thm:model_base} and~\ref{thm:model-based-consistent}.

\textit{OPE for Confounders with Memory, Section~\ref{sec:history-dependent-confounders}:} While FQE is a standard workhorse for OPE and also enjoys guarantees for memoryless confounders, it is unclear if (and how badly) FQE fails for confounders with memory. In particular, it is non-trivial to produce lower bound examples in this case. We are the first to present one in Theorem~\ref{thm:hist-dependent-lower-bound}, where we show that FQE can have \emph{arbitrarily} large error for confounders with memory, even for unconfounded $\pi_b$ and $\pi_e$ with bounded concentrability. This shows the hardness of OPE for \emph{general} confounders with memory. In this light, we introduce and study the important sub-case of \textit{global confounders}, where the confounder is fixed at the beginning of each trajectory. We leverage the work of \citep{ambuj2022mixmdp} on clustering mixtures of MDPs to provide an algorithm for OPE under this assumption, along with sample complexity guarantees in Theorem~\ref{thm:cluster-ope-sample-complexity}. While past work on confounded RL has focused only on consistency, we are the first to address the sample complexity of OPE under confounding. 

\textit{Offline Policy Improvement, Section~\ref{sec:optimization}:} We address offline policy improvement in Section \ref{sec:optimization}, presenting policy gradient methods for memoryless confounders under a sensitivity assumption, as well as for global confounders. We prove local convergence for both.

\textit{Experiments, Section~\ref{sec:experiments}:} We test and compare OPE methods for memoryless confounders in the gridworld environment provided by \citep{bruns2021model}. Our experiments show that our model-based method gives tighter lower bounds than existing methods. We also successfully run our policy gradient method for memoryless confounders in the same environment. OPE and policy gradient methods for global confounders are tested in the sepsis simulator from \citep{oberst2019counterfactual}, where we significantly outperform confounder-oblivious implementations of both FQE and policy gradients.

\paragraph{Related Work.} Many specific assumptions on confounders have been studied in recent literature. \citep{kallus2020confounding, bruns2021model, namkoong2020off} all provide algorithms that estimate bounds on the value under a sensitivity assumption. The first two assume variants of memorylessness, while the third assumes that the confounding occurs during only a single timestep. Other work like \citep{kallus2020prox} uses a latent variable model for states and actions to get consistent point estimates. This is similar to work in the POMDP setting \citep{tennenholtz2019pomdp}, and neither approach directly applies to our settings. In general, a treatment of confounders with memory and a big-picture view of the OPE problem under confounding is still missing.

On the other hand, literature on offline policy {\em improvement} in the presence of confounders has grown more gradually. \citep{brunssmith2023robust} provide robust fitted-Q-iteration methods under a sensitivity model and a memoryless assumption. This does not apply to confounders with memory, like global confounders. Other work like \citep{lingxiao2020dovi, liao2021instrumental, fu2022offline} uses auxiliary variables from the data to adjust for confounding bias. However, these do not directly apply to our settings.

\section{Setup and Assumptions}\label{sec:possibility-confounding}
\subsection{Background}

We define an episodic confounded MDP by a tuple $(\mathcal{S}\times\mathcal{U}, \mathcal{A}, H, \{\prob_h\}_{h=1}^H, r, d_0)$, described as follows. 
$\mathcal{S}$ is the set of $S$ observed states and $\mathcal{U}$ the set of $U$ unobserved confounders; 
$\mathcal{A}$ is the set of $A$ actions; 
$H$ is the horizon of each episode; 
$d_0$ is the distribution for initial states $(s_1,u_1)\sim d_0$; 
$r: \mathcal{S}\times\mathcal{A}\rightarrow[0,1]$ denotes the reward function; and $\prob_h(s',u'\mid s,u,a)$ denotes the state transition probability at timestep $h$.

The data is collected under a behavior policy $\pi_b$ specified by $\pi_{b,h}(a \mid s, u)$, which might have used the unrecorded confounders and been time-dependent. The observed behavior policy is obtained by marginalizing $u$ over the induced distribution at timestep $h$, and is called $\pi_{b,h}(a \mid s)$. The goal is to estimate the value function $V^{\pi_e}_1$ of a possibly time-dependent evaluation policy $\pi_e$ \emph{that does not use confounders} \cite{bruns2021model}. This is motivated by the fact that confounders can be harder to observe and account for during deployment.

\subsection{Assumptions on Sensitivity and Memory}

We consider two kinds of assumptions on unobserved confounders. The first is whether they "have memory." We define memoryless confounders below to be sampled afresh at each step \cite{brunssmith2023robust}. A memoryless confounder in a healthcare application could be an accident encountered mid-treatment, or in an economics application could be a supply shock affecting the price of oil, as \cite{bruns2021model} highlights. 

\begin{assumption}[Memoryless Confounders]
	\label{assump:hist_indep_u}
	At each timestep $h$, we draw a fresh confounder $u_h \sim P_h(u \mid s=s_h)$, possibly dependent on the current state $s_h$, but independent of past confounders, states and actions.
\end{assumption}

On the other hand, confounders with memory could depend on all past $(s,a,u)$ tuples. We introduce an important sub-case of this, which we call the {\em global confounder} assumption. This is an extreme case of confounders with memory, where the confounder is not just dependent on, but the \emph{same} as all past confounders in the trajectory. In the example of healthcare applications, this could be an unrecorded patient demographic characteristic or genetic condition that does not change over the course of treatment. 

\begin{assumption}[Global Confounders]
	\label{assump:global_u}
	A global confounder is generated by $u \sim P(u)$ at the beginning of an episode, and remains unchanged throughout the episode.
\end{assumption}

A commonly-used assumption for the effect of confounder on $\pi_b$ is a sensitivity model found in \cite{bruns2021model, kallus2020confounding, namkoong2020off}. Note that $\Gamma = 1$ below corresponds to the case where $\pi_b$ is \emph{confounder-oblivious}, that is, independent of the confounder.

\begin{assumption}[Confounding Sensitivity Model]
	\label{assump:sensitivity}
	Given $\Gamma\geq 1$, for all $s\in\Sset, u\in\U$, $h \in \{1, 2, \cdots H\}$ and $a\in\A$:
	\begin{align*}
	\frac{1}{\Gamma} \leq \left(\frac{\pi_{b,h}(a\mid s,u)}{1-\pi_{b,h}(a\mid s,u)}\right) / \left(\frac{\pi_{b,h}(a\mid s)}{1-\pi_{b,h}(a\mid s)}\right) \leq \Gamma,
	\end{align*}
	where $\pi_{b,h}(a\mid s) = \sum_u P_h(u \mid s)\pi_{b,h}(a\mid s,u)$ is the marginalized (observed) behavior policy.
	The above inequality implies the bounds
	$\alpha_h(s,a) \leq \frac{\pi_{b,h}(a\mid s)}{\pi_{b,h}(a\mid s,u)} \leq \beta_h(s,a)$,
	where 
	$\alpha_h(s,a) := \pi_{b,h}(a\mid s) + \frac{1}{\Gamma}(1-\pi_{b,h}(a\mid s))$ and $\beta_h(s,a) := \Gamma + \pi_{b,h}(a\mid s)(1-\Gamma)$.
\end{assumption}

\section{OPE under Memoryless Confounders}\label{sec:history-ind-confounders} 
We discuss OPE when confounders are memoryless. We first open with a result showing that in the absence of a sensitivity assumption like Assumption~\ref{assump:sensitivity}, we can incur an estimation error as bad as $\Omega(H)$. Note that the value functions lie in the range $[0,H]$, so the worst possible OPE error is $H$.

\begin{restatable}[Lower Bound for Memoryless Confounders]{thm}{iidLowerBound}\label{thm:iid-lower-bound}
    There exists a parameter $\varepsilon$ that determines a pair of confounded MDPs $\M_1$ and $\M_2$ with i.i.d. (and thus memoryless) confounders along with stationary policies $\pi_{b_1}$, $\pi_{b_2}$ and $\pi_e$, so that data collected from $\M_i$ using $\pi_{b_i}$ has the same distribution for $i=1,2$, but the values under $\pi_e$ differ by $|V^{\pi_e}_1(\M_1) - V^{\pi_e}_1(\M_2)| = 2\varepsilon H$. In particular, when $\varepsilon = \frac{1}{2} - \frac{1}{H^2}$, the values under $\pi_e$ differ by $\Omega(H)$.
\end{restatable}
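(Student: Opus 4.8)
The plan is to prove this as a non-identifiability (two-point) result: for each $\varepsilon \in [0,\tfrac12)$ I will construct two confounded MDPs $\M_1,\M_2$ with i.i.d.\ (hence memoryless, cf.\ Assumption~\ref{assump:hist_indep_u}) confounders, together with stationary behavior policies $\pi_{b_1},\pi_{b_2}$ and one confounder-oblivious $\pi_e$, such that the logged trajectories $(s_1,a_1,r_1,\dots,s_H,a_H,r_H)$ have \emph{identical} law under $(\M_1,\pi_{b_1})$ and $(\M_2,\pi_{b_2})$, while $|V^{\pi_e}_1(\M_1)-V^{\pi_e}_1(\M_2)| = 2\varepsilon H$. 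The final $\Omega(H)$ claim is then just the substitution $\varepsilon = \tfrac12-\tfrac1{H^2}$, giving $2\varepsilon H = H-\tfrac2H$, which is within a constant of the maximal possible value-range $[0,H]$; no estimator using only the data can beat $\varepsilon H$ error on one of the two instances.

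The mechanism I would exploit is the discrepancy between two ways the confounder enters the dynamics. In the logged data the action is drawn from $\pi_{b,h}(\cdot\mid s_h,u_h)$ and is therefore informative about $u_h$, so the observed one-step kernel is the \emph{posterior}-weighted $P^{\mathrm{obs}}(s'\mid s,a)=\sum_u P(u\mid s,a)\,\prob(s'\mid s,u,a)$ with $P(u\mid s,a)\propto P(u\mid s)\,\pi_b(a\mid s,u)$; under $\pi_e$, which ignores $u$, the relevant kernel is instead the \emph{prior}-weighted $P^{\pi_e}(s'\mid s,a)=\sum_u P(u\mid s)\,\prob(s'\mid s,u,a)$. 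I would build a small stationary gadget over $\U=\{0,1\}$, two actions, and a constant-size observed-state space, in which $\pi_{b_1}$ and $\pi_{b_2}$ couple the action to $u$ in \emph{opposite} directions, and the true transitions $\prob_i(s'\mid s,u,a)$ are chosen so that (i) the marginalized policies agree, $\pi_{b_1}(a\mid s)=\pi_{b_2}(a\mid s)$, and (ii) the posterior-weighted kernels agree, $P^{\mathrm{obs}}_1=P^{\mathrm{obs}}_2$, yet (iii) the prior-weighted $\pi_e$-kernels differ, separating the per-step probability of landing in the rewarding state by exactly $2\varepsilon$. The freedom enabling this lies in the transition probabilities on the $(u,a)$ branches that each behavior policy under-represents: those are (nearly) unconstrained by the data, so they can be set to diverge between $\M_1$ and $\M_2$ without perturbing the observed kernel.

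For the verification I would argue data-law equality by induction on $h$. Memorylessness guarantees that the posterior of $u_h$ depends only on $(s_h,a_h)$, so the observed process is a Markov chain on $\Sset$ with a well-defined one-step kernel; with matched initial marginal $d_0$, reward function $r$, observed policies, and observed kernels, the two logged laws coincide step by step. For the value gap I make the gadget recur, so the $2\varepsilon$ per-step discrepancy in the probability of collecting unit reward accumulates linearly, and summing over the horizon yields $2\varepsilon H$; plugging in $\varepsilon=\tfrac12-\tfrac1{H^2}$ gives the $\Omega(H)$ bound.

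The step I expect to be the main obstacle is engineering the gadget so that (i)--(iii) hold \emph{simultaneously} with the \emph{tight} constant $2\varepsilon$ rather than a weaker $\Theta(\varepsilon)$. Matching the observed kernel is a linear constraint, and if the confounder entered the transitions only through $u$ (not through the pair $(u,a)$), summing the per-action matching constraints would force the $\pi_e$-kernels to agree as well; the transitions must therefore genuinely depend on the action, and the hidden-branch probabilities must be driven to their extremes, which in turn dictates how little exposure the behavior policy may give those branches. Ensuring the accumulation over $H$ is exactly linear (states and rewards arranged so per-step gaps add rather than cancel), and, for the sensitivity-constrained version, checking that the same family certifies $\Gamma = 1+O(\varepsilon)$ in Assumption~\ref{assump:sensitivity}, are the remaining points that will require care.
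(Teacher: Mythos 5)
Your proposal is correct and takes essentially the same approach as the paper: the paper's construction (two states, two actions, confounder priors $\tfrac12\pm\varepsilon$ flipped between $\M_1$ and $\M_2$, behavior policies coupled to $u$ in opposite directions, action-$1$ transitions $z$ vs.\ $1-z$ with $z=0$) is exactly an instantiation of your constraints (i)--(iii), yielding matched observed kernels, a per-step gap of $2\varepsilon$ in the prior-weighted kernels under $\pi_e$, and hence a value gap of $2\varepsilon H$. Your flagged obstacles are also resolved the way you anticipate: the paper's transitions depend jointly on $(u,a)$, the hidden branches are set to the extreme $z=0$, and one checks $\Gamma=1+O(\varepsilon)$ for small $\varepsilon$ and $\Gamma=\Theta(H^2)$ at $\varepsilon=\tfrac12-\tfrac1{H^2}$.
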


It can be seen from the proof of the theorem in Appendix~\ref{sec:iid-lower-bounds} that when $\varepsilon = \frac{1}{2} - \frac{1}{H^2}$, $\Gamma = \Omega(H^2)$. It is then clear that a bound on the sensitivity is necessary. The proof shows that for small $\varepsilon$ in our example, $\Gamma = 1 + O(\varepsilon)$. In this light, even with a sensitivity constraint of $1 + O(\varepsilon)$, we cannot get a consistent estimate of the value of a policy. This is because by Theorem~\ref{thm:iid-lower-bound}, even two observationally indistinguishable confounded MDPs can differ in value under a new $\pi_e$ by $\Omega(\varepsilon H)$. 

Thus, even with infinite data, we can only hope for \textit{bounds} on the value, and the minimum-possible error deteriorates with horizon $H$. We now analyze and present algorithms for obtaining such bounds. 

\subsection{FQE and Confounded FQE}

Fitted Q-Evaluation (FQE), which we recall in Appendix~\ref{sec:FQE_cFQE}, is a standard workhorse for OPE. We first present a new result on the estimation error of FQE under memoryless confounding, proved in Appendix~\ref{sec:proofs_FQE_cFQE}.

\begin{restatable}[FQE Error]{thm}{errFQE}
    \label{thm:err_FQE}
	Suppose $\Gamma = 1 + \varepsilon$ in Assumption~\ref{assump:sensitivity}. Then in the limit of infinite samples, the point estimate $\hat{f}_1(s,a)$ of the Q-function produced by FQE has a worst-case error of
	$|V_1^{\pi_e}(s) - \sum_a\pi_{e,1}(a\mid s)\hat{f}_1(s,a)| = O({\varepsilon} H^2)$ for small $\varepsilon$.

\end{restatable}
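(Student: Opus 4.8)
The plan is to compare the infinite-sample fixed point of FQE against the true Q-function $Q_h^{\pi_e}$ of the evaluation policy, propagating the per-step discrepancy by backward induction on the horizon. In the limit of infinite samples, FQE returns the solution $\hat f_h$ of the Bellman recursion driven by the \emph{observed} next-state distribution, i.e. the distribution of $s'$ in the data after conditioning on the current state $s$ and the observed action $a$. The first step is to write this observed kernel explicitly. Because the data generates $(s,u)$ and then $a\sim\pi_{b,h}(\cdot\mid s,u)$, the posterior over the confounder given the observed action is $P_h(u\mid s)\,\pi_{b,h}(a\mid s,u)/\pi_{b,h}(a\mid s)$, so FQE effectively fits
\[
\tilde{\prob}_h(s'\mid s,a)=\sum_u \frac{\pi_{b,h}(a\mid s,u)}{\pi_{b,h}(a\mid s)}\,P_h(u\mid s)\,\prob_h(s'\mid s,u,a).
\]
Since $\pi_e$ ignores the confounder, however, evaluating $\pi_e$ requires the kernel $\prob_h^{\pi_e}(s'\mid s,a)=\sum_u P_h(u\mid s)\,\prob_h(s'\mid s,u,a)$, in which $u$ is drawn afresh with \emph{no} reweighting. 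The two kernels differ only through the importance weight $\pi_{b,h}(a\mid s,u)/\pi_{b,h}(a\mid s)$.

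The second step is to control this weight via the sensitivity model. Setting $\Gamma=1+\varepsilon$ in Assumption~\ref{assump:sensitivity} gives $\alpha_h(s,a)=(1+\varepsilon\,\pi_{b,h}(a\mid s))/(1+\varepsilon)$ and $\beta_h(s,a)=1+\varepsilon(1-\pi_{b,h}(a\mid s))$; since $\pi_{b,h}(a\mid s,u)/\pi_{b,h}(a\mid s)$ is the reciprocal of the quantity bounded between $\alpha_h$ and $\beta_h$, it lies in $[1-O(\varepsilon),1+O(\varepsilon)]$, so that $|1-\pi_{b,h}(a\mid s,u)/\pi_{b,h}(a\mid s)|=O(\varepsilon)$ uniformly in $(s,u,a,h)$ for small $\varepsilon$.

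With these pieces in hand I would run the backward induction. Write $\hat V_{h+1}(s')=\sum_{a'}\pi_{e,h+1}(a'\mid s')\hat f_{h+1}(s',a')$ and likewise $V_{h+1}^{\pi_e}$ from $Q_{h+1}^{\pi_e}$, set $E_h=\max_{s,a}|Q_h^{\pi_e}(s,a)-\hat f_h(s,a)|$, and note $E_{H+1}=0$. Subtracting the two Bellman recursions and inserting $\pm\sum_{s'}\tilde{\prob}_h(s'\mid s,a)V_{h+1}^{\pi_e}(s')$ splits the error into a transition-mismatch term $\sum_{s'}[\prob_h^{\pi_e}-\tilde{\prob}_h](s'\mid s,a)\,V_{h+1}^{\pi_e}(s')$ and a propagation term $\sum_{s'}\tilde{\prob}_h(s'\mid s,a)\,[V_{h+1}^{\pi_e}-\hat V_{h+1}](s')$. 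The latter is at most $E_{h+1}$ because $\tilde{\prob}_h$ is a probability distribution and $\pi_{e,h+1}$ averages the per-action errors. For the former, factoring out $P_h(u\mid s)\,|1-\pi_{b,h}(a\mid s,u)/\pi_{b,h}(a\mid s)|$ and using $\|V_{h+1}^{\pi_e}\|_\infty\le H$ gives a bound of $O(\varepsilon H)$. Hence $E_h\le O(\varepsilon H)+E_{h+1}$, and unrolling over the $H$ steps yields $E_1=O(\varepsilon H^2)$; the claim follows from $|V_1^{\pi_e}(s)-\sum_a\pi_{e,1}(a\mid s)\hat f_1(s,a)|\le E_1$.

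I expect the crux to be the first step: correctly identifying the confounded kernel $\tilde{\prob}_h$ that FQE fits and contrasting it with the confounder-free kernel $\prob_h^{\pi_e}$ needed for $\pi_e$. All the bias lives in the reweighting of the confounder posterior induced by $\pi_b$'s dependence on $u$, and the bookkeeping --- in particular that the data reweights $u$ while $\pi_e$ draws it fresh --- is where care is most needed. The sensitivity estimate and the telescoping are then routine.
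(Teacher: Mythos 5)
Your proof is correct, and it shares the paper's skeleton: characterize the infinite-sample FQE iterate as a Bellman backup under the posterior-reweighted kernel $\tilde{\prob}_h(s'\mid s,a)=\sum_u P_h(u\mid s)\,\frac{\pi_{b,h}(a\mid s,u)}{\pi_{b,h}(a\mid s)}\,\prob_h(s'\mid s,u,a)$, then run a backward induction controlled by the sensitivity model. The difference lies in the per-step bookkeeping, and it is a real one. The paper bounds the reweighting factor multiplicatively, sandwiching $\hat f_h(s,a)$ between $r(s,a)+\frac{1}{\beta_{\max}}\sum_{s'}\prob_h(s'\mid s,a)\hat V_{h+1}(s')$ and the analogous $\frac{1}{\alpha_{\min}}$ expression; the resulting recursion solves to $H-h-\sum_{i=1}^{H-h}\beta_{\max}^{-i}$, and extracting $O(\varepsilon H^2)$ from these geometric sums requires expanding $(1+\varepsilon)^H$, which is precisely where the theorem's ``for small $\varepsilon$'' proviso enters. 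You instead add and subtract $\sum_{s'}\tilde{\prob}_h(s'\mid s,a)V^{\pi_e}_{h+1}(s')$, splitting the error into a kernel-mismatch term and a propagation term. Since $\bigl|1-\pi_{b,h}(a\mid s,u)/\pi_{b,h}(a\mid s)\bigr|\le\varepsilon$ holds exactly for every $\varepsilon\ge 0$ (not only asymptotically, so your own ``for small $\varepsilon$'' hedge is actually unnecessary), the mismatch term is at most $\varepsilon\,\|V^{\pi_e}_{h+1}\|_\infty\le\varepsilon H$, the recursion $E_h\le\varepsilon H+E_{h+1}$ is purely additive, and you obtain $O(\varepsilon H^2)$ for any range of $\varepsilon$ --- the same uniformity the paper only claims for CFQE (Theorem~\ref{thm:err_CFQE}) and the model-based method (Theorem~\ref{thm:model_base}). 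What the paper's multiplicative route buys in exchange is the exact closed form $\frac{1+\varepsilon H-(1+\varepsilon)^H}{\varepsilon}$ for the one-sided error, which it reuses in the remark following the proof to calibrate how much must be subtracted from the FQE point estimate to guarantee a valid lower bound; your cleaner bound would serve the same purpose with the explicit constant $\varepsilon H^2/2$ read off directly from the telescoped sum.
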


Note that FQE gives a point estimate instead of a lower bound on the value function. For many safety-critical applications, it is important to have conservative lower bounds for policy estimation. Using the proof of Theorem~\ref{thm:err_FQE}, we can produce a straightforward lower bound of $\sum_a\pi_{e,1}(a\mid s)\hat{f}_1(s,a) - k\varepsilon H^2$ on the value function, for some $k$ depending on $\varepsilon$. However, this is a worst-case, data-oblivious lower bound. We note that we can get a sharper lower bound using confounded FQE (CFQE), introduced by \cite{bruns2021model} for i.i.d. confounders. Confounded FQE gives a lower bound on the value by sequentially searching for the \emph{worst possible policies} that are consistent with the data and the sensitivity assumption. We adapt it to general memoryless confounders and describe it in Appendix~\ref{sec:FQE_cFQE}. We also provide a new theoretical guarantee for the worst-case error of CFQE below, proved in Appendix~\ref{sec:proofs_FQE_cFQE}.

\begin{restatable}[CFQE Error]{thm}{errCFQE}
\label{thm:err_CFQE}
Suppose $\Gamma = 1+\varepsilon$ in Assumption~\ref{assump:sensitivity}. Then the worst-case error for the lower bound $\hat{f}_1(s,a)$ generated by CFQE in the infinite-sample case is $|V_1^{\pi_e}(s) - \sum_a\pi_{e,1}(a\mid s)\hat{f}_1(s,a)| = O(\varepsilon H^2)$ for any range of $\varepsilon$.
\end{restatable}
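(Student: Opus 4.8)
The plan is to treat CFQE as a sequence of pessimistic Bellman backups and to sandwich its estimate against the true value from both sides. Write $V_h^{\pi_e}(s)$ for the true value-to-go of $\pi_e$, $\hat V_h(s) = \sum_a \pi_{e,h}(a\mid s)\hat f_h(s,a)$ for the CFQE estimate, and $e_h = \sup_s |V_h^{\pi_e}(s) - \hat V_h(s)|$. The key observation is that, in the infinite-sample limit, the only discrepancy between the observed one-step backup and the causal backup needed to evaluate $\pi_e$ is a reweighting of the next-state value by the confounding ratio $w_h(s,a,u) = \pi_{b,h}(a\mid s)/\pi_{b,h}(a\mid s,u)$, which Assumption~\ref{assump:sensitivity} pins to the interval $[\alpha_h(s,a),\beta_h(s,a)]$ with mean one under the action-posterior over $u$. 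CFQE replaces this unknown reweighting with the minimizing feasible reweighting, so each of its backups lies below the causal backup evaluated at the same future values.

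First I would establish the lower-bound property $\hat V_1(s) \le V_1^{\pi_e}(s)$ by backward induction on $h$. Assuming $\hat V_{h+1} \le V_{h+1}^{\pi_e}$ pointwise, the true ratio $w_h$ is feasible for CFQE's minimization by Assumption~\ref{assump:sensitivity}, so the minimizing reweighting of $\hat V_{h+1}$ is at most the $w_h$-reweighting of $V_{h+1}^{\pi_e}$, which is exactly the causal backup; adding $r(s,a)$ and averaging over $\pi_{e,h}$ gives $\hat V_h \le V_h^{\pi_e}$. This reuses the feasibility argument already underlying Theorem~\ref{thm:err_FQE}.

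Next I would bound the gap $V_h^{\pi_e} - \hat V_h$. For each $(s,a)$ I would split the per-step discrepancy into (i) the propagated future error, bounded by $e_{h+1}$ since the true reweighting is a genuine expectation, and (ii) a reweighting gap equal to the difference between applying the true feasible reweighting and the worst-case feasible reweighting to the same $\hat V_{h+1}$. Because both reweightings have mean one and lie in $[\alpha_h,\beta_h]$, a linear-programming bound gives reweighting gap $\le (\beta_h(s,a)-\alpha_h(s,a))\cdot \mathrm{range}(\hat V_{h+1}) \le (\beta_h-\alpha_h)H$. The crucial algebraic step is the identity $\beta_h(s,a)-\alpha_h(s,a) = (1-\pi_{b,h}(a\mid s))(\Gamma - \tfrac{1}{\Gamma})$, combined with $\Gamma - \tfrac{1}{\Gamma} \le 2\varepsilon$ for $\Gamma = 1+\varepsilon$ and every $\varepsilon \ge 0$ (since $\tfrac{1}{1+\varepsilon} \ge 1-\varepsilon$). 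This makes the per-step gap $O(\varepsilon H)$ with no linearization, which is precisely why the bound holds for any range of $\varepsilon$, in contrast to the small-$\varepsilon$ bound of Theorem~\ref{thm:err_FQE}. Unrolling the recursion $e_h \le e_{h+1} + O(\varepsilon H)$ from $e_{H+1}=0$ yields $e_1 = O(\varepsilon H^2)$, and combined with the lower-bound property this controls the absolute error.

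The main obstacle I anticipate is handling the reweighting rigorously given that $u$ is unobserved: CFQE must act on observable next-state values rather than on $u$ directly, so I would first show that reweighting the observed next-state value distribution within $[\alpha_h,\beta_h]$ (renormalized to a valid expectation) defines a feasible set that still contains the true confounding reweighting, keeping the lower-bound property intact while keeping the worst-case width controlled by $\beta_h - \alpha_h$. The remaining care is in the linear-programming bound for the reweighting gap and in verifying that the constants stay clean uniformly in $\varepsilon$ rather than only in the small-$\varepsilon$ regime.
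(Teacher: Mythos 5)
Your proposal is correct, and it shares the paper's overall skeleton: establish the lower-bound property from the fact that the true confounding reweighting is feasible for CFQE's minimization, then run a backward per-step error recursion and unroll it to $O(\varepsilon H^2)$. The per-step bound, however, is obtained by a genuinely different argument. The paper's proof never uses the normalization constraint $\sum_{s'}\pi_{b,h}(a\mid s)g_h(s,a,s')\prob^{\pi_b}_h(s'\mid s,a)=1$ of $\tilde{\B}_{sa,h}$ when bounding the error: it combines the pointwise constraint $\pi_{b,h}(a\mid s)g_h(s,a,s')\geq\alpha_h(s,a)$, the sensitivity bound $\pi_{b,h}(a\mid s,u)/\pi_{b,h}(a\mid s)\geq 1/\beta_h(s,a)$, and nonnegativity of the iterates to get the multiplicative recursion $\err_h \leq \left(1-\tfrac{\alpha_{\text{min}}}{\beta_{\text{max}}}\right)(H-h) + \tfrac{\alpha_{\text{min}}}{\beta_{\text{max}}}\err_{h+1}$, and then bounds the geometric sum $H-\sum_{i=0}^{H-1}(1+\varepsilon)^{-2i}\leq 2\varepsilon H^2$ via Bernoulli's inequality, which is what makes its bound valid for all $\varepsilon$. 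You instead exploit precisely the mean-one normalization to view the true and the minimizing reweightings as two probability measures with densities in $[\alpha_h,\beta_h]$ relative to $\prob^{\pi_b}_h$, split the error into propagated error plus a reweighting gap, and bound the gap by a total-variation/range argument of size $(\beta_h-\alpha_h)\,\mathrm{range}(\hat V_{h+1})$, with the exact identity $\beta_h(s,a)-\alpha_h(s,a)=(1-\pi_{b,h}(a\mid s))\left(\Gamma-\Gamma^{-1}\right)\leq 2\varepsilon$ replacing the geometric-sum estimate; this yields the additive recursion $e_h\leq e_{h+1}+2\varepsilon H$ directly. Both routes are sound and give the same rate. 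Yours isolates more transparently why the error is uniformly $O(\varepsilon H^2)$ over all $\varepsilon$ (a one-line identity rather than a series bound), and the TV argument even gains a factor $\tfrac{1}{2}$ per step; the paper's form keeps the analysis structurally parallel to its FQE proof (Theorem~\ref{thm:err_FQE}), and its conclusion is what the model-based guarantee (Theorem~\ref{thm:model_base}) later invokes by comparison. Two details to pin down when writing yours up: the bound $\mathrm{range}(\hat V_{h+1})\leq H$ needs $0\leq \hat f_{h+1}\leq H-h$, which holds because every feasible reweighting is a nonnegative measure summing to one, so CFQE iterates stay bounded; and the monotonicity step in your lower-bound induction (replacing $\hat V_{h+1}$ by $V^{\pi_e}_{h+1}$ under the true reweighting) needs positivity of that reweighting, which Assumption~\ref{assump:sensitivity} supplies since $\alpha_h(s,a)>0$.
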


Although it has the same \emph{worst-case} error as FQE, we note that CFQE provides an \emph{instance-dependent} lower bound that is sharper than the naive one mentioned above. We confirm in experiments that the naive FQE lower bound and the CFQE lower bound are in fact at different orders of magnitude.
\subsection{Model-Based Method For Stationary Transition Kernels}

While CFQE searches for the worst-possible \emph{policies}, we discuss a method here that searches for the worst possible \emph{transition dynamics} that are consistent with the data. Note that since $\pi_e$ is confounder-oblivious, the induced transitions $\prob^{\pi_e}_h(s' \mid s)$ are determined by the marginalized transition dynamics defined as $\prob_h(s' \mid s, a) := \sum_u P_h(u \mid s) \prob_h(s' \mid s, a, u)$. This is clear from the following computation: $
    \prob^{\pi_e}_h(s' \mid s) = \sum_{u,a} \pi_{e,h}(a \mid s) P_h(u \mid s) \prob_h(s' \mid s, a, u)
    = \sum_a \pi_{e,h}(a \mid s) \left(\sum_u P_h(u \mid s) \prob_h(s' \mid s, a, u)\right) = \sum_a \pi_{e,h}(a \mid s) \prob_h(s' \mid s,a)$.

We note that CFQE optimizes separately over the data at each timestep $h$. In particular, if the marginalized transition kernel were stationary, then the method would not leverage its stationarity. Our model-based method can leverage this, and we therefore assume the stationarity of transition dynamics and of $P(u \mid s)$ in this section. For ease of exposition, we also assume that $\pi_b$ and $\pi_e$ are stationary. The method can be modified to work for potentially time-dependent $\pi_b$ and $\pi_e$, which we do in Appendix~\ref{sec:mbproofs}.

We now describe the method. Let the empirically observed transitions be $\hat{\prob}^{\pi_b}(s' \mid s,a)$, and denote its value in the limit of infinite data by $\prob^{\pi_b}(s' \mid s,a)$. We know that the latter is stationary under our expository simplification. Let $\hat{\alpha}(s,a)$ and $\hat{\beta}(s,a)$ be obtained using the estimate $\hat{\pi}_b(s,a)$ Denote by $\mathcal{G}$ the set of marginalized transitions $\prob(s' | s,a)$ that fall between $\hat{\alpha}(s,a)(\hat{\prob}^{\pi_b}(s' | s,a))$ and $\hat{\beta(s,a)}(\hat{\prob}^{\pi_b}(s' | s,a))$ for each $s',a,s$. Our model-based method amounts to solving the following optimization problem:
\begin{gather} \label{opt}
	\min_{V_1(s_0), V_2, \ldots, V_H, V_{H+1} = 0,\prob} V_1(s_0) \\ 
	\;\; \text{s.t. } \prob\in\mathcal{G}, \;\;
	\sum_{s'}\prob(s'\mid s,a) = 1 \;\; \forall s,a.\notag \\
	V_h(s) = \pi_e(\cdot\mid s)^T(R_s + \prob_s V_{h+1}(\cdot)) \;\; \forall h \in \{1,...,H\}, s \notag
\end{gather}
where $V_{H+1} = 0$ and $\prob_s\in\mathbb{R}^{A\times S}$ is the matrix whose rows are $\prob(\cdot\mid s,a)$ for each $a$, $R_s\in\mathbb{R}^A$ and $V_{h+1}(\cdot)\in\mathbb{R}^S$. This corresponds to minimizing the value function $V_1(s_0)$ over the set $\mathcal{G}$ of state transition probabilities, using $H\cdot S$ Bellman backup constraints to encode the Bellman equation. 

While this method is similar to the model-based method in \cite{bruns2021model} inspired by robust MDP literature, it is important to note that unlike \cite{bruns2021model}, we look at uncertainty sets for each $s,a$ (instead of just one for each $s$) and make no additional assumption on model-sensitivity. In particular, model sensitivity and the uncertainty sets for the true marginalized transition kernel are completely determined by $\Gamma$. This method possesses several theoretical guarantees, proved in Appendix \ref{sec:mbproofs}.
\begin{restatable}[Error for the Model-Based Method]{thm}{errModelBased}
	\label{thm:model_base}
	Suppose $\Gamma = 1+\varepsilon$ in Assumption~\ref{assump:sensitivity}. Then the value estimation from solving~\eqref{opt} with infinite data, denoted by $\tilde{V}_1$, provides a lower bound no looser than CFQE and satisfies that $|V_1^{\pi_e}(s_0) - \tilde{V}_1(s_0)| = O(\varepsilon H^2)$ for any range of $\varepsilon$.
\end{restatable}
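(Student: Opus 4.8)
The plan is to establish the two assertions in turn --- that $\tilde V_1(s_0)$ is a valid lower bound no looser than CFQE, and that its error is $O(\varepsilon H^2)$ --- and to reduce the error bound to Theorem~\ref{thm:err_CFQE} by a sandwich argument. Everything rests on a single feasibility lemma: in the infinite-data limit, the true marginalized kernel $\prob(s'\mid s,a)=\sum_u P(u\mid s)\prob(s'\mid s,a,u)$ lies in the uncertainty set $\mathcal{G}$.

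First I would prove the feasibility lemma. Writing the observed transition as a reweighting $\prob^{\pi_b}(s'\mid s,a)=\sum_u P(u\mid s)\,w_u\,\prob(s'\mid s,a,u)$ with weights $w_u=\pi_b(a\mid s,u)/\pi_b(a\mid s)$, these weights average to $1$ under $P(\cdot\mid s)$ and satisfy $1/\beta(s,a)\le w_u\le 1/\alpha(s,a)$ by Assumption~\ref{assump:sensitivity}. Then $\beta(s,a)\prob^{\pi_b}(s'\mid s,a)-\prob(s'\mid s,a)=\sum_u P(u\mid s)(\beta(s,a)w_u-1)\prob(s'\mid s,a,u)\ge 0$ termwise because $\beta w_u\ge 1$, and symmetrically $\prob(s'\mid s,a)-\alpha(s,a)\prob^{\pi_b}(s'\mid s,a)\ge 0$ because $\alpha w_u\le 1$. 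Hence $\alpha(s,a)\prob^{\pi_b}(s'\mid s,a)\le\prob(s'\mid s,a)\le\beta(s,a)\prob^{\pi_b}(s'\mid s,a)$ for all $s,a,s'$, so the true kernel is in $\mathcal{G}$; in the infinite-data limit $\hat\prob^{\pi_b},\hat\alpha,\hat\beta$ converge to their population values, so the empirical $\mathcal{G}$ converges to this population set.

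Given feasibility, the lower-bound claim is immediate: for fixed $\prob\in\mathcal{G}$ the Bellman constraints in~\eqref{opt} determine $V_h$ uniquely by backward recursion from $V_{H+1}=0$, so~\eqref{opt} is exactly $\min_{\prob\in\mathcal{G}}V_1^{\prob,\pi_e}(s_0)$; evaluating at the true kernel gives $\tilde V_1(s_0)\le V_1^{\pi_e}(s_0)$. For ``no looser than CFQE'', I would make precise the remark that CFQE optimizes separately at each timestep. Using the description in Appendix~\ref{sec:FQE_cFQE}, its backward Bellman backups amount to choosing, at each step $h$ and each $(s,a)$, a worst-case reweighting of the observed transition landing in the same per-$(s,a)$ interval $[\alpha\prob^{\pi_b},\beta\prob^{\pi_b}]$, but with the choices uncoupled across $h$. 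Since the value is monotone in the future value function, backward induction greedily realizes $\min$ over all time-varying kernels $\{\prob_h\}_h$ with each $\prob_h\in\mathcal{G}$. The model-based program restricts to the stationary diagonal $\prob_h\equiv\prob$, a subset of this feasible set, so minimizing over it yields a larger value: $\tilde V_1(s_0)\ge\hat V_{\mathrm{CFQE}}(s_0)$.

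Finally, the error bound follows by sandwiching: combining $\hat V_{\mathrm{CFQE}}(s_0)\le\tilde V_1(s_0)\le V_1^{\pi_e}(s_0)$ with Theorem~\ref{thm:err_CFQE}, which gives $V_1^{\pi_e}(s_0)-\hat V_{\mathrm{CFQE}}(s_0)=O(\varepsilon H^2)$, yields $0\le V_1^{\pi_e}(s_0)-\tilde V_1(s_0)\le O(\varepsilon H^2)$, valid for any range of $\varepsilon$ since Theorem~\ref{thm:err_CFQE} is. I expect the main obstacle to be the ``no looser than CFQE'' step: precisely matching CFQE's policy-level worst-case search to a transition-level minimization over $\mathcal{G}$, and verifying that its per-timestep greedy backups genuinely compute the minimum over time-varying kernels, so that stationarity is exactly what makes the model-based feasible set a subset. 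The feasibility lemma and the sandwich are then routine.
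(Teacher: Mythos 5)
Your proposal is correct and follows essentially the same route as the paper's proof: establish that the true marginalized kernel is feasible (lies in $\mathcal{G}$) so that $\tilde{V}_1(s_0) \leq V_1^{\pi_e}(s_0)$, compare with CFQE via the correspondence $g_h(s,a,s') \leftrightarrow \frac{\prob(s'\mid s,a)}{\prob^{\pi_b}(s'\mid s,a)\pi_b(a\mid s)}$, and sandwich using Theorem~\ref{thm:err_CFQE}. Your packaging of the CFQE step as ``CFQE equals the min over time-varying kernels in $\prod_h \mathcal{G}_h$, and the stationary diagonal is a subset'' is just a restatement of the paper's backward induction, which injects the model-based optimizer $\tilde{\prob}$ as a feasible $g_h$ at every step (the paper itself records this exact bijection and equivalence in Appendix~\ref{sec:mb-varations}).
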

We will find in experiments that the lower bound produced by the model-based method is in fact tighter in some scenarios. In the finite-sample setting, we use point estimates $\hat{\prob}^{\pi_b}$ to construct $\mathcal{G}$. In another version for finite samples, one can account for estimation error of $\hat{\prob}^{\pi_b}$ by constructing a Hoeffding confidence interval for the state transition probabilities, and using it to construct $\mathcal{G}$ instead. We discuss this in Appendix \ref{sec:mbproofs}. Denoting the output of either version by $\hat{V}_1$, the theorem below guarantees that $\hat{V}_1$ is a consistent estimate for the infinite-sample lower bound $\tilde{V}_1$. We prove it in Appendix~\ref{sec:mbproofs}, and the Hausdorff-distance-based technique developed for the proof can be used to provide similar guarantees for FQE and CFQE.

\begin{restatable}[Consistent Estimation of the Lower Bound]{thm}{ModelBasedConsistent}\label{thm:model-based-consistent}
The estimated lower bound from the model-based method is strongly consistent for the lower bound $\tilde{V}_1$, where $\tilde{V}_1$ is the lower bound estimate of the value function from solving \eqref{opt} with infinite data. That is, $\hat{V_1} \overset{a.s.}{\to} \tilde{V}_1$.
\end{restatable}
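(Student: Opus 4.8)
The plan is to observe that~\eqref{opt} is, after eliminating the value variables, the minimization of a \emph{fixed} continuous objective over a \emph{data-dependent} feasible set, and then to show that this feasible set converges in Hausdorff distance to its infinite-data limit almost surely, which forces the optimal values to converge. First I would eliminate the $V_h$'s: for any fixed marginalized kernel $\prob$, the Bellman constraints together with $V_{H+1}=0$ pin down $V_1(s_0),\ldots,V_H$ uniquely by backward recursion, so the objective is $V_1(s_0)=g(\prob)$ where $g$ is a composition of the multilinear backup maps $V_h(s)=\pi_e(\cdot\mid s)^T(R_s+\prob_s V_{h+1})$. Since the horizon is finite and all values are bounded in $[0,H]$, $g$ is continuous—indeed Lipschitz—on the compact set of all kernels. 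Thus~\eqref{opt} is exactly $\min_{\prob\in\F}g(\prob)$, where $\F$ denotes $\mathcal{G}\cap\{\sum_{s'}\prob(s'\mid s,a)=1\}$ built from the empirical $\hat{\prob}^{\pi_b},\hat{\alpha},\hat{\beta}$ in the finite-sample problem, and $\F^\star$ the analogous set built from the infinite-data $\prob^{\pi_b},\alpha,\beta$; the optimal values are $\hat{V}_1$ and $\tilde{V}_1$ respectively.

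Second, I would establish almost-sure convergence of the feasible sets. By the strong law of large numbers applied to empirical transition counts, $\hat{\prob}^{\pi_b}(s'\mid s,a)\overset{a.s.}{\to}\prob^{\pi_b}(s'\mid s,a)$ for every $(s,a)$ visited with positive probability under $\pi_b$ (a coverage condition I would state explicitly and restrict to), and likewise $\hat{\pi}_b\overset{a.s.}{\to}\pi_b$. Since $\alpha_h(s,a)$ and $\beta_h(s,a)$ are continuous functions of $\pi_{b,h}(a\mid s)$, the box bounds $\hat{\alpha}(s,a)\hat{\prob}^{\pi_b}$ and $\hat{\beta}(s,a)\hat{\prob}^{\pi_b}$ converge a.s.\ to $\alpha(s,a)\prob^{\pi_b}$ and $\beta(s,a)\prob^{\pi_b}$. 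Because $\F$ and $\F^\star$ are both intersections of a box with the probability simplex whose facet positions are exactly these bounds, I would argue that the Hausdorff distance $d_H(\F,\F^\star)$ is controlled by the perturbation of the bounds and hence converges to $0$ almost surely.

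Third, I would convert Hausdorff convergence of the feasible polytopes into convergence of the optimal values. Both sets are nonempty: since $\Gamma\geq1$ forces $\hat{\alpha}(s,a)\leq1\leq\hat{\beta}(s,a)$, the kernel $\hat{\prob}^{\pi_b}$ itself lies in the box and on the simplex, so it is feasible. Using that $g$ is uniformly continuous on the compact kernel space, a standard Berge-type argument gives both directions: approximating a minimizer of $g$ over $\F^\star$ by nearby points of $\F$ yields $\limsup\hat{V}_1\leq\tilde{V}_1$, and approximating a minimizer over $\F$ by nearby points of $\F^\star$ yields $\liminf\hat{V}_1\geq\tilde{V}_1$, whence $\hat{V}_1\overset{a.s.}{\to}\tilde{V}_1$. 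The same skeleton, with Hoeffding confidence bounds replacing point estimates in the construction of $\hat{\mathcal{G}}$, handles the second (confidence-interval) version of the method.

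I expect the main obstacle to be the Hausdorff-convergence step. Convergence of the scalar bounds is immediate, but turning it into genuine convergence of the polytopes $\F\to\F^\star$ requires a lower-semicontinuity argument: one must rule out the feasible set collapsing and must guarantee that a point of $\F^\star$ can actually be matched by a nearby feasible point of $\F$ and vice versa, despite the coupling between the moving box facets and the fixed simplex constraint. I would resolve this either via a metric-regularity/Hoffman-bound estimate for perturbed polyhedra, or by explicitly rescaling a candidate kernel to restore $\sum_{s'}\prob(s'\mid s,a)=1$ while keeping it inside the perturbed box; this is precisely where the care over nonemptiness and over the box–simplex interaction is needed, and it is the Hausdorff-distance technique the theorem statement alludes to.
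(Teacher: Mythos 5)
Your proposal follows essentially the same route as the paper's proof: the paper likewise treats \eqref{opt} as minimizing a fixed Lipschitz (polynomial) function of $\prob$ over a data-dependent feasible set, proves a quantitative bound on the Hausdorff distance between the empirical and infinite-data feasible regions (Lemma~\ref{lem:feasible-set-hausdorff-bound}), and converts this into convergence of the optimal values via a continuity-of-minima-in-Hausdorff-distance lemma (Lemma~\ref{lem:hausdorff-minima}), with almost-sure convergence of $\hat{\prob}^{\pi_b}$ and $\hat{\pi}_b$ supplying the limit. The box--simplex interaction you flag as the main obstacle is, in fact, dealt with in the paper only by noting that the simplex constraint is common to both sets and then bounding the distance to the box alone, so the extra care you propose (Hoffman-type bounds or explicit rescaling) would, if anything, make the paper's own argument more rigorous rather than diverge from it.
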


\paragraph{A Computationally Efficient Method.}

Although the non-convex optimization problem in \eqref{opt} is solvable with off-the-shelf solvers, such problems can be difficult to solve efficiently. We provide a method, Algorithm~\ref{algo:projOPE}, in Appendix~\ref{sec:mb-varations} for quicker computation of lower bounds. This method approximately solves the model-based optimization problem in \eqref{opt} via projected gradient descent, optimizing over $\prob$ while maintaining the Bellman constraints.

\paragraph{Non-Stationary Model-Based Method.}

To handle non-stationary settings, we provide Algorithm~\ref{algo:MBRelax} in Appendix~\ref{sec:mb-varations}. This relaxes the Bellman backup constraints in \eqref{opt} by sequentially solving $H$ efficiently solvable quadratic programs. This is essentially the model-based analogue to CFQE.

\section{OPE under Confounders with Memory}\label{sec:history-dependent-confounders}
Sensitivity constraints do not alone contribute to the error upper bounds in Section~\ref{sec:history-ind-confounders} -- the memorylessness of confounders is an important ingredient. We demonstrate below that OPE under confounders with memory is hard even for $\pi_b$ with the best-case sensitivity, $\Gamma = 1$. Recall that $\Gamma=1$ corresponds to confounder-oblivious behavior policies. Specifically, the theorem below shows FQE and any method that lower bounds FQE will have $\Omega(H)$ worst-case error for confounders with memory, even for unconfounded $\pi_b$ and $\pi_e$ with bounded concentrability and given infinite data. We prove it in Appendix~\ref{sec:hist-dep-lower-bound}.

\begin{restatable}[Lower Bound for Confounders with Memory]{thm}{HistDependentLowerBound}\label{thm:hist-dependent-lower-bound}
There exists an MDP $\M$ having confounders with memory, a stationary unconfounded behavior policy $\pi_b$ with sensitivity $\Gamma = 1$, a stationary evaluation policy $\pi_e$ with $\frac{\pi_e(a \mid s)}{\pi_b(a \mid s)} \leq 2\ \forall s,a,$ and a state $s_1$, so that $V_1^{\pi_e}(s_1) = \Omega(H)$ while the output of FQE for $\pi_e$ is $O(\log H)$, even with infinite data.
\end{restatable}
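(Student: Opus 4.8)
The plan is to exhibit an explicit \emph{global} confounded MDP (global confounders are the extreme case of confounders with memory, so a single such witness certifies hardness for the whole class), together with stationary $\pi_b$ and $\pi_e$, on which the infinite-data output of FQE collapses to $\Theta(\log H)$ while the true value is $\Theta(H)$. The guiding observation is that, in the infinite-data limit, FQE evaluates $\pi_e$ by propagating $V_1(s_1)$ through the Markov chain with one-step kernel $\bar{\prob}(s'\mid s)=\sum_a\pi_{e}(a\mid s)\,\prob^{\pi_b}(s'\mid s,a)$, where $\prob^{\pi_b}(s'\mid s,a)=\sum_u \prob^{\pi_b}(u\mid s)\,\prob(s'\mid s,a,u)$ is weighted by the \emph{behavior} posterior over the confounder. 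When $\pi_b$ and $\pi_e$ route the confounder to a given state through different histories, this posterior is wrong for $\pi_e$, so the FQE chain induces a state-visitation distribution differing from the true one; since rewards depend only on the state, the two values differ. I would engineer the example so that the FQE chain perceives a \emph{harmonic} survival probability $\prod_{j<k}\frac{j}{j+1}=\frac1k$ along a reward-bearing chain, whence $V^{\mathrm{FQE}}_1(s_1)=\sum_{k=1}^H \frac1k=\Theta(\log H)$, while the true $\pi_e$-trajectory never leaks and collects $\Theta(H)$.

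\textbf{The construction.} Take a binary global confounder $u\in\{G,B\}$ with $\prob(G)=\prob(B)=\tfrac12$, a reward-bearing chain $s_1,\dots,s_H$ with $r(s_k,\cdot)=1$, and a reservoir state $w$ with reward $0$, together with two actions. Action $1$ (the one $\pi_e$ always takes) sends $G$ forward, $s_k\to s_{k+1}$, but diverts $B$ off the chain, $s_k\to w$; I also let action $1$ hold $B$ in $w$, so that $w$ is absorbing with zero reward under $\pi_e$. Action $2$ keeps $B$ on the chain, $s_k\to s_{k+1}$, and, through a time-dependent kernel (permitted by the setup), re-injects $B$ from $w$ back onto the chain. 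I set $\pi_e$ deterministic (action $1$ everywhere) and $\pi_b(\cdot\mid s)=(\tfrac12,\tfrac12)$ at each $s_k$ and at $w$; this gives $\Gamma=1$ (since $\pi_b$ depends on the state only) and concentrability $\frac{\pi_e(a\mid s)}{\pi_b(a\mid s)}\le 2$ for all $s,a$ (with equality for action $1$), with full coverage of $\pi_e$'s support.

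\textbf{The two computations.} First, the true value under $\pi_e$: the $G$-mass (probability $\tfrac12$) runs the whole chain and collects $H$, while the $B$-mass is diverted to $w$ at step $1$ and collects $O(1)$, so $V_1^{\pi_e}(s_1)=\tfrac12 H+O(1)=\Omega(H)$. Second, the FQE limit: I would track the behavior-induced occupation masses and choose the time-dependent re-injection rate $\rho_k$ so that the on-chain $B$-mass at $s_k$ is exactly $\frac{1}{2k}$; then the behavior posterior of $B$ at $s_k$ is $\frac{1/(2k)}{1/2+1/(2k)}=\frac1{k+1}$, so FQE's estimated transition for action $1$ is $\prob^{\pi_b}(s_{k+1}\mid s_k,1)=\frac{k}{k+1}$ and $\prob^{\pi_b}(w\mid s_k,1)=\frac1{k+1}$. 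Since $\pi_e$ makes $w$ absorbing with value $0$, backward induction gives $\mathrm{reach}(s_k)=\prod_{j<k}\frac{j}{j+1}=\frac1k$, hence $V_1^{\mathrm{FQE}}(s_1)=\sum_{k=1}^H\frac1k=O(\log H)$, completing the separation.

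\textbf{The main obstacle.} The crux is the tension between maintaining the harmonic posterior and satisfying concentrability. To \emph{estimate} the transition of the action $1$ that $\pi_e$ uses, $\pi_b$ must itself take action $1$ with probability at least $\tfrac12$; but action $1$ is exactly what \emph{removes} $B$ from the chain, so naive designs deplete the on-chain $B$-mass geometrically (like $2^{-k}$), the posterior collapses to $1$, and FQE becomes correct. Overcoming this forces the reservoir-and-re-injection gadget: a \emph{time-dependent} injection schedule $\rho_k$ tuned to hold the on-chain $B$-mass at the harmonic level $\frac{1}{2k}$, with the re-injection routed through action $2$ (which $\pi_e$ never takes at $w$) so that the diverted mass is genuinely lost in FQE's perceived chain even though it persists under $\pi_b$. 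I expect the bookkeeping --- verifying $\rho_k\in[0,1]$, that $B$-mass is conserved (it never truly dies under $\pi_b$), that both policies stay stationary, and that the FQE backward recursion indeed produces the harmonic reach --- to be the main technical work; all of the conceptual content lies in the behavior-versus-evaluation posterior mismatch identified above.
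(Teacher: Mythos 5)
Your proposal is correct, and it takes a genuinely different route from the paper's. The paper's witness is a minimal two-state, two-action MDP with \emph{stationary} kernels whose confounder has memory through action-dependent confounder transitions: $u_{a_1}$ persists exactly as long as action $a_1$ keeps being taken, so under $\pi_e$ it survives forever (true value $H$), while under the uniform $\pi_b$ it survives to step $h$ only with probability $2^{-(h-1)}$; FQE's behavior posterior thus collapses geometrically, its perceived probability of returning to the rewarding state drops to roughly $1/H$, and the $O(\log H)$ output comes from the $\approx 2\log_2 H$ burn-in steps before the collapse. Your witness instead uses a \emph{global} confounder (the extreme sub-case of memory, so a valid witness for the theorem) and keeps the posterior wrong not by decay but by engineering: the reservoir-and-re-injection gadget holds the on-chain $B$-mass at the harmonic level $\frac{1}{2k}$, and the bookkeeping you flagged as the main obstacle does close --- the balance equation $\frac{1}{2(k+1)} = \frac{1}{4k} + \frac{\rho_k}{4}\cdot\frac{k-1}{k}$ gives $\rho_k = \frac{1}{k+1} \in [0,1]$, so FQE's perceived survival to $s_k$ is exactly $\frac{1}{k}$ and its output is exactly $\sum_{k=1}^{H} \frac{1}{k} = \Theta(\log H)$, with $\Gamma = 1$ and $\pi_e/\pi_b \le 2$ holding as required. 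What the paper's construction buys is parsimony (two states, stationary dynamics, no tuning) at the cost of a tedious induction; what yours buys is a sharper message: FQE fails even under \emph{global} confounding, precisely the structured sub-case for which the paper's Section 4.1 shows consistent OPE is achievable by clustering, so the failure is attributable to FQE itself rather than to fundamental unidentifiability. Two small points to tighten in a full write-up: specify that both actions move the $G$-mass forward along the chain (your posterior computation $\frac{1/(2k)}{1/2+1/(2k)} = \frac{1}{k+1}$ implicitly assumes the $G$-mass at $s_k$ equals $\frac12$), and note that the time-dependent re-injection kernel is licensed by the paper's setup $\{\mathbb{P}_h\}_{h=1}^H$ (or can be made stationary by giving the reservoir position-indexed states), since the theorem only demands stationarity of the two policies.
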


While the challenges of FQE for POMDPs in general are qualitatively understood \cite{uehara2022futuredependent}, we show that it can be \emph{arbitrarily} bad even in the much milder setting of confounded MDPs with unconfounded $\pi_b$ and $\pi_e$. This suggests that making more specific assumptions about confounders with memory is necessary for designing OPE algorithms with theoretical guarantees. One example of such an assumption is the global confounder assumption, discussed below.

\subsection{Clustering-Based OPE for Global Confounders}

The main message of this section is that the dependence of confounders across timesteps can make it possible to pin down the effect of confounding and achieve consistent OPE, given enough structure to the dependence. We bring our focus to global confounders (Assumption~\ref{assump:global_u}) in the case where transition dynamics are stationary, and so are the behavior and evaluation policies. Notice that in the stationary setting, global confounders exactly describe a mixture of MDPs. Let the value of the evaluation policy $\pi_e$ under the dynamics induced by confounder $u$ be $V_1(s_0; u, \pi_e)$. If one can estimate this value and $P(u)$ for each $u$, then one can provide point estimates of the policy value $V_1^{\pi_e}(s_0) = \sum_u P(u) V_1(s_0; C_u, \pi_e)$.

We use Algorithm~\ref{algo:clusterOPE} as a broad meta-algorithm that takes a clustering algorithm and an OPE algorithm as input. We cluster the data and apply the OPE algorithm separately to each cluster to obtain a consistent final policy value estimate $\hat{V}_1(s_0; \pi_e)$. The crucial intuition behind this algorithm is the fact that the value estimate is a weighted average of value estimates over each confounder. 

\begin{algorithm}[h]
	\centering
	\caption{Clustering-Based OPE}
	\begin{algorithmic}[1]
		\STATE \textbf{input: } Number of clusters $U$, evaluation policy $\pi_e$, clustering algorithm \texttt{cluster()}, OPE estimator \texttt{ope()}.
		\STATE \textbf{run subroutine: } Use \texttt{cluster()} to obtain clusters $C_1,...,C_U$.
		\STATE  Obtain cluster weight estimates $\hat{P}(u) := \frac{|C_u|}{N_{traj}}$.
        \STATE \textbf{run subroutine: } Estimate $\hat{V}_1(s_0; C_u, \pi_e)$ for each cluster $C_u$ using \texttt{ope()}.
		\STATE \textbf{return: } Output the final policy value estimate $\hat{V}_1(s_0 ; \pi_e) = \sum_{u=1}^U \hat{P}(u_i) \hat{V}_1(s_0; C_u, \pi_e)$.
	\end{algorithmic}
\label{algo:clusterOPE}
\end{algorithm}

To present an end-to-end theoretical guarantee, we instantiate the meta-algorithm using the recent work of \cite{ambuj2022mixmdp} as our clustering algorithm and the data-splitting tabular-MIS (marginalized importance sampling) estimator from \cite{yin2020asymptotically} as our OPE estimator. To satisfy the assumptions of \cite{ambuj2022mixmdp} and \cite{yin2020asymptotically}, we require 3 additional assumptions, discussed in their papers.

\begin{assumption}[Mixing, from \cite{ambuj2022mixmdp}]\label{assump:mixing}
Let the $U$ Markov chains on $\mathcal{S} \times \mathcal{A}$ induced by the various behavior policies $\pi(a \mid s, u)$, each achieve mixing to a stationary distribution $d_{u}(s,a)$ with mixing time $t_{mix, u}$. Define the overall mixing time of the mixture of MDPs to be $t_{mix} := \max_u t_{mix, u}$.
\end{assumption}

\begin{assumption}[Model Separation, from \cite{ambuj2022mixmdp}]\label{assump:model_sep}
There exist $\alpha, \Delta > 0$ so that for each pair $u_1, u_2$ of confounders, there exists a state action pair $(s,a)$ (possibly depending on $u_1, u_2$) so that the stationary distributions under each confounder $d_{u_1}(s,a), d_{u_2}(s,a) \geq \alpha$ and $\|\prob^{(u_1)}(\cdot \mid s,a) - \prob^{(u_2)}(\cdot \mid s,a)\|_2 \geq \Delta$.
\end{assumption}

\begin{assumption}[Concentrability and Exploration, from \cite{yin2020asymptotically}]\label{assump:concentrability}
For $d_m := \min\{d^{\pi_b}_h(s) \mid d^{\pi_e}_h(s)>0\}$, $d_m > 0$, and there exist constants $\tau_a$ and $\tau_s$ so that for all $s, a, h$ $\frac{d^{\pi_e}_h(s)}{d^{\pi_b}_h(s)} \leq \tau_s$ and $\frac{\pi_e(a \mid s)}{\pi_b(a \mid s)} \leq \tau_a$.
\end{assumption}

We can therefore leverage the work of \cite{ambuj2022mixmdp} to achieve exact clustering with enough data under Assumptions \ref{assump:global_u}, \ref{assump:mixing}, and \ref{assump:model_sep}, recovering the unobserved global confounder $u_n$ in each trajectory up to permutation\footnote{They recover clusters, which is sufficient as we only need to know confounders up to renaming the labels.}. Then, when using the estimator from \cite{yin2020asymptotically} under Assumption~\ref{assump:concentrability}, we obtain the following guarantee.

\begin{restatable}[Sample Complexity for OPE under Global Confounding]{thm}{ClusterOPESampleComplexity} \label{thm:cluster-ope-sample-complexity}
Under Assumptions~\ref{assump:global_u}, \ref{assump:mixing}, \ref{assump:model_sep}, \ref{assump:concentrability}, there are constants $H_0$, $N_0$ depending polynomially on $\frac{1}{\alpha}, \Delta, \frac{1}{\min_u P(u)}, \log(1/\delta)$, so that for $n$ trajectories of length $H \geq H_0t_{mix}\log(n)$, we have that $|\hat{V}_1(s_0 ; \pi_e) - V_1(s_0 ; \pi_e)| < \epsilon$ with probability at least $1-\delta$ if $n \geq \Omega(\max(n_1, n_2, n_3, n_4))$, where
\begin{align*}
    n_1 &:= U^2SN_0\log(1/\delta), \hfill n_2 := \frac{\log(U/\delta)}{\min(\epsilon^2/H^2, \min_u P(u)^2)} \\
    n_3 &:= \frac{H^2\tau_a\tau_sSA \log(U/\delta)}{\epsilon^2}, n_4 := \frac{\tau_aH}{d_m}
\end{align*}
\end{restatable}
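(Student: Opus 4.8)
The plan is to decompose the total error into three independently-controllable pieces — clustering error, cluster-weight error, and per-cluster OPE error — bound each on a high-probability event, and combine by a union bound. Writing the target as $V_1(s_0;\pi_e) = \sum_u P(u) V_1(s_0;u,\pi_e)$ and the estimator as $\hat{V}_1(s_0;\pi_e) = \sum_u \hat{P}(u)\hat{V}_1(s_0;C_u,\pi_e)$, I would add and subtract $\sum_u \hat{P}(u) V_1(s_0;u,\pi_e)$ to get
\[
\hat{V}_1 - V_1 = \underbrace{\sum_u \hat{P}(u)\bigl(\hat{V}_1(s_0;C_u,\pi_e) - V_1(s_0;u,\pi_e)\bigr)}_{\mathrm{(OPE)}} + \underbrace{\sum_u \bigl(\hat{P}(u) - P(u)\bigr) V_1(s_0;u,\pi_e)}_{\mathrm{(weights)}}.
\]
This identity only becomes meaningful once each cluster $C_u$ has been matched to a confounder $u$, so the first step is to establish \emph{exact} clustering.

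\textbf{Step 1 (exact recovery).} First I would invoke the clustering guarantee of \cite{ambuj2022mixmdp}: under Assumptions~\ref{assump:global_u}, \ref{assump:mixing}, and \ref{assump:model_sep}, with $n \geq n_1 = U^2 S N_0\log(1/\delta)$ trajectories each of length $H \geq H_0 t_{mix}\log(n)$, all $n$ global-confounder labels are recovered exactly (up to a permutation) with probability at least $1-\delta/3$. The length requirement buys mixing within each trajectory, and the $\log n$ factor drives the per-trajectory identification-failure probability low enough to union-bound across all $n$ trajectories. On this event each $C_u$ is a \emph{pure} collection of trajectories from the single MDP indexed by $u$, which is exactly what validates the decomposition above and keeps the per-cluster data uncontaminated.

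\textbf{Steps 2--3 (weights and per-cluster OPE).} Conditioning on exact recovery, each $|C_u| = \sum_n \ind[u_n = u]$ is a binomial count, so $\hat{P}(u)=|C_u|/N_{traj}$ is a multinomial-frequency estimate; a Hoeffding bound with a union bound over the $U$ clusters controls $\max_u|\hat{P}(u)-P(u)|$. Two requirements emerge: the deviation must be of order $\epsilon/H$ so that the (weights) term — at most $H\sum_u|\hat{P}(u)-P(u)|$, since $V_1(s_0;u,\pi_e)\in[0,H]$ — is $O(\epsilon)$; and each $\hat{P}(u)$ must stay within a constant factor of $P(u)$ so that every cluster retains $\gtrsim \min_u P(u)\cdot n$ trajectories. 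These are precisely the two arguments of the minimum in $n_2$. Within each (now pure, stationary) cluster the trajectories are i.i.d.\ draws from a fixed MDP, so Assumption~\ref{assump:concentrability} lets me apply the data-splitting tabular-MIS guarantee of \cite{yin2020asymptotically}: feasibility/coverage of every $\pi_e$-reachable state gives the requirement $n \geq n_4 = \tau_a H/d_m$, and the statistical rate gives $|\hat{V}_1(s_0;C_u,\pi_e)-V_1(s_0;u,\pi_e)| = O(\epsilon)$ once the per-cluster sample size is $\Omega(H^2\tau_a\tau_s SA\log(U/\delta)/\epsilon^2)$; since each cluster holds $\gtrsim \min_u P(u)\,n$ trajectories this translates into the global requirement $n\geq n_3$, with a union bound over the $U$ clusters absorbing another $\delta/3$. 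Because $\sum_u \hat{P}(u)=1$, the (OPE) term is then at most $\max_u|\hat{V}_1(s_0;C_u,\pi_e)-V_1(s_0;u,\pi_e)| = O(\epsilon)$. Combining the three events by a union bound (total failure $\leq\delta$) and the two terms by the triangle inequality yields $|\hat{V}_1-V_1|<\epsilon$ whenever $n\geq\Omega(\max(n_1,n_2,n_3,n_4))$.

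\textbf{Main obstacle.} The hard part — and the reason the result must be stated via \emph{exact} recovery rather than an approximate-clustering rate — is the all-or-nothing nature of Step 1: a single misclassified trajectory injects data from the wrong MDP into a cluster and can bias that cluster's OPE by $\Omega(H)$, which no amount of within-cluster averaging can remove. Everything therefore rests on the high-probability exact-recovery event of \cite{ambuj2022mixmdp}, which is exactly why the model-separation Assumption~\ref{assump:model_sep} and the logarithmically long horizon $H\geq H_0 t_{mix}\log n$ are indispensable. A secondary subtlety is that the same trajectories are used both to cluster and to estimate; I would handle this either through the data splitting already built into the \cite{yin2020asymptotically} estimator or by an explicit sample split, noting that, conditioned on the (measurable) recovered label, the within-cluster trajectory law is exactly that of MDP $u$, so conditioning does not distort the per-cluster estimation target.
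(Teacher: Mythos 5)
Your proposal is correct and follows essentially the same route as the paper's proof: the same add-and-subtract decomposition into a weights term and a per-cluster OPE term, exact recovery via \cite{ambuj2022mixmdp} giving $n_1$, Hoeffding on label proportions with the two accuracy thresholds $\epsilon/H$ and $\min_u P(u)/2$ giving $n_2$, and the per-cluster tabular-MIS guarantee of \cite{yin2020asymptotically} rescaled by the $\min_u P(u)\, n$ cluster-size lower bound giving $n_3$ and $n_4$, all combined by a union bound over the three events. The paper's proof is a specialization of a general argument with an abstract OPE sample-complexity function $N_2(\delta,\epsilon,b)$, but the instantiation coincides with what you wrote.
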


The first term represents the sample complexity for exact clustering (given in \cite{ambuj2022mixmdp}), the second term corresponds to estimating $P(u)$ accurately and the third and fourth come from the sample complexity of the OPE estimator (given in \cite{yin2020asymptotically}). In Appendix~\ref{sec:clustering-ope}, we prove a more general version of this theorem, where the OPE estimator makes an assumption $A(b)$ depending on a parameter vector $b$ and has sample complexity $N_2(\delta, \epsilon, b)$. Results analogous to Theorem~\ref{thm:cluster-ope-sample-complexity} can thus be produced using Corollary 1 of \cite{duan2020minimax}, or other off-policy estimators listed in section 2 of \cite{zhang2022opelist} viewed in a tabular setting. This is the first result that provides sample complexity guarantees for consistent point estimates under confounding. Theorem~\ref{thm:tmix-lower-bound} in Appendix~\ref{sec:tmix-lower-bound} shows that requiring that $H \geq \Omega(t_{mix})$ in Theorem~\ref{thm:cluster-ope-sample-complexity} is unavoidable, even for small $t_{mix} = O(\log(S))$.


\section{Policy Optimization under Confounding}\label{sec:optimization}
We first make an elementary observation that given a bound on the OPE error $|\hat{V}_1(\pi) - V_1(\pi)|$ and an optimizer for the value estimate $\hat{\pi}^* \in \argmax \hat{V}_1(\pi)$, we can obtain a sub-optimality bound for $\hat{\pi}^*$. We show this explicitly in Appendix~\ref{sec:iid_opt}, noting that this is agnostic to the existence and the nature of confounding. 

\paragraph{Policy Gradients on Lower Bounds under Memoryless Confounding.} Recall that in Section~\ref{sec:history-ind-confounders}, we produced lower bounds on the value function under memoryless confounding with a sensitivity model. In lieu of optimizing a point estimate of the policy's value, we can instead improve this lower bound.

Recall that Algorithm~\ref{algo:projOPE} in Appendix \ref{sec:mb-varations} computes a lower bound on $V_1(s_0)$ by projected gradient descent. We can backpropagate gradients relative to the evaluation policy, improving the lower bound on $V_1(s_0)$, and therefore the policy, with gradient ascent. We present the case with stationary transition structures in the max-min formulation below in the interest of lucidity, noting that it immediately generalizes to non-stationary transition structures as well.

\begin{equation}\label{equ:max-min-problem}
    \max_{\theta \in \Theta} \min_{\prob \in \mathcal{G}} V_1(s_0 ; \pi_\theta, \prob)
\end{equation}

We repeat the alternating process of finding $\prob \in \mathcal{G}$ to minimize $V_1(s_0)$ given an evaluation policy $\pi_\theta$ and then performing a gradient ascent update on $\pi_\theta$. This is illustrated fully in Algorithm~\ref{algo:polGrad} in Appendix \ref{sec:iid_opt},\footnote{Given libraries like \texttt{cvxpylayers}, we can also  perform gradient ascent on any lower bound from differentiable convex optimization. This includes the lower bounds generated by the relaxation of the model-based algorithm (Alg.~\ref{algo:MBRelax}) and CFQE (Alg.~\ref{algo:cFQE}). We state general lemmas that back our claims.} where we discuss local convergence guarantees for the method.

\paragraph{Policy Gradients under Global Confounding.} 

Recall that we hope to solve $\argmax_{\pi_e} V_1(s_0; \pi_e)$, where $V_1(s_0; \pi_e) = \sum_{u} P(u) V_1(s_0;u;\pi_e)$, for confounder-unaware evaluation policy $\pi_e$. This is the Weighted-Value Problem in \cite{steimle2021mmdp}, which is NP-hard according to Proposition 2 in their paper.

We discuss a policy gradient method for this problem. Let $ Z(\theta) := \nabla_\theta V_1(s_0 ; \pi_\theta)$. By Assumption~\ref{assump:global_u}, $Z(\theta) = \nabla_\theta \mathbb{E}_{u}[V_1(s_0 ; u, \pi_\theta)] = \nabla_\theta \sum_u P(u) V_1(s_0 ; u, \pi_\theta) = \sum_u P(u) \nabla_\theta V_1(s_0 ; u, \pi_\theta)$. Therefore, if we have gradient estimates $\hat{Z}_i(\theta)$ of $Z_i(\theta) = \nabla_\theta V_1(s_0 ; u_i, \pi_\theta)$ for each cluster, we can obtain the final policy gradient estimate as a weighted sum, given by $\hat{Z}(\theta) = \sum_{u=1}^U \hat{P}(u_i) \hat{Z}_i(\theta)$. We present this as Algorithm~\ref{algo:clusterOpt} in Appendix~\ref{sec:clustering-pg}.

We then perform standard gradient descent for $T$ iterations on the policy parameters $\theta$, with the update rule given by $\theta_{t+1} = \theta_t - \eta \hat{Z}(\theta_t)$. In analyzing this procedure, we instantiate $\hat{Z}_i$ using the (statistically) Efficient Off-Policy Policy Gradient (EOPPG) estimator from \cite{kallus2020statistically}, which enjoys an $\Theta(H^4/n)$ MSE guarantee instead of the $2^{\Theta(H)}\Theta(1/n)$ worst-case sample complexity of REINFORCE \cite{kallus2020statistically}. We assume that the gradient of $V_1$ is bounded by $L$, which holds if $V_1$ is $L$-Lipschitz. Additionally, let assumptions for Theorem 12 in \cite{kallus2020statistically} hold. We obtain a bound on the norm of the policy gradient that shows convergence to a stationary point. Theorem~\ref{thm:clustering-pg-endtoend} below holds when $H \geq H_0t_{mix}\log n$, for $H_0, N_0$ as in Theorem~\ref{thm:cluster-ope-sample-complexity}. It is proved in Appendix~\ref{sec:clustering-pg}. 

\begin{restatable}{thm}{ClusteringPG} \label{thm:clustering-pg-endtoend}
Let us have large enough $\beta > 1$ and $T=n^\beta$, for $n \geq \Omega\left(\max\left(U^2SN_0\log(1/\delta), \frac{\log(U/\delta)}{\min_u P(u)^2}\right)\right)$. $\frac{1}{T} \sum_{t=1}^T ||\nabla_\theta V_1(s_0;\pi_{\theta_t})||^2 = O(\max(\epsilon_{MSE}, \epsilon_{freq})$, where $\epsilon_{MSE} = \frac{H^4 \log (n U/\delta)}{n \min_u P(u)}$, and $\epsilon_{freq} = \frac{L^2\log(U/\delta)}{n}$
\end{restatable}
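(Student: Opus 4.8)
The plan is to combine a standard nonconvex smooth-optimization descent argument with control of the two statistical error sources — inexact cluster weights and inexact per-cluster gradient estimates — that enter through the composite estimator $\hat{Z}(\theta) = \sum_u \hat{P}(u)\hat{Z}_u(\theta)$. Throughout I condition on the high-probability event that the clustering subroutine of \cite{ambuj2022mixmdp} recovers the global confounders exactly. By Theorem~\ref{thm:cluster-ope-sample-complexity} this event holds with probability $1-\delta$ once $n \geq \Omega(U^2SN_0\log(1/\delta))$ and $H \geq H_0 t_{mix}\log n$, and on it each cluster $C_u$ consists precisely of the trajectories whose confounder is $u$, so the EOPPG estimator of \cite{kallus2020statistically} applied to $C_u$ sees $|C_u|$ i.i.d.\ trajectories from a single unconfounded MDP and its $\Theta(H^4/|C_u|)$ MSE guarantee applies verbatim.

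First I would control the error of the composite gradient at a fixed $\theta$. Writing $\hat{Z}(\theta) - Z(\theta) = \sum_u \hat{P}(u)(\hat{Z}_u(\theta) - Z_u(\theta)) + \sum_u (\hat{P}(u)-P(u)) Z_u(\theta)$ splits the error into (i) a within-cluster gradient-estimation term and (ii) a cluster-weight term. For (i) I invoke the EOPPG MSE bound of Theorem 12 in \cite{kallus2020statistically}; since exact clustering gives $|C_u|\gtrsim n\min_u P(u)$ once $n \geq \Omega(\log(U/\delta)/\min_u P(u)^2)$ (a binomial concentration bound on each cluster size, union-bounded over the $U$ clusters), this term contributes $O(H^4/(n\min_u P(u)))$ in MSE. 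For (ii) the same binomial concentration gives $|\hat{P}(u)-P(u)| \lesssim \sqrt{\log(U/\delta)/n}$, and combined with $\|Z_u(\theta)\| \leq L$ from the Lipschitz assumption this contributes $O(L^2\log(U/\delta)/n)$. Converting the MSE in (i) to a high-probability statement by a tail bound and union-bounding over the $T = n^\beta$ iterates — so that $\log T = \beta\log n$ is absorbed into a $\log(nU/\delta)$ factor — yields, with probability $1-\delta$, $\|\hat{Z}(\theta_t) - Z(\theta_t)\|^2 = O(\max(\epsilon_{MSE},\epsilon_{freq}))$ simultaneously for all $t$. Note that the weight term in (ii) is estimated only once and does not change across iterations, which is why no extra $\log n$ appears in $\epsilon_{freq}$.

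Next I would run the textbook descent lemma for the $\ell$-smooth objective $V_1(s_0;\pi_\theta)$, where smoothness follows from the regularity assumptions imported from Theorem 12 of \cite{kallus2020statistically}. With $e_t := \hat{Z}(\theta_t) - \nabla_\theta V_1(s_0;\pi_{\theta_t})$, the update $\theta_{t+1} = \theta_t - \eta\hat{Z}(\theta_t)$ and smoothness give $V_1(\theta_{t+1}) \leq V_1(\theta_t) - \eta\|\nabla_\theta V_1(\theta_t)\|^2 - \eta\langle \nabla_\theta V_1(\theta_t), e_t\rangle + \tfrac{\ell\eta^2}{2}\|\hat{Z}(\theta_t)\|^2$. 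Taking $\eta = \Theta(1/\ell)$, absorbing the cross term by Young's inequality, rearranging and telescoping over $t=1,\dots,T$ bounds $\tfrac1T\sum_t\|\nabla_\theta V_1(\theta_t)\|^2$ by an optimization term $O(H/(\eta T)) = O(H/(\eta n^\beta))$ (using that $V_1$ ranges in $[0,H]$) plus an error term $O(\max_t\|e_t\|^2) = O(\max(\epsilon_{MSE},\epsilon_{freq}))$. Since $\beta > 1$, the optimization term is $O(n^{-\beta})$ and is dominated by the $\Theta(1/n)$-scale statistical error, giving the stated bound; a final union bound over the clustering event, the weight-concentration event, and the gradient-concentration events delivers overall probability $1-\delta$.

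The main obstacle I anticipate is the uniform-over-iterates control of the gradient error: because the same $n$ trajectories are reused at every iterate $\theta_t$, the estimates $\hat{Z}(\theta_t)$ are not independent across $t$, so the in-expectation EOPPG MSE must be upgraded to a bound that holds simultaneously at all the data-dependent iterates. I would handle this by the crude route of a union bound over the $T=n^\beta$ iterations after a tail conversion of the MSE — which is exactly what produces the $\log(nU/\delta)$ factor in $\epsilon_{MSE}$ and forces $\beta$ to be a fixed constant so that $\log T = O(\log n)$ — or, to avoid the dependence on $T$, by a covering-number argument over $\Theta$ together with Lipschitzness of $\hat{Z}$ in $\theta$. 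The remaining ingredients (binomial concentration of cluster sizes and weights, and the descent lemma itself) are routine.
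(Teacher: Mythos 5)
Your proposal follows the same architecture as the paper's proof: condition on exact clustering via \cite{ambuj2022mixmdp}; decompose the composite gradient error as $\hat{Z}(\theta)-Z(\theta)=\sum_u \hat{P}(u)(\hat{Z}_u(\theta)-Z_u(\theta))+\sum_u(\hat{P}(u)-P(u))Z_u(\theta)$; control the weight term by Hoeffding plus the uniform gradient bound $L$ (giving $\epsilon_{freq}$, correctly without an extra $\log n$ since the weight event is a single event and $\|Z_u(\theta)\|\leq L$ holds deterministically for all $\theta$); control the per-cluster term by the EOPPG guarantee of \cite{kallus2020statistically} together with the cluster-size lower bound $|C_u|\gtrsim nP(u)$ (giving $\epsilon_{MSE}$); and finish with the inexact-gradient descent lemma, telescoping, and the observation that $\beta>1$ makes the optimization term $O(n^{-\beta})$ negligible against the $\Theta(1/n)$ statistical terms. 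This is, step for step, the content of the paper's Theorems~\ref{thm:clustering-pg} and \ref{thm:clustering-pg-convergence}.

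The one genuine problem is your primary mechanism for uniformity over iterates. A union bound over the $T=n^{\beta}$ iterations does not work here: the inequality $\prob(\exists t:\|\hat{Z}(\theta_t)-Z(\theta_t)\|>\epsilon)\leq\sum_t\prob(\|\hat{Z}(\theta_t)-Z(\theta_t)\|>\epsilon)$ is of course valid, but each summand cannot then be controlled by a fixed-$\theta$ tail bound, because $\theta_t$ is a deterministic function of the very data used to form $\hat{Z}$, and a pointwise concentration inequality says nothing about the error evaluated at a data-dependent point. (Union bounding over iterations would be legitimate only with fresh data at each iteration, which Algorithm~\ref{algo:clusterOpt} does not use.) The paper sidesteps this by observing that Theorem 12 of \cite{kallus2020statistically} already bounds the supremum of the gradient-estimation error over all $\theta\in\Theta$ --- that supremum is exactly where the $\log(TU/\delta)$, hence $\log(nU/\delta)$, factor originates --- so it applies verbatim at every data-dependent iterate. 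Your fallback, a covering-number argument over $\Theta$ combined with Lipschitzness of $\hat{Z}$ in $\theta$, is the correct from-scratch substitute and is morally what the cited sup bound encapsulates; it, not the union bound over iterates, should be the main route.
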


\section{Numerical Experiments}\label{sec:experiments}

\paragraph{Gridworld for Memoryless Confounders.}

\begin{figure}[h]
    \centering
    \includegraphics[width=7cm, height=5cm]{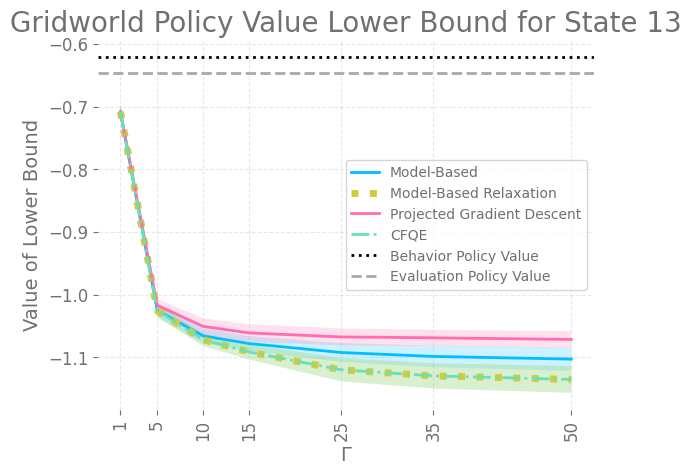}
    \caption{OPE for Memoryless Confounders. Comparison of our model-based method, its non-stationary relaxation (Alg.~\ref{algo:MBRelax}), its projected gradient descent variant (Alg.~\ref{algo:projOPE}), and CFQE on state 13 in a 16-state gridworld. Confidence intervals (CIs) are one standard deviation wide and computed over 30 trials. $H=8$.}
    \label{fig:iidcomparison}
\end{figure}

We examine the performance of the methods in Section~\ref{sec:history-ind-confounders} on the 4x4 gridworld environment used by \cite{bruns2021model}, with i.i.d. (and thus memoryless) confounders. We implement the model-based method and its variations using the point estimates $\hat{\prob}^{\pi_b}$ instead of Hoeffding confidence intervals for ${\prob}^{\pi_b}$, for a fair comparison with CFQE. The horizon is $H = 8$, and $\Gamma$ ranges from $1$ to $50$. We plot the policy values against $\Gamma$ in Figure~\ref{fig:iidcomparison}. Across all 16 states, the model-based method's lower bound is always either as good as or tighter than that of CFQE, but the gap in performance is seen most starkly in state 13 (which we display in Figure \ref{fig:iidcomparison}). The output of FQE is obtained at $\Gamma = 1$ and is at most $-0.7$. By the remark after the proof of Theorem~\ref{thm:err_FQE}, the naive lower bound obtained using FQE is less than $-0.7 - \frac{\varepsilon H^2}{2} = -0.7 - 32\varepsilon$. This is quite literally "off-the-chart" here, showing that using FQE for lower bounds would be ineffective in practice.

We also study policy improvement. Figure \ref{fig:dynamics} displays the training dynamics and convergence of Algorithm \ref{algo:polGrad}, where we perform gradient ascent on a lower bound obtained by Algorithm \ref{algo:projOPE}. We visualize the learned policy, which is appropriately conservative: on a horizon of 8, the agent will likely not reach the goal state from the first few states and move to the top left corner appropriately. Finally, we plot the increase in the lower bound on policy value against progressing gradient ascent iterations, starting at $\pi_e$. Note that even our lower bounds all eventually exceed the true (ground truth) values of $\pi_b$ and $\pi_e$, displaying improvement.

\begin{figure}[h]
    \centering
    \includegraphics[width=5.2cm, height=4cm]{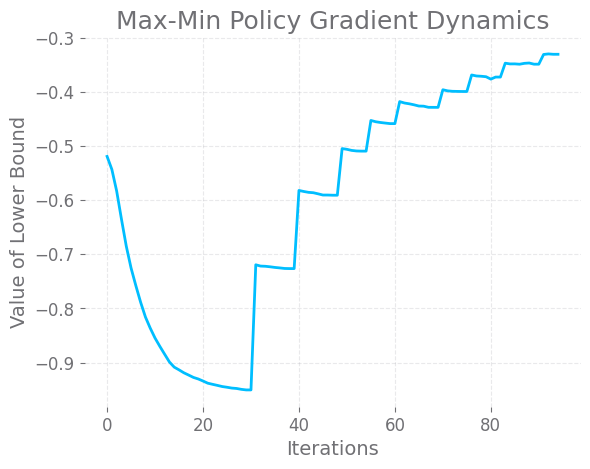}
    \includegraphics[width=2cm, height=4cm]{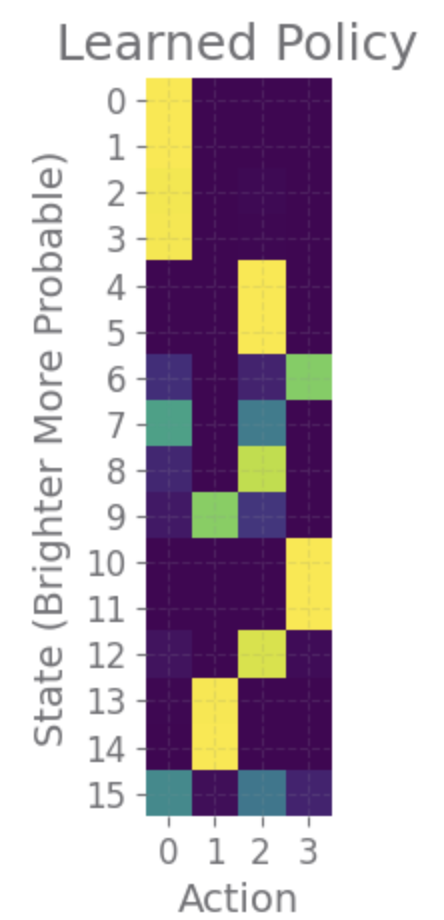}
    \includegraphics[width=7cm, height=5cm]{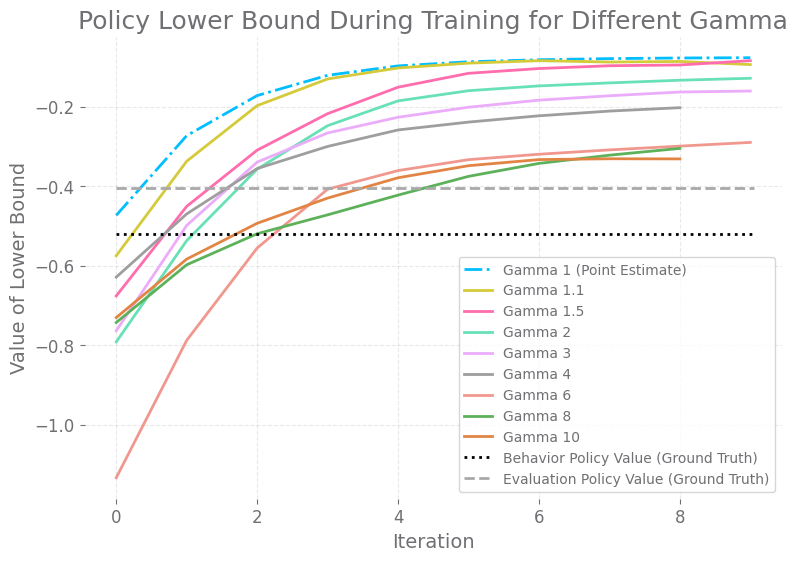}
    \caption{Policy Improvement for Memoryless Confounders. Top Left: Loss curve dynamics of max-min gradient descent. Top Right: Resulting policy $\hat{\pi}^*$ for $\Gamma = 10$ in 4x4 gridworld with actions indexed by WENS. Brighter colors indicate higher $\hat{\pi}^*(a\mid s)$. Bottom: Increase in the lower bound on $V_1^{\pi_\theta}$ as gradient ascent iterations progress. $H=8$.}
    \label{fig:dynamics}
\end{figure}



\paragraph{Sepsis Simulator for Global Confounders.}

We examine the performance of the method of Algorithm \ref{algo:clusterOPE} on the sepsis simulator of \cite{oberst2019counterfactual}, especially in terms of the choice of the clustering algorithm. Once we hide the diabetes status of each patient, it becomes a global confounder. 
The confounder-aware behavior policy is the same behavior policy in \cite{oberst2019counterfactual}, and the evaluation policy is $\pi_e := \frac{1}{U}\sum_u \pi_b(a|s,u)$. In the simulator, glucose levels are generated i.i.d, with their distribution determined by the presence or absence of diabetes. This makes them easy proxies for diabetes, so we hide glucose levels during the clustering phase to make the clustering problem harder. 

On the top left of Figure~\ref{fig:sepsis}, we compare the clustering error for the method of \cite{ambuj2022mixmdp} with that of classical soft EM with random initialization. In the top right, we plot a measure of the relative error in OPE against trajectory length. The relative error is computed as $\frac{\max_s |\hat{V}_1^{\pi_e}(s) - {V}_1^{\pi_e}(s)|}{\max_s |{V}_1^{\pi_e}(s)|}$. The plot compares the performance of Algorithm~\ref{algo:clusterOPE} instantiated with FQE coupled with either soft EM with random initialization or the method of \cite{ambuj2022mixmdp}. At the bottom, we show the convergence of Algorithm \ref{algo:clusterOpt}, instantiated using the off-policy policy gradient variant that \cite{kallus2020statistically} attributes to \cite{degris2013offpolicy}. We compare the same possibilities for clustering as above. We observe that in general, the method of \cite{ambuj2022mixmdp} outperforms randomly initialized soft EM, allowing for both OPE and policy improvement. Our experimental results highlight the effectiveness of our method as well as the importance of the clustering algorithm. 


\begin{figure}[h!]
    \centering
    \includegraphics[width=3.5cm, height=4.5cm]{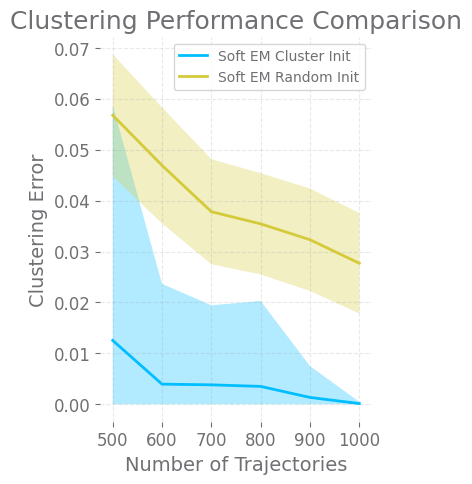}
    \includegraphics[width=3.5cm, height=4.5cm]{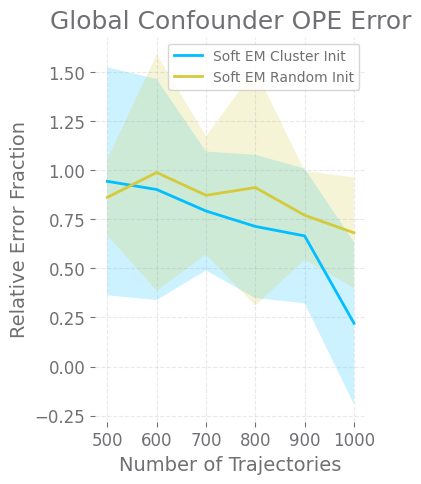}
    \includegraphics[width=5cm, height=4cm]{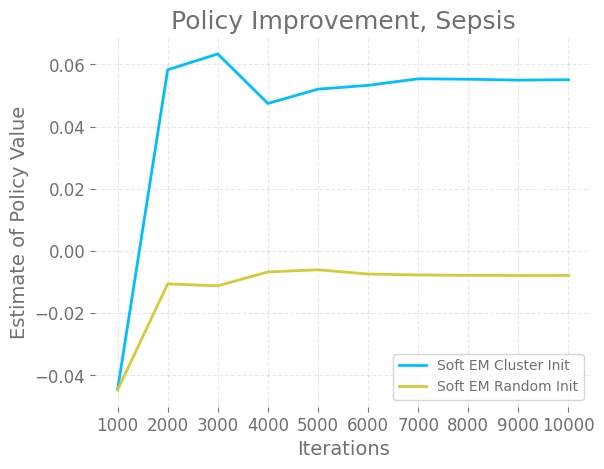}
    \caption{Top Left: Average performance of the clustering method from Kausik et al. Top Right: Average relative error of clustering-based OPE with different clustering algorithms. Bottom: Improvement in estimates of policy values under gradient ascent coupled with different clustering algorithms, see Appendix~\ref{sec:experimental-details} for details. We average over 30 trials, confidence intervals are 1 standard deviation wide. $H=60$.}
    \label{fig:sepsis}
\end{figure}

\section{Conclusion and Future Work}

We have provided a broad, structured view of the landscape of confounded MDPs, studying the OPE and OPI problems under various confounding assumptions. The paper has discussed existing methods, presented new ones and provided theoretical and empirical grounding for the methods. We hope that the insights here will springboard further work on confounded MDPs. 
In particular, while we address the sensitivity assumption, a big-picture view of other assumptions like bridge functions and instrumental variables is needed. For general confounders with memory, note that while Theorem~\ref{thm:hist-dependent-lower-bound} rules out FQE and related methods, other methods must be explored. There are also specific structures on confounders with memory, besides global confounders, that can be formulated and studied. Finally, many of our methods (such as the gradient-based methods presented) can be extended to handle continuous state spaces via function approximation. \cite{Shi2021AML} provide methods under assumptions on the existence and learnability of bridge functions, being one of the first works to address this. However, work on confounding with continuous state and action spaces is still relatively sparse, and is an exciting setting to explore. 

%

\newpage
\bibliographystyle{apalike}
\bibliography{ref}

\begin{thebibliography}{}

\bibitem[Bennett et~al., 2020]{kallus2020prox}
Bennett, A., Kallus, N., Li, L., and Mousavi, A. (2020).
\newblock Off-policy evaluation in infinite-horizon reinforcement learning with
  latent confounders.
\newblock {\em CoRR}, abs/2007.13893.

\bibitem[Bruns-Smith and Zhou, 2023]{brunssmith2023robust}
Bruns-Smith, D. and Zhou, A. (2023).
\newblock Robust fitted-q-evaluation and iteration under sequentially exogenous
  unobserved confounders.

\bibitem[Bruns-Smith, 2021]{bruns2021model}
Bruns-Smith, D.~A. (2021).
\newblock Model-free and model-based policy evaluation when causality is
  uncertain.
\newblock In {\em International Conference on Machine Learning}, pages
  1116--1126. PMLR.

\bibitem[Clare et~al., 2018]{clare2018timevarying}
Clare, P.~J., Dobbins, T.~A., and Mattick, R.~P. (2018).
\newblock {Causal models adjusting for time-varying confounding—a systematic
  review of the literature}.
\newblock {\em International Journal of Epidemiology}, 48(1):254--265.

\bibitem[Daniel et~al., 2013]{Daniel2013timevarying}
Daniel, R.~M., Cousens, S.~N., De~Stavola, B.~L., Kenward, M.~G., and Sterne,
  J. A.~C. (2013).
\newblock Methods for dealing with time-dependent confounding.
\newblock {\em Stat. Med.}, 32(9):1584--1618.

\bibitem[Daskalakis and Panageas, 2018]{constantinos2018minmax}
Daskalakis, C. and Panageas, I. (2018).
\newblock The limit points of (optimistic) gradient descent in min-max
  optimization.

\bibitem[Degris et~al., 2013]{degris2013offpolicy}
Degris, T., White, M., and Sutton, R.~S. (2013).
\newblock Off-policy actor-critic.

\bibitem[Duan and Wang, 2020]{duan2020minimax}
Duan, Y. and Wang, M. (2020).
\newblock Minimax-optimal off-policy evaluation with linear function
  approximation.
\newblock {\em CoRR}, abs/2002.09516.

\bibitem[Fu et~al., 2022]{fu2022offline}
Fu, Z., Qi, Z., Wang, Z., Yang, Z., Xu, Y., and Kosorok, M.~R. (2022).
\newblock Offline reinforcement learning with instrumental variables in
  confounded markov decision processes.

\bibitem[Kaelbling et~al., 1998]{kaelbling1998pomdps}
Kaelbling, L.~P., Littman, M.~L., and Cassandra, A.~R. (1998).
\newblock Planning and acting in partially observable stochastic domains.
\newblock {\em Artificial Intelligence}, 101(1):99--134.

\bibitem[Kallus and Uehara, 2020]{kallus2020statistically}
Kallus, N. and Uehara, M. (2020).
\newblock Statistically efficient off-policy policy gradients.

\bibitem[Kallus and Zhou, 2020]{kallus2020confounding}
Kallus, N. and Zhou, A. (2020).
\newblock Confounding-robust policy evaluation in infinite-horizon
  reinforcement learning.
\newblock {\em arXiv preprint arXiv:2002.04518}.

\bibitem[Kausik et~al., 2022]{ambuj2022mixmdp}
Kausik, C., Tan, K., and Tewari, A. (2022).
\newblock Learning mixtures of markov chains and mdps.

\bibitem[Lee et~al., 2016]{lee2016graddesc}
Lee, J.~D., Simchowitz, M., Jordan, M.~I., and Recht, B. (2016).
\newblock Gradient descent converges to minimizers.

\bibitem[Levin and Peres, 2017]{levin2017markov}
Levin, D.~A. and Peres, Y. (2017).
\newblock {\em Markov chains and mixing times}, volume 107.
\newblock American Mathematical Soc.

\bibitem[Levine et~al., 2020]{levine2020survey}
Levine, S., Kumar, A., Tucker, G., and Fu, J. (2020).
\newblock Offline reinforcement learning: Tutorial, review, and perspectives on
  open problems.
\newblock {\em CoRR}, abs/2005.01643.

\bibitem[Liao et~al., 2021]{liao2021instrumental}
Liao, L., Fu, Z., Yang, Z., Wang, Y., Kolar, M., and Wang, Z. (2021).
\newblock Instrumental variable value iteration for causal offline
  reinforcement learning.

\bibitem[Mansournia et~al., 2017]{timevarying1}
Mansournia, M.~A., Etminan, M., Danaei, G., Kaufman, J.~S., and Collins, G.
  (2017).
\newblock Handling time varying confounding in observational research.
\newblock {\em BMJ: British Medical Journal}, 359.

\bibitem[Miao et~al., 2022]{Miao2022opePOMDP}
Miao, R., Qi, Z., and Zhang, X. (2022).
\newblock Off-policy evaluation for episodic partially observable markov
  decision processes under non-parametric models.
\newblock In Koyejo, S., Mohamed, S., Agarwal, A., Belgrave, D., Cho, K., and
  Oh, A., editors, {\em Advances in Neural Information Processing Systems},
  volume~35, pages 593--606. Curran Associates, Inc.

\bibitem[Nair and Jiang, 2021]{Nair2021opePOMDP}
Nair, Y. and Jiang, N. (2021).
\newblock A spectral approach to off-policy evaluation for pomdps.
\newblock {\em ArXiv}, abs/2109.10502.

\bibitem[Namkoong et~al., 2020]{namkoong2020off}
Namkoong, H., Keramati, R., Yadlowsky, S., and Brunskill, E. (2020).
\newblock Off-policy policy evaluation for sequential decisions under
  unobserved confounding.
\newblock {\em arXiv preprint arXiv:2003.05623}.

\bibitem[Oberst and Sontag, 2019]{oberst2019counterfactual}
Oberst, M. and Sontag, D. (2019).
\newblock Counterfactual off-policy evaluation with gumbel-max structural
  causal models.

\bibitem[Platt et~al., 2009]{timevarying2}
Platt, R.~W., Schisterman, E.~F., and Cole, S.~R. (2009).
\newblock {Time-modified Confounding}.
\newblock {\em American Journal of Epidemiology}, 170(6):687--694.

\bibitem[Shi et~al., 2021]{Shi2021AML}
Shi, C., Uehara, M., Huang, J., and Jiang, N. (2021).
\newblock A minimax learning approach to off-policy evaluation in confounded
  partially observable markov decision processes.
\newblock In {\em International Conference on Machine Learning}.

\bibitem[Steimle et~al., 2021]{steimle2021mmdp}
Steimle, L., Kaufman, D., and Denton, B. (2021).
\newblock Multi-model markov decision processes.
\newblock {\em IISE Transactions}, 53:1--39.

\bibitem[Tennenholtz et~al., 2019]{tennenholtz2019pomdp}
Tennenholtz, G., Mannor, S., and Shalit, U. (2019).
\newblock Off-policy evaluation in partially observable environments.
\newblock {\em CoRR}, abs/1909.03739.

\bibitem[Uehara et~al., 2022]{uehara2022futuredependent}
Uehara, M., Kiyohara, H., Bennett, A., Chernozhukov, V., Jiang, N., Kallus, N.,
  Shi, C., and Sun, W. (2022).
\newblock Future-dependent value-based off-policy evaluation in pomdps.

\bibitem[Wang et~al., 2020]{lingxiao2020dovi}
Wang, L., Yang, Z., and Wang, Z. (2020).
\newblock Provably efficient causal reinforcement learning with confounded
  observational data.
\newblock {\em CoRR}, abs/2006.12311.

\bibitem[Yin and Wang, 2020]{yin2020asymptotically}
Yin, M. and Wang, Y.-X. (2020).
\newblock Asymptotically efficient off-policy evaluation for tabular
  reinforcement learning.
\newblock In {\em International Conference on Artificial Intelligence and
  Statistics}, pages 3948--3958. PMLR.

\bibitem[Zhang and Bareinboim, 2016]{Zhang2016MDPUC}
Zhang, J. and Bareinboim, E. (2016).
\newblock Markov decision processes with unobserved confounders : A causal
  approach.

\bibitem[Zhang et~al., 2022]{zhang2022opelist}
Zhang, R., Zhang, X., Ni, C., and Wang, M. (2022).
\newblock Off-policy fitted q-evaluation with differentiable function
  approximators: Z-estimation and inference theory.
\newblock In Chaudhuri, K., Jegelka, S., Song, L., Szepesvari, C., Niu, G., and
  Sabato, S., editors, {\em Proceedings of the 39th International Conference on
  Machine Learning}, volume 162 of {\em Proceedings of Machine Learning
  Research}, pages 26713--26749. PMLR.

\end{thebibliography}

\newpage

\appendix
\onecolumn


\section{Experimental Details}
\label{sec:experimental-details}

\paragraph{Computing Infrastructure.} All numerical experiments were run on a single desktop computer with an Intel i9-13900K CPU, 128 gigabytes of RAM, and an NVIDIA RTX 3090 graphics card.

\paragraph{Estimating Policy Values for Global Confounders.} Due to computationally expensive operations needed to compute the exact policy value for confounders, we use estimates of the policy values instead. Namely, we get estimates $\hat{V}_1(s_0, u, \pi)$ for a policy $\pi$, and report $\sum_u P(u)\hat{V}_1(s_0, u, \pi)$. Computing the true values $V_1(s_0, u, \pi)$ is computationally far more expensive. The estimates $\hat{V}_1(s_0, u, \pi)$ are obtained using standard FQE applied to the standard, unconfounded MDP determined by confounder $u$.

\section{Lower Bounds for Memoryless Confounders}\label{sec:iid-lower-bounds}
We recall and prove Theorem~\ref{thm:iid-lower-bound}.

\iidLowerBound*

\begin{proof} Consider two confounded MDP environments $\M_1$ and $\M_2$. 

\paragraph{Environments.}

In both environments:
\begin{itemize}
	\item $\mathcal{S} = \{1,2\}$, $\U = \{1,2\}$, $\A = \{1,2\}$, horizon $H$.
	\item $r(s = 1) = 1$, $r(s = 2) = 0$.
\end{itemize}

For confounders:
\begin{itemize}
	\item $P_1(u = 1) = \frac{1}{2} - \varepsilon, P_1(u = 2) = \frac{1}{2} + \varepsilon$.
	\item $P_2(u = 1) = \frac{1}{2} + \varepsilon, P_2(u = 2) = \frac{1}{2} - \varepsilon$.
\end{itemize}

For full state transitions:
\begin{align*}
	&\prob_1(s'=1\mid s,u = 1,a = 1) = z, 	\prob_1(s'=1\mid s,u = 2,a = 1) = 1-z\\
	&\prob_1(s'=1\mid s,u = 1,a = 2) = z_1, 	\prob_1(s'=1\mid s,u = 2,a = 2) = z_2\\
	&\prob_2(s'=1\mid s,u = 1,a = 1) = z, 	\prob_2(s'=1\mid s,u = 2,a = 1) = 1-z\\
	&\prob_2(s'=1\mid s,u = 1,a = 2) = z_2, 	\prob_2(s'=1\mid s,u = 2,a = 2) = z_1
\end{align*}

Next, consider two behavior policies $\pi_{b_1}$ and $\pi_{b_2}$:
\begin{align*}
    &\pi_{b_1}(a = 1\mid s, u = 1) = \frac{1}{2} + \varepsilon, \ \ \pi_{b_1}(a = 1\mid s, u = 2) = \frac{1}{2} - \varepsilon\\
	&\pi_{b_2}(a = 1\mid s, u = 1) = \frac{1}{2} - \varepsilon, \ \ \pi_{b_2}(a = 1\mid s, u = 2) = \frac{1}{2} + \varepsilon
\end{align*}

And an evaluation policy $\pi_e$:
\begin{align*}
	\pi_e(s) = 1, \ \ \text{for } s = \{1,2\}.
\end{align*}
\paragraph{Data Collection. } Suppose we collect data using $\pi_{b_1}$ in $\M_1$ and using $\pi_{b_2}$ in $\M_2$. Notice that the sensitivity $\Gamma$ is given by
$$\Gamma = \left( \frac{\frac{1}{2}+\varepsilon}{\frac{1}{2}-\varepsilon}\right) \left(\frac{\frac{1}{2}+\varepsilon^2}{\frac{1}{2}-\varepsilon^2}\right)$$

\paragraph{Observations. } Note that in the limit, i.e. infinite data, the observed transition probabilities and policies are given by 
\begin{align*}
	\hat{\prob}_1(s',a\mid s) &= \sum_u P_1(u)\pi_{b_1}(a\mid s,u)\prob_1(s'\mid s,u,a),\\
	\pi_1(a\mid s) &= \sum_u P(u)\pi_1(a\mid s,u),\\
	\hat{\prob}_1(s'\mid s,a) &= \hat{\prob}_1(s',a\mid s) / \pi_1(a\mid s).
\end{align*}

One can then easily verify that for all $s,a,s'$, the observed transition probabilities will be equal:
\begin{align*}
	\hat{\prob}_1(s',a\mid s) = \hat{\prob}_2(s',a\mid s),
\end{align*}

For example, $\hat{\prob}_i(s'=1,a=1\mid s) = x(1-x)$ for $i=1,2$.

The state transition and the observed policy induced by the two policies in their corresponding environment are thus also equal:
\begin{align*}
	\pi_1(a\mid s) &= \pi_2(a\mid s),\\
	\hat{\prob}_1(s'\mid s,a) &= \hat{\prob}_2(s'\mid s,a).
\end{align*}

That means, no algorithm can distinguish the two environments based on the given two datasets.

\paragraph{Value under the evaluation policy. }
 Recall that at each step, we take action $1$. Note that the true marginalized state transitions will be different, which are what a confounder-oblivious policy will interact with:
\begin{align*}
	&\prob_1(s' = 1\mid s, a = 1) = \sum_u P_1(u)\prob_1(s' = 1\mid s, u, a = 1) = \left(\frac{1}{2}  + \varepsilon\right)(1-z) + \left(\frac{1}{2}  - \varepsilon\right)z\\
	&\prob_2(s' = 1\mid s, a = 1) = \sum_u P_2(u)\prob_2(s' = 1\mid s, u, a = 1) = \left(\frac{1}{2}  - \varepsilon\right)(1-z) + \left(\frac{1}{2}  + \varepsilon\right)z
\end{align*}

Note that $\prob^{\pi_e}_{i}(s'=1 \mid s) = \prob_i(s'=1 \mid s, a=1)$. Since state transitions are independent of the initial state, this is the same as generating a state independently at each step based on the action taken. Then under the evaluation policy $\pi_e(a=1 \mid s) =1$, the state $s=1$ is generated i.i.d. at each step with probability $p_i = \prob_i(s'=1 \mid s, a=1)$ in $\M_i$, while $s=2$ is generated with probability $1-p_i$. So, the reward of a trajectory is distributed according to $Bin(H, p_i)$, having an expected value of $V_1^{\pi_e}(\M_i) = Hp_i = H\prob_i(s'=1 \mid s, a=1)$.

\paragraph{Necessity of a Sensitivity Assumption}
Let $\varepsilon = \frac{1}{2}-\frac{1}{H^2}$, $z = 0$. We then have the following
\begin{align*}
	&V_1^{\pi_e}(\M_1) = H((1-\frac{1}{H^2})^2 + 1/H^4) = O(H)\\
	&V_1^{\pi_e}(\M_2) = 2H\cdot\frac{1}{H^2}(1-\frac{1}{H^2}) = O(\frac{1}{H}).
\end{align*}
From this example, we see that without information about $\Gamma$, no algorithm can universally give meaningful lower bounds for the true value function. One can compute that in this example, $\Gamma = \Theta(H^2)$.

\paragraph{Lower Bound on Value Estimation Under Sensitivity}
Let $\varepsilon$ be small and let $z = 0$. We then have the following.
\begin{align*}
	&V_1^{\pi_e}(\M_1) = H(\frac{1}{2} - \varepsilon)\\
	&V_1^{\pi_e}(\M_2) = H(\frac{1}{2} + \varepsilon)
\end{align*}

Note that $\Gamma = 1 + O(\varepsilon + \varepsilon^2) = 1 + O(\varepsilon)$ for small $\varepsilon$. Since any estimator will return the same value for both MDPs (because they are observationally indistinguishable under the behavior policy), any estimator will have a worst-case error of at least $\varepsilon H$. Thus, there does not exist a consistent estimator whenever $\Gamma > 1$.

\end{proof}
\newpage 

\section{FQE and Confounded FQE}
\label{sec:FQE_cFQE}
We describe the FQE and CFQE algorithms here, adapted for memoryless systems instead of merely stationary ones.

\subsection{FQE Algorithm}
\begin{algorithm}[h]
	\centering
	\caption{FQE}
	\begin{algorithmic}[1]
		\STATE \textbf{input: } evaluation policy $\pi_e$.
		\STATE \textbf{initialize:} $\hat{f}_{H+1}\leftarrow 0$.
		\FOR{$h = H, H-1,\ldots,1$}
		\item $\hat{f}_h(s,a)\leftarrow\E_{(s,a,s') \sim \mathcal{D}_{\pi_b}, h}\left[r_h(s,a)+\sum_{a'}\pi_{e, (h+1)}(a'\mid s')\hat{f}_{h+1}(s',a')\right], \forall s,a$.
		\ENDFOR
		\STATE \textbf{return: } $\sum_a \pi_{e,1}(a\mid s)\hat{f}_1(s,a)$ for $\forall s$.
	\end{algorithmic}
\end{algorithm}

\subsection{Confounded FQE Algorithm}

Confounded FQE (CFQE), proposed by \cite{bruns2021model}, provides an estimate for a lower bound by taking the characteristics of the data into account. Given infinite samples, this will actually be a lower bound, unlike the case of FQE. In particular, CFQE obtains an estimate for a lower bound by sequentially searching over the worst behavior policy consistent with the observations. 

Let $\hat{\pi}_{b, h}(a \mid s)$ and $\hat{\prob}_h(s'\mid s,a)$ be empirical estimates from finite data $\mathcal{D}_{\pi_b, h}$. Let $\prob^{\pi_b}_h(s'\mid s,a)$ be the limit of $\hat{\prob}_h(s' \mid  s,a)$ under infinite data. We then define the following uncertainty sets.

\begin{defini}[Valid Behavior Policy Set]
	\label{lemma:consistency}
	Under a memoryless confounder, for all $s,a,s'$, define $\mathcal{B}_{sa, h}$ to be the set of all $\pi(a\mid s,\cdot)$ that satisfy Assumption~\ref{assump:sensitivity} and the two equations below.
	\begin{gather*}
		\sum_{u\in\U} P_h(u \mid s)\pi_{b, h}(a\mid s,u) = \pi_{b, h}(a\mid s)\\
		\sum_{u\in\U}P_h(u \mid s)\pi_{b,h}(a\mid s,u)P(s'\mid s,u,a) = \pi_{b,h}(a\mid s)\prob^{\pi_b}_h(s'\mid s,a).
	\end{gather*}
	
	Now we define the following quantity using the posteriors $P_h^{\pi_b}(u \mid s,a)$, a confounded analog to inverse propensity weights.
	\begin{align*}
	    g_h(s,a,s') &:= \sum_u \left(\frac{P_h^{\pi_b}(u \mid s,a) \prob_h(s' \mid s,a,u)}{\hat{\prob}^{\pi_b}_h(s' \mid s,a)}\right) \frac{1}{\pi_{b,h}(a \mid s,u)}\\
	    &= \sum_u \left(\frac{P_h(u \mid s) \prob_h(s' \mid s,a,u)}{\hat{\prob}^{\pi_b}_h(s' \mid s,a)}\right) \frac{1}{\pi_{b,h}(a \mid s)}
	\end{align*}
	
    Theorem 1 and the discussion following that in \cite{bruns2021model} shows that we can reflect the same uncertainty using the set $\tilde{\B}_{sa, h}$ of possible values of $g_h(s,a, \cdot)$.
	\begin{align*}
	    \tilde{\B}_{sa, h} &:= \{ g_h(s,a,\cdot) \; \mid  \; \alpha_h(s,a)\leq\pi_{b,h}(a \mid  s)g_h(s, a, s')\leq \beta_h(s,a), \\ 
	& \qquad \qquad \sum_{s'} \pi_{b,h}(a \mid  s)g_h(s,a,s')\prob^{\pi_b}_h(s' \mid  s,a) = 1 \}
	\numberthis \label{eqn:tilde-b-sa-h}
	\end{align*}
\end{defini}

$\tilde{\B}_{sa,h}$ presents a reparameterization of the uncertainty that allows us to get rid of the explicit presence of the unknown variable $u$ while optimizing over the uncertainty set. Let $\hat{\B}_{sa, h}$ and $\hat{\tilde{\B}}_{sa, h}$ be the version of these sets determined by the point estimates $\hat{\pi}_{b}$ and $\hat{\prob}(s'\mid s,a)$ under finite data, instead of by their true values.

\begin{algorithm}[h]
	\centering
	\caption{Confounded FQE (adapted from~\cite{bruns2021model})}
	\begin{algorithmic}[1]
		\STATE \textbf{input: } evaluation policy $\pi_e$.
		\STATE \textbf{initialize:} $\hat{f}_{H+1}\leftarrow 0$.
		\FOR{$h = H, H-1,\ldots,1$}
		\STATE Compute
		\begin{align*}
		&\hat{f}_h(s,a) 
        := \\ &\min_{g_h(s,a,\cdot)\in\hat{\tilde{\B}}_{sa, h}} \E_{(s,a,s') \sim \mathcal{D}_{\pi_b, h}}\left[\hat{\pi}_{b,h}(a\mid s)g_h(s,a,s')
		\left(r_h(s,a)+\sum_{a'}\pi_{e,h}(a'\mid s')\hat{f}_{h+1}(s',a')\right) \right]
		\end{align*}
		\ENDFOR
		\STATE \textbf{return: } $\sum_a\pi_e(a\mid s)\hat{f}_1(s,a)$ for $\forall s$.
	\end{algorithmic}
\label{algo:cFQE}
\end{algorithm}






However, if a very poor estimate of $\hat{\pi_b}$ and $\hat{\prob}_{\pi_b}(s'\mid s,a)$ is collected (due to low $N(s,a)$ and/or $N(s)$), the estimated lower bound will be a lower bound on the output of FQE but not on the true value. To get a lower bound on the true value with probability at least $1-\delta$, we modify $\hat{\tilde{B}}_{sa,h}$ using error bounds $\err_\pi(N(s))$ and $\err_\prob(N(s,a))$ obtained using the Hoeffding inequality to get the following set.

\begin{align*}
    \{ g_h(s,a,\cdot) \; \mid & \; \alpha_h(s,a)\leq\pi_{b,h}(a \mid  s)g_h(s, a, s')\leq \beta_h(s,a), \\ 
	& \sum_{s'} \pi_{b,h}(a \mid  s)g_h(s,a,s')\prob^{\pi_b}_h(s' \mid  s,a) = 1\\ 
	& | \pi_{b,h}(s,a) - \hat{\pi}_{b,h}(s,a) |  \leq \err_\pi(N(s)),\\
	& | \prob^{\pi_b}_h(s' \mid  s,a) - \hat{\prob}_h(s' \mid s,a)| \leq \err_\prob(N(s,a)) \}
\end{align*}

Additionally, the observant reader will note that CFQE finds a different optimal $g_h$ for each time step. That is, it finds $H$ different functions $g_1(s,a,\cdot),...,g_H(s,a,\cdot) \in \tilde{\mathcal{B}}_{sa}$. If the transition structures were stationary, this does not leverage the stationarity. In that case, it is advisable to use our model-based method and its projected gradient descent version, as discussed in Section~\ref{sec:history-ind-confounders}.

\newpage

\section{FQE and CFQE Theoretical Results} \label{sec:proofs_FQE_cFQE}

\subsection{Proof of FQE Error Bounds, Theorem \ref{thm:err_FQE}}
We recall the theorem below.

\errFQE*
\begin{proof}
In the limit of an infinite amount of data, at every step of FQE, the update evaluates $\hat{f}_h(s,a)$ using:
\begin{align*}
	 \hat{f}_h(s,a) &=\argmin_{f_h(s,a)}\E_{(s,a,s') \sim \D_{\pi_b}^h}\left[\loss_{FQE}(f_h(s,a), s')\right]\\
	 &= \argmin_{f_h(s,a)}\sum_{u, s'}\prob^{\pi_b}(s', u \mid s,a) \loss_{FQE}(f_h(s,a), s')\\
	 &= \argmin_{f_h(s,a)}\sum_{u, s'}P_h^{\pi_b}(u \mid s, a) \prob_h(s'\mid s, u, a) \loss_{FQE}(f_h(s,a), s')\\
	&= \argmin_{f_h(s,a)}\sum_{u, s'}P_h(u \mid s)\frac{\pi_{b,h}(a\mid s,u)}{\pi_{b,h}(a\mid s)}\sum_{s'}\prob_h(s'\mid s, u, a) \loss_{FQE}(f_h(s,a), s')\\
\end{align*}
where $P_h^{\pi_b}(u \mid s,a)$ is the posterior on $u$ under $\pi_b$ and $$\loss_{FQE}(f_h(s,a), s') = \left(f_h(s,a) - r(s,a) - \sum_{a'}\pi_{e, h+1}(a'\mid s')\hat{f}_{h+1}(s', a')\right)^2$$
$\hat{f}_h(s,a)$ is then given by the following expression.
\begin{align*}
&\sum_{u, s'}P_h(u \mid s)\frac{\pi_{b, h}(a\mid s,u)}{\pi_{b,h}(a\mid s)}\prob_h(s'\mid s, u, a)
\left(r(s,a) + \sum_{a'}\pi_{e,h+1}(a'\mid s')\hat{f}_{h+1}(s', a')\right)\\
& = r(s,a) + \sum_{u,s'} P_h(u \mid s)\frac{\pi_{b,h}(a\mid s,u)}{\pi_{b,h}(a\mid s)}\prob_h(s'\mid s,u,a)\sum_{a'}\pi_{e, h+1}(a'\mid s')\hat{f}_{h+1}(s', a') 
\end{align*}

\paragraph{True marginalized transition structure.} Note that under any confounding-unaware policy $\pi_e$, the induced transition structure $\prob^{\pi_e}_h(s' \mid s)$ is determined by the marginalized transition dynamics $\prob_h(s' \mid s, a) := \sum_u P_h(u \mid s) \prob_h(s' \mid s, a, u)$. This is clear from the computation below.
\begin{align*}
    \prob^{\pi_e}_h(s' \mid s) &= \sum_{u,a} \pi_{e,h}(a \mid s) P_h(u \mid s) \prob_h(s' \mid s, a, u)\\
    &= \sum_a \pi_{e,h}(a \mid s) \left(\sum_u P_h(u \mid s) \prob_h(s' \mid s, a, u)\right) &= \sum_a \pi_{e,h}(a \mid s) \prob_h(s' \mid s,a)
\end{align*}

\paragraph{Bounding $\hat{f}_h(s,a)$.} By Assumption~\ref{assump:sensitivity} and the computations above, we can bound $\hat{f}_h(s,a)$ by:
\begin{align*}
	\hat{f}_h(s,a) &\leq r(s,a) + \frac{1}{\alpha_h(s,a)}\sum_{s'}\prob_h(s'\mid s,a)\sum_{a'}\pi_{e,h+1}(a'\mid s')\hat{f}_{h+1}(s',a'),\\
	\hat{f}_h(s,a) &\geq r(s,a) + \frac{1}{\beta_h(s,a)}\sum_{s'}\prob_h(s'\mid s,a)\sum_{a'}\pi_{e, h+1}(a'\mid s')\hat{f}_{h+1}(s',a').
\end{align*}

The ultimate goal is to bound $V_1^{\pi_e}(s) - \sum_a\pi_{e,1}(a \mid s)\hat{f}_1(s,a)$, which is given by $\sum_a\pi_{e,1}(a\mid s)\left(Q_1^{\pi_e}(s,a) - \hat{f}_1(s,a)\right)$. So, we consider the error of $\hat{f}_h(s,a)$ at every step, given by $\err_h(s,a) := Q_h^{\pi_e}(s,a) - \hat{f}_h(s,a)$. We will use the following relation.
\begin{align*}
    Q^{\pi_e}_h(s,a) &= r(s,a) + \sum_{u, s'} P_{h}(u \mid s)\prob_h(s' \mid s,a,u)V^{\pi_e}_{h+1}(s')\\
    &= r(s,a) + \sum_{s'} \prob_h(s' \mid s,a)V^{\pi_e}_{h+1}(s') \numberthis \label{eqn:q-next-term}
\end{align*}

At $h = H$, by definition
\begin{align*}
	\hat{f}_{H}(s,a) = r(s,a) = Q_H^{\pi_e}(s,a).
\end{align*}
Thus, we get that $\err_H(s,a) = 0$ for all $s,a$. Let $\beta_{max} := \max_{s,a,h} \beta_h(s,a)$ and let $\alpha_{min} = \min_{s,a,h} \alpha_h(s,a)$.

For step $H-1$,
\begin{align*}
	\err_{H-1}(s,a) &\leq \sum_{s'}\prob_{H-1}(s'\mid s,a)V_H^{\pi_e}(s') - \frac{1}{\beta_H(s,a)}\sum_{s'}\prob_{H-1}(s'\mid s,a)\sum_{a'}\pi_{e, H}(a'\mid s')\hat{f}_H(s',a')\\
	& = (1-\frac{1}{\beta_H(s,a)})\sum_{s'}\prob_{H-1}(s'\mid s,a)V_H^{\pi_e}(s')\\
	& \leq \left(1-\frac{1}{\beta_{max}}\right)  \sum_{s'}\prob_{H-1}(s'\mid s,a)  \left(1-\frac{1}{\beta_{max}}\right)
\end{align*}

By induction, we will show that for all $h$, the following holds.
\begin{align*}
	\err_h(s,a) \leq H-h - \sum_{i=1}^{H-h} \frac{1}{\beta_{max}^i}
\end{align*}

We know this for $h= H-1$. For the induction step, we show this for $h-1$ given the statement for $h$ using the following computation.
\begin{align*}
	\err_{h-1} &\leq \sum_{s'}\prob_{h-1}(s'\mid s,a)V_{h}^{\pi_e}(s') - \frac{1}{\beta_h(s,a)}\sum_{s'}\prob_{h-1}(s'\mid s,a)\sum_{a'}\pi_{e, h}(a'\mid s')\hat{f}_h(s',a')\\
	&\leq \sum_{s'}\prob_{h-1}(s'\mid s,a)V_h^{\pi_e}(s')\\
	& \qquad +\frac{1}{\beta_h(s,a)}\sum_{s'}\prob_{h-1}(s'\mid s,a)\sum_{a'}\pi_{e, h}(a'\mid s')(\err_h(s,a) - Q^{\pi_e}_h(s,a))\\
	& = (1-\frac{1}{\beta_h(s,a)})\sum_{s'}\prob_{h-1}(s'\mid s,a)V_h^{\pi_e}(s') + \frac{1}{\beta_h(s,a)}\err_h(s,a)\\
	& \leq \left(1-\frac{1}{\beta_{max}}\right)\sum_{s'}\prob_{h-1}(s'\mid s,a)(H-h+1) + +\frac{1}{\beta_h(s,a)}\err_h(s,a)\\
	&\leq \left(1-\frac{1}{\beta_{max}}\right)(H-h+1) + \frac{1}{\beta_{max}}\left(H-h - \sum_{i=1}^{H-h} \frac{1}{\beta_{max}^i}\right)\\
	&= H-h+1 - \sum_{i=1}^{H-h+1} \frac{1}{\beta_{max}^i}
\end{align*}

Thus, the result holds by induction, giving us the following final bound.
\begin{align*}
    Q_1^{\pi_e}(s,a) - \hat{f}_1(s,a) \leq H - 1 - \sum_{i=1}^{H-1} \frac{1}{\beta_{max}^i} = H - \frac{1 - \frac{1}{\beta_{max}^H}}{1 - \frac{1}{\beta_{max}}}
\end{align*}
Similarly, we have the lower bound below:
\begin{align*}
	Q_1^{\pi_e}(s,a) - \hat{f}_1(s,a) \geq H- 1 - \sum_{i=1}^{H-1} \frac{1}{\alpha_{min}^i} = H - \frac{1 - \frac{1}{\alpha_{min}^H}}{1 - \frac{1}{\alpha_{min}}}
\end{align*}

Recall that $\alpha_h(s,a) = \pi_{b,h}(a\mid s) + \frac{1}{\Gamma}(1-\pi_{b,h}(a\mid s))$ and $\beta_h(s,a) = \Gamma + \pi_{b,h}(a\mid s)(1-\Gamma)$. So, $\alpha_h(s,a) \geq \frac{1}{\Gamma}$ and $\beta_h(s,a) \leq \Gamma$ for all $s,a,h$. In particular, $\alpha_{min} \geq \frac{1}{\Gamma} = \frac{1}{1+\varepsilon}$ and $\beta_{max} \leq \Gamma = 1 + \varepsilon$.

In particular, we have the following bound.
$$\frac{1 + \varepsilon H - (1+\varepsilon)^H}{\varepsilon} \leq V_1^{\pi_e}(s) - \sum_a\pi_{e,1}(a \mid s)\hat{f}_1(s,a) \leq \frac{\frac{1}{(1+\varepsilon)^H} - (1-\varepsilon H)}{\varepsilon}$$

We know that we have the following bounds for small $\varepsilon$: $(1+\varepsilon)^H \geq 1+\varepsilon H + O(\varepsilon H^2)$ and $\frac{1}{(1+\varepsilon)^H} \leq 1 - \varepsilon H + O(\varepsilon H^2)$, giving us the following bound for small $\varepsilon$.

$$| V_1^{\pi_e}(s) - \sum_a\pi_{e,1}(a \mid s)\hat{f}_1(s,a) | \leq O(\varepsilon H^2)$$

\end{proof}

\begin{remark}\label{rem:more-conservative-FQE}
For any $\varepsilon$, the lower bound $\frac{1 + \varepsilon H - (1+\varepsilon)^H}{\varepsilon} \leq -\frac{\varepsilon H^2}{2}$, and thus we need to be at least as conservative as subtracting $\frac{\epsilon H^2}{2}$ from the FQE estimate to get a lower bound, if not more. This remark will be used in Section~\ref{sec:experiments}.
\end{remark}

We further remark in Section~\ref{sec:history-ind-confounders} that the bound in the theorem is data-oblivious, being only dependent on the confounding sensitivity model and horizon, and note that the other two methods below (CFQE and MB) both produce bounds at least as tight as this one. 

\subsection{Proof of CFQE Error Bounds, Theorem \ref{thm:err_CFQE}}

We recall the theorem below.
\errCFQE*

\begin{proof} In the limit of infinite data, the true value of $g_h$ always lies in the set $\tilde{B}_{sa,h}$ by the sensitivity assumption. So, CFQE trivially gives a lower bound on the true value function in the limit of infinite data. We now give bounds on its error below.

We define the error term at each step by $\text{err}_h(s,a):=\max_{s,a} Q_h^{\pi_e}(s,a) - \hat{f}_h(s,a)$, where here $f$ is generated by CFQE. 
We claim that 
\begin{align}
    \text{err}_h(s,a) = (H-h) - \frac{\alpha_{min}}{\beta_{max}} - \cdots - \frac{\alpha_{min}^{H-h}}{\beta_{max}^{H-h}} \label{CFQE_recursion}.
\end{align}
Then, the following bound follows for any $\varepsilon$.
\begin{align*}
    V_1^{\pi_e}(s) - \sum_a\pi_e(a\mid s)\hat{f}_1(s,a) &\leq H-1 - \frac{\alpha_{min}}{\beta_{max}} - \cdots - \frac{\alpha_{min}^{H-1}}{\beta_{max}^{H-1}}\\
    &\leq H- \sum_{i=0}^{H-1}\frac{1}{(1+\varepsilon)^{2i}}\\ 
    &\leq 2\varepsilon H^2
\end{align*}
This completes the proof since by induction, $\hat{f}_h(s,a) \leq Q_(s,a)$ for all $h$, and so we already have the lower bound $V_1^{\pi_e}(s) - \sum_a\pi_e(a\mid s)\hat{f}_1(s,a) \geq 0$. Thus, it remains to prove \ref{CFQE_recursion}.

At step $H$ of CFQE, we have
\begin{align*}
    \hat{f}_H(s,a) = r(s,a).
\end{align*}
Then as in the previous proof, the error at step $H$ is given by $\text{err}_H(s,a) = 0$.

At step $h+1$, suppose $\text{err}_{h+1}(s,a) = (H-h-1) - \frac{\alpha_{min}}{\beta_{max}} - \cdots - \frac{\alpha_{min}^{H-h-1}}{\beta_{max}^{H-h-1}}$. Then for step $h$, we have the following chain of inequalities for $\err_h(s,a) = Q_h^{\pi_e}(s,a) - \hat{f}_{h}(s,a)$.
\begin{align*}
    &\sum_{s'}\prob_h(s'\mid s,a) V_{h+1}^{\pi_e}(s') \\
    & \qquad -\sum_{u,s'}\prob^{\pi_b}(s', u \mid s,a)\pi_{b,h}(a\mid s)g_h(s,a,s')\sum_{a'}\pi_{e, h+1}(a'\mid s')\hat{f}_{h+1}(s',a')\\
    &=\sum_{s'}\prob_h(s'\mid s,a) V_{h+1}^{\pi_e}(s') \\
    & \qquad -\sum_{u,s'}P_h^{\pi_b}(u \mid s, a)\prob_h(s'\mid s,u,a)\pi_{b,h}(a\mid s)g_h(s,a,s')\sum_{a'}\pi_{e, h+1}(a'\mid s')\hat{f}_{h+1}(s',a')\\
    &=\sum_{s'}\prob_h(s'\mid s,a) V_{h+1}^{\pi_e}(s') \\
    & \qquad -\sum_{u,s'}P_h(u \mid s)\frac{\pi_{b, h}(a\mid s,u)}{\pi_{b,h}(a\mid s)}\prob_h(s'\mid s,u,a)\pi_{b,h}(a\mid s)g_h(s,a,s')\sum_{a'}\pi_{e, h+1}(a'\mid s')\hat{f}_{h+1}(s',a')\\
    \leq & \sum_{s'}\prob_h(s'\mid s,a) V_{h+1}^{\pi_e}(s')\\
    & \qquad -\frac{\alpha_h(s,a)}{\beta_h(s,a)}\sum_{u,s'}P_h(u \mid s)\prob_h(s'\mid s,u,a)\sum_{a'}\pi_{e, h+1}(a'\mid s')(\text{err}_{h+1}-Q_{h+1}^{\pi_e}(s,a))\\
    = &\left(1-\frac{\alpha_h(s,a)}{\beta_h(s,a)}\right)\sum_{s'}\prob_h(s'\mid s,a)V_{h+1}^{\pi_e}(s') + \frac{\alpha_h(s,a)}{\beta_h(s,a)}\text{err}_{h+1}\\
    \leq & \left(1-\frac{\alpha_h(s,a)}{\beta_h(s,a)}\right)(H-h) + \frac{\alpha_h(s,a)}{\beta_h(s,a)}\left(H-h-1 - \frac{\alpha_{min}}{\beta_{max}} - \cdots - \frac{\alpha_{min}^{H-h-1}}{\beta_{max}^{H-h-1}}\right)\\
    \leq & \left(1-\frac{\alpha_{min}}{\beta_{max}}\right)(H-h) + \frac{\alpha_{min}}{\beta_{max}}\left(H-h-1 - \frac{\alpha_{min}}{\beta_{max}} - \cdots - \frac{\alpha_{min}^{H-h-1}}{\beta_{max}^{H-h-1}}\right)\\
    = & H-h - \frac{\alpha_{min}}{\beta_{max}} - \cdots - \frac{\alpha_{min}^{H-h}}{\beta_{max}^{H-h}}.
\end{align*}
 The first expression comes from using equation~\ref{eqn:q-next-term} as well as explicitly computing the $\argmin$ involved in CFQE in the limit of infinite data, analogous to the proof of Theorem~\ref{thm:err_FQE} above. In the first inequality, we use the facts that $\frac{\pi_{b, h}(a\mid s,u)}{\pi_{b,h}(a\mid s)} \geq \frac{1}{\beta_h(s,a)}$ and $\pi_{b,h}(a\mid s)g_h(s,a,s') \leq \alpha_h(s,a)$. In the equality after that, we use the definition of $\err_h(s,a)$. In the second inequality, we use equation~\ref{eqn:q-next-term}. 

Thus, $\text{err}_h(s,a) = H-h - \frac{\alpha_{min}}{\beta_{max}} - \cdots - \frac{\alpha_{min}^{H-h}}{\beta_{max}^{H-h}}$ and \eqref{CFQE_recursion} is proved.
\end{proof}

\newpage 

\section{Model-Based Method} \label{sec:mbproofs}
\subsection{General Memoryless Version}

Notice that the model-based method leverages the fact that the \textit{marginalized transition dynamics} are stationary. In particular, we only need $\prob_h(s' \mid s,a,u)$ and $P_h(u \mid s)$ to be stationary, since this makes the marginalized transition structure stationary. In that light, we discuss here the version of the method where $\pi_b$ and $\pi_e$ are non-stationary.

Consider the observed transition structure at timestep $h$, given by $\hat{\prob}_h^{\pi_b}$, denote by $\hat{\pi}_{b,h}$ the observed behavior policy and by $\hat{\alpha}_h(s,a)$ and $\hat{\beta}_h(s,a)$ the versions of $\alpha_h(s,a)$ and $\beta_h(s,a)$ computed using $\hat{\pi}_{b,h}$. Let $\pi_e$ also be non-stationary. For the model to improve over CFQE, we still need $P_h(u \mid s)$ to be the same for all timesteps $h$, so that the marginalized transition structure is stationary.

Define $\mathcal{G}_h := \{\prob: \hat{\alpha}_h(s,a)\hat{\prob}_h^{\pi_b}(s' \mid s,a) \leq \prob(s' \mid s,a) \leq  \hat{\beta}_h(s,a)\hat{\prob}_h^{\pi_b}(s' \mid s,a), \ \forall s, a, s'\}$

Note that in the limit of infinite data, the true marginalized transition structure satisfies the following relation for each $h$.
$${\alpha}_h(s,a){\prob}_h^{\pi_b}(s' \mid s,a) \leq \prob(s' \mid s,a) \leq  {\beta}_h(s,a){\prob}_h^{\pi_b}(s' \mid s,a),\ \forall s,a,s'$$
So, in the limit of infinite data, the true marginalized structure lies in $\cap_h \mathcal{G}_h$. We then define this to be $\mathcal{G} := \cap_h \mathcal{G}_h$ even in the finite sample case.

With this as our $\mathcal{G}$, we have the same program for obtaining a model-based lower bound on the value function.
\begin{gather} \label{opt-hist-independent}
	\min_{V_1(s_0), V_2, \ldots, V_H, V_{H+1} = 0,\prob} V_1(s_0) \\ 
	\;\; \text{s.t. } \prob\in\mathcal{G}, \;\;
	\sum_{s'}\prob(s'\mid s,a) = 1 \;\; \forall s,a.\notag \\
	V_h(s) = \pi_{e,h}(\cdot\mid s)^T(R_s + \prob_s V_{h+1}(\cdot)) \;\; \forall h \in \{1,...,H\}, s \notag
\end{gather}

\begin{remark}
Note that assuming stationarity of $\pi_b$ allows us to use data across timesteps to estimate a universal $\hat{\prob}^{\pi_b}$, which helps with finite samples in practice.
\end{remark}

We present our proofs below for stationary $\pi_b$ and $\pi_e$ for clarity, noting that they can easily be modified for general memoryless $\pi_b$ and $\pi_e$ in a similar vein as the proofs for CFQE.

\subsection{Confidence Interval for State Transitions}
We can use the following lemma to modify our definition of the set $\mathcal{G}$ to use confidence intervals instead of point estimates. We show that both methods converge to the lower bound obtained with infinite data. However, using Hoeffding confidence intervals to modify $\mathcal{G}$ ensures that for any amount of data, the output of the model-based method is a true lower bound on the value function. In the version that uses point estimates of $\pi_b$ and $\prob^{\pi_b}$, we only get estimates of a lower bound with finite data. 

Let $N(s)$ and $N(s,a)$ be the counts of $s$ and $(s,a)$ in the data.

\begin{lemma} [Confidence Interval for State Transitions] \label{ineq:P_set} 
For $\Delta_\pi := \sqrt{\frac{1}{2N^*(s)}\log(\frac{2SA}{\delta_1})}$, 
$\Delta_\prob := \sqrt{\frac{1}{2N^*(s,a)}\log(\frac{2S^2A}{\delta_2})}$, 
bounds $\alpha_{\delta_1}(s,a) := 1/\Gamma - (1-1/\Gamma)(\hat{\pi}_{b}(a|s) + \Delta_\pi)$ and $\beta_{\delta_1}(s,a) := \Gamma + (1-\Gamma)(\hat{\pi}_{b}(a|s) + \Delta_\pi)$, 
and $N^*(s) = -\operatorname{logmeanexp}(\{-N(s_1), ...\})$, 
$\prob(s' | s,a)$ falls between $\alpha_{\delta_1}(s,a)(\hat{\prob}^{\pi_b}(s' | s,a)-\Delta_\prob)$ and $\beta_{\delta_1}(s,a)(\hat{\prob}^{\pi_b}(s' | s,a)+\Delta_\prob)$ 
with probability at least $1- \delta_1 - \delta_2$.
\end{lemma}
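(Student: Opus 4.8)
The plan is to reduce the claim to two independent Hoeffding concentration statements—one for the marginalized behavior policy and one for the behavior-induced transition kernel—and then propagate these through the monotone sensitivity map that sends $(\pi_b(a\mid s),\prob^{\pi_b}(s'\mid s,a))$ to the uncertainty interval for the true marginalized transition $\prob(s'\mid s,a)$. First I would recall from Assumption~\ref{assump:sensitivity} and the construction of $\mathcal{G}$ that, in the population limit, $\alpha(s,a)\,\prob^{\pi_b}(s'\mid s,a)\le \prob(s'\mid s,a)\le \beta(s,a)\,\prob^{\pi_b}(s'\mid s,a)$, where $\alpha(s,a)=\tfrac1\Gamma+(1-\tfrac1\Gamma)\pi_b(a\mid s)$ and $\beta(s,a)=\Gamma+(1-\Gamma)\pi_b(a\mid s)$ are determined by the \emph{true} quantities $\pi_b(a\mid s)$ and $\prob^{\pi_b}(s'\mid s,a)$. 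The lemma is exactly the statement that replacing these true quantities by suitably inflated empirical estimates preserves this sandwich with high probability.

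Second I would establish the two concentration events. For a fixed $(s,a)$, $\hat\pi_b(a\mid s)$ is an empirical mean over the $N(s)$ visits to $s$, so Hoeffding gives $|\hat\pi_b(a\mid s)-\pi_b(a\mid s)|\le \Delta_\pi$ off an event of probability $2e^{-2N(s)\Delta_\pi^2}$; likewise $|\hat\prob^{\pi_b}(s'\mid s,a)-\prob^{\pi_b}(s'\mid s,a)|\le\Delta_\prob$ off an event of probability $2e^{-2N(s,a)\Delta_\prob^2}$. Union-bounding over the $SA$ policy coordinates and the $S^2A$ transition coordinates produces the factors $2SA$ and $2S^2A$ inside the logarithms defining $\Delta_\pi$ and $\Delta_\prob$. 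The effective count $N^*=-\operatorname{logmeanexp}(\{-N(s_1),\dots\})$ plays the role of a single worst-case count making one uniform width valid across all states.

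Third I would push both events through the sensitivity map using monotonicity: $\alpha(s,a)$ is nondecreasing in $\pi_b(a\mid s)$ (coefficient $1-\tfrac1\Gamma\ge0$) while $\beta(s,a)$ is nonincreasing (coefficient $1-\Gamma\le0$), so on the good event the true $\alpha(s,a)$ is bounded below by $\alpha_{\delta_1}(s,a)$ obtained by substituting $\hat\pi_b\mp\Delta_\pi$, and $\beta(s,a)$ is bounded above by $\beta_{\delta_1}(s,a)$. Combining with $\prob^{\pi_b}\ge\hat\prob^{\pi_b}-\Delta_\prob$ and $\prob^{\pi_b}\le\hat\prob^{\pi_b}+\Delta_\prob$, and multiplying the matched one-sided bounds (after clipping each factor to be nonnegative, so that a product of two lower bounds is a lower bound and likewise for upper bounds), yields $\alpha_{\delta_1}(s,a)(\hat\prob^{\pi_b}(s'\mid s,a)-\Delta_\prob)\le\prob(s'\mid s,a)\le\beta_{\delta_1}(s,a)(\hat\prob^{\pi_b}(s'\mid s,a)+\Delta_\prob)$; a final union bound over the two events gives probability $1-\delta_1-\delta_2$.

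The main obstacle is \emph{not} the concentration but the bookkeeping in the last step: the sandwich is a \emph{product} of two intervals, so I must track the sign of each factor and clip where necessary—a coefficient such as $\tfrac1\Gamma-(1-\tfrac1\Gamma)(\hat\pi_b+\Delta_\pi)$, or the estimated transition minus its width, can turn negative for small counts—and I must keep the monotonicity directions straight so each inflation lands on the correct side. A secondary subtlety is justifying a single width uniformly across states: the clean rigorous device is the smallest per-state count, for which $\sum_{s,a}2e^{-2N(s)\Delta_\pi^2}\le\delta_1$ holds, and one must check that the softmin surrogate $N^*$ (which satisfies $N_{\min}\le N^*\le N_{\min}+\log S$) does not make the width too small to support the union bound.
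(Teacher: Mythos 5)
Your proposal follows essentially the same route as the paper's own proof: the population sandwich $\alpha(s,a)\,\prob^{\pi_b}(s'\mid s,a)\le\prob(s'\mid s,a)\le\beta(s,a)\,\prob^{\pi_b}(s'\mid s,a)$ derived from Assumption~\ref{assump:sensitivity}, two Hoeffding bounds with union bounds over the $SA$ and $S^2A$ coordinates (yielding the $\operatorname{logmeanexp}$ effective counts), and substitution of inflated empirical quantities; the concentration and union-bound steps match the paper's almost verbatim.

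The value of your version is precisely the bookkeeping that the paper compresses into ``Combining our inequalities \dots we have our result,'' and your care surfaces a real discrepancy. Carrying out your monotone substitution, an upper bound on $\beta(s,a)=\Gamma+(1-\Gamma)\pi_b(a\mid s)$ requires plugging in a \emph{lower} bound on $\pi_b(a\mid s)$, since the coefficient $1-\Gamma$ is nonpositive; this gives $\beta(s,a)\le\Gamma+(1-\Gamma)\bigl(\hat{\pi}_b(a\mid s)-\Delta_\pi\bigr)$ on the good event. The lemma's stated $\beta_{\delta_1}$ instead uses $\hat{\pi}_b(a\mid s)+\Delta_\pi$, which is the \emph{minimum} of the $\beta$-formula over the confidence interval and hence lies below the true $\beta(s,a)$ on the good event; it cannot be produced by this argument, and when the sensitivity bound is tight one can construct explicit counterexamples to the stated upper bound (e.g.\ $\Gamma=2$, $\pi_b=0.5$, $\hat{\pi}_b=0.45$, $\Delta_\pi=0.1$). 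So your proof establishes the corrected statement, with $-\Delta_\pi$ inside $\beta_{\delta_1}$; the $+\Delta_\pi$ appears to be a sign slip that the paper's own terse final step never confronts. On the other side, the paper's $\alpha_{\delta_1}$ also differs from your natural substitution, but harmlessly: $\alpha_{\delta_1}\le 1/\Gamma\le\alpha(s,a)$ unconditionally, so it remains a valid, merely looser, lower factor. Your two remaining caveats are likewise gaps in the paper's write-up rather than in your argument: the product step does need the clipping you describe (if both $\alpha_{\delta_1}$ and $\hat{\prob}^{\pi_b}(s'\mid s,a)-\Delta_\prob$ are negative, a product of lower bounds is not a lower bound), and the $\operatorname{logmeanexp}$ count, which exceeds the minimum count, needs the one-line justification that $t\mapsto\tfrac1t\operatorname{logmeanexp}(tx)$ is nondecreasing (applicable here since $2\Delta_\pi^2\le 1$), or can simply be replaced by the minimum count as you suggest.
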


\begin{proof}
We attempt to use the data collected by $\pi_b$ to construct a confidence interval for $\hat{\prob}(s'\mid s,a)$ that also takes into account estimation error in the bounds $\alpha_{\delta_1}(s,a)$ and $\beta_{\delta_1}(s,a)$.
We consider below empirical estimation for $\prob(s'\mid s,a)$ and $\pi_b(a|s)$ using data collected by $\pi_b$:
\begin{align*}
	\hat{\prob}^{\pi_b}(s'\mid s,a) = \frac{N(s,a,s')}{N(s,a)}, \;\;\;
	\hat{\pi_b}(a|s) = \frac{N(s,a)}{N(s)}
\end{align*}
where $N(s,a,s'):=\sum_{i=1}^{n}\mathbb{1}_{\{s_i=s,a_i=a,s_i'=s'\}}$, $N(s,a):=\sum_{i=1}^{n}\mathbb{1}_{\{s_i=s,a_i=a\}}$, and $N(s):=\sum_{i=1}^{n}\mathbb{1}_{\{s_i=s\}}$.

Note that $\prob(s'\mid s,a) = \sum_u P(u \mid s)\prob(s'\mid s,u,a)$, while $\prob^{\pi_b}(s'\mid s,a) = \sum_u \prob^{\pi_b}(u\mid s,a)\prob(s'\mid s,u,a)$. In $\pi_b$, $u$ and $a$ are dependent.

We also have:
\begin{align*}
	\prob^{\pi_b}(s'\mid s,a) = \sum_u \prob^{\pi_b}(s',u\mid s,a) &= \sum_u \prob^{\pi_b}(u\mid s,a)\prob(s'\mid s,u,a)\\
	&= \sum_u P(u \mid s)\frac{\pi_b(a\mid s,u)}{\pi_b(a\mid s)}\prob(s'\mid s,u,a).
\end{align*}
By Assumption~\ref{assump:sensitivity}, 
\begin{align}
	\frac{1}{\beta(s,a)}\prob(s'\mid s,a) \leq \prob^{\pi_b}(s'\mid s,a) \leq \frac{1}{\alpha(s,a)}\prob(s'\mid s,a) \\
	\alpha(s,a)\prob^{\pi_b}(s'\mid s,a) \leq \prob(s'\mid s,a) \leq \beta(s,a)\prob^{\pi_b}(s'\mid s,a).  \label{eqn:Pbound}
\end{align}

We claim that by Hoeffding's inequality and the union bound, with probability at least $1-\delta_1-\delta_2$,
\begin{align*}
	\left|\hat{\pi_b}(a \mid s) - \pi_b(s \mid a)\right| &\leq \sqrt{\frac{1}{2N^*(s)}\log\left(\frac{2SA}{\delta_1}\right)} = \Delta_\pi\\
	\left|\hat{\prob}^{\pi_b}(s'\mid s,a) - \prob^{\pi_b}(s'\mid s,a)\right| &\leq \sqrt{\frac{1}{2N^*(s,a)}\log\left(\frac{2S^2A}{\delta_2}\right)} = \Delta_\prob \numberthis \label{eqn:Hoeffding-bounds}
\end{align*}

where $N^*(s) = -\operatorname{logmeanexp}(\{-N(s_1), ...\})$ and 
$N^*(s,a) = -\operatorname{logmeanexp}(\{-N(s_1,a_1), ...\})$. 

We illustrate this by showing the result for $\hat{\prob}^{\pi_b}(s'\mid s,a)$, and the other case follows analogously.

\begin{align*}
    \prob(\exists s', s, a \;\; s.t. \; \;|\hat{\prob}^{\pi_b}(s'\mid s,a) - \prob^{\pi_b}(s'\mid s,a)| \leq \epsilon)  
    &\leq \sum_{s',s,a} \prob(|\hat{\prob}^{\pi_b}(s'\mid s,a) - \prob^{\pi_b}(s'\mid s,a)| \leq \epsilon) \\
    &\leq \sum_{s',s,a} 2 \exp\{ -2\epsilon^2 N(s,a) \} \\
    &= S \sum_{s,a} 2 \exp\{ -2\epsilon^2 N(s,a) \} \\
    &\leq 2 S^2 A \exp\{ -2\epsilon^2 N^*(s,a) \} 
    = \delta
\end{align*}

for some $N^*$ that satisfies the last inequality above. Various choices for $N^*$ exist. Perhaps the most obvious choice is the $\min$ function, though it can be shown that $-\operatorname{logmeanexp}(-x)$ is optimal, as:

\begin{align*}
    x^* \;\; s.t. \;\; \sum_n e^{X_n} = ne^{x^*} \iff e^{x^*} = \frac{1}{n}\sum_n e^{X_n} \iff \operatorname{logmeanexp}(X_1, ..., X_n) = x^*
\end{align*}

The $\operatorname{logmeanexp}$ function returns a value between the maximum and the mean, and in our case, we use it to obtain a soft approximation to the minimum that provides a less conservative bound than using the minimum of counts over all states (or states and actions). 

Combining our inequalities~\ref{eqn:Hoeffding-bounds} and ~\ref{eqn:Pbound} with the definitions of $\alpha(s,a)$ and $\beta(s,a)$, we have our result.
\end{proof}

\subsection{Solving~\eqref{opt} Gives Better Lower Bound than Confounded FQE, Proof of Theorem \ref{thm:model_base}}

Recall Theorem~\ref{thm:model_base} below.

\errModelBased*

We consider the infinite sample setting, which means:
\begin{align*}
	\mathcal{G} = \{\prob: \alpha(s,a) \leq \frac{\prob(s'\mid s,a)}{\prob^{\pi_b}(s'\mid s,a)}\leq \beta(s,a), \text{ for }\forall s,a,s'\}
\end{align*}

The key to the proof is the observation that we can always get a valid $g_h$ from a valid $\prob \in \mathcal{G}$ by setting $g_h(s,a,s') := \frac{\prob(s'\mid s,a)}{\prob^{\pi_b}(s'\mid s,a)\pi_b(a\mid s)}$, which formalizes the intuition that the uncertainty set $\mathcal{G}$ for $\prob$ is tighter. Since we are in the stationary case, we drop all unnecessary $h$ in subscripts.

\begin{proof}
We denote the solution of \eqref{opt} in the infinite-sample setting by $\tilde{V}_1,\ldots,\tilde{V}_H,\tilde{V}_{H+1}, \tilde{\prob}$. We will show that $\tilde{V}_1$ gives a lower bound on the true value function that is larger than the lower bound given by CFQE. That is, if the iterates of CFQE are $\hat{f}_h(s,a)$, then $\sum_{a} \pi_e(a \mid s)\hat{f}_1(s,a) \leq \tilde{V}_1(s) \leq V_1^{\pi_e}(s)$. Combining this with Theorem~\ref{thm:err_CFQE} gives us the whole theorem.

First note that in the infinite data setting, the marginalized transition kernel lies in $\mathcal{G}$, so the optimization problem minimizes $V_1$ over values of $\prob$ that include the true marginalized transition structure. Thus, we trivially get that $\tilde{V}_1(s) \leq V_1^{\pi_e}(s)$.

We now prove that $V_h(s)\geq \sum_a\pi_e(a\mid s)\hat{f}_h(s,a)$ holds for all $h$ by induction. Note that the argument below also works for the finite-sample case by merely replacing every quantity associated with $\pi_b$ (such as $\prob^{\pi_b}$) by its finite sample version.

For $h = H+1$:
\begin{align*}
	\tilde{V}_{H+1}(s) = 0 \geq 0 = \sum_a \pi_e(a \mid s) \hat{f}_{H+1}(s,a)
\end{align*}

Suppose we have $\tilde{V}_{h+1}(s) \geq \sum_a\pi_e(a\mid s)\hat{f}_{h+1}(s,a)$.
Then for step $h$:
\begin{align*}
	\tilde{V}_{h}(s) = \sum_a\pi_e(a\mid s) \left[R(s,a) + \sum_{s'}\tilde{\prob}(s'\mid s,a)\tilde{V}_{h+1}(s')\right]
\end{align*}

\begin{align*}
	&\hat{f}_h(s,a)
    = \min_{g\in\tilde{\B}_{sa}} \left(\sum_{u,s'}  \prob^{\pi_b}(s', u \mid s,a) CFQE(\hat{f}_{h+1}, g)\right)\\
	&\leq \sum_{u,s'} \prob^{\pi_b}(s', u \mid s,a)\frac{\tilde{\prob}(s'\mid s,a)}{\prob^{\pi_b}(s'\mid s,a)}\left[R(s,a) + \sum_{a'}\pi_e(a'\mid s')\hat{f}_{h+1}(s',a')\right]\\
	& = \sum_{s'}\tilde{\prob}(s'\mid s,a)\left[R(s,a) + \sum_{a'}\pi_e(a'\mid s')\hat{f}_{h+1}(s',a')\right] \leq R(s,a) + \sum_{s'} \tilde{\prob}(s'\mid s,a) \tilde{V}_{h+1}(s').
\end{align*}
where $$CFQE(\hat{f}_{h+1}, g) := \left(
    \sum_{s'}\pi_b(a\mid s)g(s,a,s')\left[R(s,a) + \sum_{a'}\pi_e(a'\mid s')\hat{f}_{h+1}(s',a')\right]\right)$$
The first inequality in above is achieved by setting $g(s,a,s') = \frac{\tilde{\prob}(s'\mid s,a)}{\prob^{\pi_b}(s'\mid s,a)\pi_b(a\mid s)}$.
It's easy to check that by this choice, $g(s,a,\cdot)\in\tilde{\B}_{sa}$ by \eqref{eqn:Pbound}.
The second inequality is by the induction hypothesis.
Thus, we have $\tilde{V}_h(s) \geq \sum_a \pi_e(a\mid s)\hat{f}_h(s,a)$.

By induction, $\tilde{V}_1(s) \geq \sum_a \pi_e(a\mid s)\hat{f}_1(s,a)$, which means the lower bound provided by \eqref{opt} is always no worse than confounded FQE (Alg. \ref{algo:cFQE}).

\end{proof}

\subsection{Worst-Case Error for the Model-Based Method, An Independent Alternative Proof}
In this section, we give an alternative proof of the fact that the output of \eqref{opt} satisfies $|V_1^{\pi_e}(s) - \tilde{V}_1| = O(\varepsilon H^2)$ for $\Gamma = 1 + \varepsilon$ without comparing to CFQE. Again, recall that we consider the infinite sample setting, which means the following.
\begin{align*}
\mathcal{G} = \{\prob: \alpha(s,a) \leq \frac{\prob(s'\mid s,a)}{\prob^{\pi_b}(s'\mid s,a)}\leq \beta(s,a), \text{ for }\forall s,a,s'\}
\end{align*}

\begin{proof}
	By definition, we know $V_H^{\pi_e}(s) = \tilde{V}_H(s)$ for all $s$. We define $\delta_h = \max_s |V_h^{\pi_e}(s) - \tilde{V}_h(s)|$. Note that $\delta_H =0$.
	Next, consider $|V_h^{\pi_e}(s) - \tilde{V}_h(s)|$:
	\begin{align*}
		\delta_h &:=\max_s |V_h^{\pi_e}(s) - \tilde{V}_h(s)| \\
        &= \max_s |\sum_a\pi_e(a\mid s)\sum_{s'}\prob(s'\mid s,a)V_{h+1}(s') - \sum_a\pi_e(a\mid s)\sum_{s'}\tilde{\prob}(s'\mid s,a)\tilde{V}_{h+1}(s')|\\
		& \leq \max_s|\sum_a\pi_e(a\mid s)\sum_{s'}\prob(s'\mid s,a)V_{h+1}(s') - \sum_a\pi_e(a\mid s)\sum_{s'}\tilde{\prob}(s'\mid s,a)V_{h+1}(s')|\\
		& \qquad + \max_s|\sum_a\pi_e(a\mid s)\sum_{s'}\tilde{\prob}(s'\mid s,a)V_{h+1}(s') - \sum_a\pi_e(a\mid s)\sum_{s'}\tilde{\prob}(s'\mid s,a)\tilde{V}_{h+1}(s')|\\
		&= \max_s|\sum_a\pi_e(a\mid s)\sum_{s'}\prob(s'\mid s,a)V_{h+1}(s') - \sum_a\pi_e(a\mid s)\sum_{s'}\tilde{\prob}(s'\mid s,a)V_{h+1}(s')|\\
		& \qquad + \delta_{h+1}\\
		&\leq (\beta_{\text{max}} - \alpha_{\text{min}})(H-h) + \delta_{h+1},
	\end{align*}
	where $$\beta_{\text{max}} := \max_{s,a} \Gamma + \pi_b(a\mid s)(1-\Gamma) \leq 1+\varepsilon$$ and $$\alpha_{\text{min}}:=\min_{\pi_b(a\mid s)}\frac{\varepsilon}{1+\varepsilon}\pi_b(a\mid s)+\frac{1}{1+\varepsilon} \geq \frac{1}{1+\varepsilon}$$
	It is easy to check $\beta_{\text{max}}-\alpha_{\text{min}} = \varepsilon + \frac{\varepsilon}{1+\varepsilon} = O(\varepsilon)$ (ignoring higher order terms of $\varepsilon$).
	So, we get that $\delta_h\leq  O(\varepsilon(H-h)) + \delta_{h+1}$ from $h = 1,\ldots,H$. So, we have that
	$$\delta_1\leq O(\varepsilon H^2)$$
	
\end{proof}
	
\subsection{Consistency of the Model-Based Method}

We first prove this extremely elementary and useful geometric lemma.

\begin{lemma}\label{lem:hausdorff-minima}
If a function $f: X \to \mathbb{R}$ on a Hausdorff metric space X is continuous (resp. Lipschitz), then $f_{\min}: Comp(X) \to \mathbb{R}$ given by $f_{\min}(K) := \inf_{x \in K} f(x)$ is also continuous (resp. Lipschitz) in the Hausdorff metric on the space $Comp(X)$ of compact subsets of $X$. The same holds for $f_{\max}(K) := \sup_{x \in K} f(x)$.
\end{lemma}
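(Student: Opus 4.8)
The plan is to show that the map $f_{\min}$ is continuous (resp. Lipschitz) on $Comp(X)$ with the Hausdorff metric $d_H$, by directly estimating $|f_{\min}(K) - f_{\min}(K')|$ in terms of $d_H(K, K')$. The key idea is that the Hausdorff distance controls how far apart the two compact sets are, and continuity/Lipschitzness of $f$ then transfers this control to the infima. Recall $d_H(K, K') = \max\{\sup_{x \in K}\inf_{y \in K'} d(x,y),\ \sup_{y \in K'}\inf_{x \in K} d(x,y)\}$.

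First I would handle the Lipschitz case, which is cleaner and implies the idea behind the continuous case. Suppose $f$ is $L$-Lipschitz. Let $\epsilon > 0$ and pick $x_0 \in K$ with $f(x_0) \le f_{\min}(K) + \epsilon$ (using compactness, one can in fact attain the infimum, so take $\epsilon = 0$). Since $d_H(K,K') \geq \inf_{y \in K'} d(x_0, y)$, and $K'$ is compact, there exists $y_0 \in K'$ with $d(x_0, y_0) \le d_H(K,K')$. Then
\begin{align*}
f_{\min}(K') \le f(y_0) \le f(x_0) + L\, d(x_0, y_0) \le f_{\min}(K) + L\, d_H(K,K').
\end{align*}
By symmetry (swapping the roles of $K$ and $K'$), I obtain $f_{\min}(K) \le f_{\min}(K') + L\, d_H(K,K')$, and combining the two gives $|f_{\min}(K) - f_{\min}(K')| \le L\, d_H(K,K')$, which is exactly Lipschitzness of $f_{\min}$.

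Next I would treat the merely-continuous case. Here I cannot use a uniform constant $L$, but I can exploit that on any fixed compact set $f$ is uniformly continuous. The argument is the same in structure: given $K$ and a target accuracy $\eta$, find a $\delta$ witnessing uniform continuity of $f$ on a compact neighborhood, then argue that if $d_H(K,K') < \delta$ every minimizing point of one set has a nearby point in the other whose $f$-value is within $\eta$. The case of $f_{\max}$ follows immediately by applying the result for $f_{\min}$ to the function $-f$, since $\sup_{x\in K} f(x) = -\inf_{x \in K}(-f(x))$ and negation preserves both continuity and the Lipschitz constant.

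The main obstacle I anticipate is purely bookkeeping in the continuous (non-Lipschitz) case: one must be careful that the uniform continuity modulus is taken over a large enough compact set containing both $K$ and $K'$ (e.g. a closed $d_H$-ball's union), so that the chosen $\delta$ works simultaneously for points of both sets. The Lipschitz case has no such subtlety since the constant $L$ is global. Since the paper's actual application (the set $\mathcal{G}$ of transition kernels, with $V_1$ a continuous function of $\prob$) lives in a fixed bounded ambient space, this technicality is routine and the estimate above carries through directly.
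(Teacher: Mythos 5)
Your Lipschitz argument is exactly the paper's proof: pick the minimizer $x_0$ in one compact set, use compactness of the other set to find $y_0$ with $d(x_0,y_0) \le d_{Haus}(K,K')$, bound $f_{\min}(K') \le f(y_0) \le f_{\min}(K) + L\, d_{Haus}(K,K')$, and symmetrize. The paper proves only this Lipschitz case in detail and dismisses the continuous and $f_{\max}$ cases as ``similar,'' so your proposal is correct and takes essentially the same route (your uniform-continuity sketch is the only spot needing care in a general metric space, where a compact set need not have a compact neighborhood, but a sequential-compactness argument fixes this and the caveat applies equally to the paper's own terse treatment).
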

\begin{proof}
We prove this for $\alpha$-Lipschitz $f$ and $f_{\min}$, the other cases are similar. Consider compact sets $K_1$ and $K_2$, so that the infima are attained at $x_i \in K_i$. This means that $f_{\min}(K_i) = f(x_i)$. Since $K_j$ are closed, we have points $u_j$ that attain the closest distance from $x_i$ to $K_j$. Combining these, we know that  $$d(x_i, K_j) \leq d_{Haus}(K_i, K_j)$$ and $$d(x_i, u_j) = d(x_i, K_j) := inf_{u \in K_j} d(x_i, u)$$\\
Using the Lipschitzness of $f$, $$|f(x_i) - f(u_j)| \leq \alpha d(x_i, u_j) \leq \alpha d_{Haus}(K_i, K_j)$$
Also, $f(u_j) \geq f(x_j)$ by definition of $x_j$, since $u_j \in K_j$ and $x_j$ minimizes $f$ over $K_j$. So, 
$$f(x_i) \geq f(u_j) - \alpha d_{Haus}(K_i, K_j) \geq f(x_j) - \alpha d_{Haus}(K_i, K_j)$$ This holds for $(i,j) = (1,2), (2,1)$, so we get that $$|f_{\min}(K_i) - f_{\min}(K_j)| = |f(x_i) - f(x_j)| \leq \alpha d_{Haus}(K_i, K_j)$$
\end{proof} 
We use Lemma \ref{lem:hausdorff-minima} along with the fact that the objective function is Lipschitz. We will prove it for the version of the Model-Based method incorporating Hoeffding-based bounds (which are incorporated to give finite sample guarantees). The proof for the version with point estimates of the relevant quantities is in fact easier and subsumed by this by setting $\Delta_\pi=\Delta_\prob=0$. We first need the lemma below, which will we later combine with Lemma \ref{lem:hausdorff-minima}.

\begin{lemma} \label{lem:feasible-set-hausdorff-bound}
Let the feasible region given by the values of $P_{\pi_b}, \alpha(s, a)$ and $\beta(s,a)$ in the limit of infinite data be $F$. Let the feasible region obtained using our finite sample estimates in Lemma \ref{ineq:P_set} be $\hat{F}$. Then there is a constant $K$ depending on $\Gamma$ so that
$$d_{Haus}(F, \hat{F}) \leq 2S^2A\Gamma\left(|\prob^{\pi_b}(s' \mid s, a) - \hat{\prob}^{\pi_b}(s' \mid s, a)| + |\pi_b(s \mid a) - \hat{\pi_b}(s \mid a)| + \Delta_\prob + \Delta_\pi\right)$$
\end{lemma}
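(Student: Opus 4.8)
The plan is to observe that both feasible regions are axis-aligned boxes (hyperrectangles) in the $S^2A$-dimensional space of candidate marginalized kernels $\prob(s'\mid s,a)$, cut out by coordinate-wise interval constraints (together with the shared normalization $\sum_{s'}\prob(s'\mid s,a)=1$, which is identical in both cases and so contributes no discrepancy). Reading off the endpoints: for the infinite-data region $F$ they are $L(s,a,s') := \alpha(s,a)\prob^{\pi_b}(s'\mid s,a)$ and $U(s,a,s') := \beta(s,a)\prob^{\pi_b}(s'\mid s,a)$, while for the finite-sample region $\hat F$ they are $\hat L(s,a,s') := \alpha_{\delta_1}(s,a)(\hat{\prob}^{\pi_b}(s'\mid s,a)-\Delta_\prob)$ and $\hat U(s,a,s') := \beta_{\delta_1}(s,a)(\hat{\prob}^{\pi_b}(s'\mid s,a)+\Delta_\prob)$, exactly as furnished by Lemma~\ref{ineq:P_set}. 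The lemma then splits into a geometric step reducing $d_{Haus}(F,\hat F)$ to the endpoint discrepancies, and an algebraic step bounding those discrepancies.

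For the geometric step, I would use that the nearest-point map from $p\in F$ into $\hat F$ is coordinate-wise clamping into $[\hat L,\hat U]$, so the displacement in coordinate $(s,a,s')$ is at most $\max(|L-\hat L|,\,|U-\hat U|)$. Summing over the $S^2A$ coordinates bounds the one-sided Hausdorff distance, and swapping the roles of $F$ and $\hat F$ gives $d_{Haus}(F,\hat F)\le \sum_{s,a,s'}\max(|L-\hat L|,\,|U-\hat U|)$. For the algebraic step, I bound each endpoint gap. Since $\beta(s,a)=\Gamma+\pi_b(a\mid s)(1-\Gamma)$ and $\beta_{\delta_1}(s,a)=\Gamma+(1-\Gamma)(\hat{\pi}_b(a\mid s)+\Delta_\pi)$, we get $|\beta-\beta_{\delta_1}|\le (\Gamma-1)(|\pi_b-\hat\pi_b|+\Delta_\pi)$, with $\beta,\beta_{\delta_1}\le\Gamma$ and $\hat{\prob}^{\pi_b}+\Delta_\prob\le 1+\Delta_\prob\le 2$. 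Applying $|ab-cd|\le |a|\,|b-d|+|d|\,|a-c|$ with $a=\beta$, $b=\prob^{\pi_b}$, $c=\beta_{\delta_1}$, $d=\hat{\prob}^{\pi_b}+\Delta_\prob$ yields $|U-\hat U|\le \Gamma(|\prob^{\pi_b}-\hat\prob^{\pi_b}|+\Delta_\prob)+2\Gamma(|\pi_b-\hat\pi_b|+\Delta_\pi)\le 2\Gamma(|\prob^{\pi_b}-\hat\prob^{\pi_b}|+|\pi_b-\hat\pi_b|+\Delta_\prob+\Delta_\pi)$, and the same computation with $\alpha,\alpha_{\delta_1}$ and the $-\Delta_\prob$ shift bounds $|L-\hat L|$ by the same quantity. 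Plugging into the geometric bound (interpreting the differences as their suprema over $s,a,s'$) gives $d_{Haus}(F,\hat F)\le 2S^2A\,\Gamma\big(|\prob^{\pi_b}-\hat\prob^{\pi_b}|+|\pi_b-\hat\pi_b|+\Delta_\prob+\Delta_\pi\big)$, as claimed.

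The main obstacle I anticipate is making the geometric reduction fully rigorous in the presence of the common normalization (simplex) constraint, since in general $d_{Haus}(B\cap A,\hat B\cap A)$ is \emph{not} controlled by $d_{Haus}(B,\hat B)$ when intersecting with an affine set $A$. The cleanest fix is to argue that the coordinate-wise clamping map, post-composed with projection onto the fixed simplex slice, remains Lipschitz with constant controlled by the endpoint gaps whenever the boxes are uniformly close and non-degenerate; alternatively, one bounds the pure box Hausdorff distance and notes that the intersections cannot separate by more than the box discrepancy because the normalization hyperplane is shared and the relevant directions are exactly the clamped coordinates. Everything else is routine bookkeeping, using $\Gamma-1\le\Gamma$ and the $[0,1]$-range of the probability estimates; the resulting bound is what feeds into Lemma~\ref{lem:hausdorff-minima} to conclude consistency of the model-based estimate once the Hoeffding radii $\Delta_\prob,\Delta_\pi$ and the empirical errors vanish almost surely.
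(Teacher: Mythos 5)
Your proposal is correct and takes essentially the same route as the paper: the paper's proof likewise reduces $d_{Haus}(F,\hat F)$ to coordinate-wise interval-endpoint discrepancies (its below/inside/above case analysis is exactly your clamping argument), bounds each discrepancy with the same product-difference algebra to obtain the per-coordinate factor $2\Gamma$, and multiplies by $S^2A$ via the triangle inequality. The simplex-constraint obstacle you flag is in fact glossed over in the paper, which simply writes $d(\hat w, F) = d\bigl(\hat w, \prod_{s',s,a} I_{s,a}\bigr)$ --- i.e.\ it measures distance to the bare product of intervals --- so your explicit acknowledgment of that point makes your treatment, if anything, slightly more careful than the paper's own.
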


Notice that this also applies to the case of replacing the Hoeffding-based intervals by the point estimates, since that merely involves replacing $\Delta_\pi$ and/or $\Delta_\prob$ by $0$.

\begin{proof}

 Notice that the condition $$\sum_{s'} \prob(s' \mid s, a) = 1$$ is identical across both sets, so the difference is only induced by the infinite-sample $\mathcal{G}_\infty$ and the finite sample $\mathcal{G}$. That is, for $\prob \in \mathcal{G}_\infty$, we have the following
$$\alpha(s,a)\prob^{\pi_b}(s' | s,a) \leq \prob(s' \mid s,a) \leq \beta(s,a)\prob^{\pi_b}(s' | s,a)$$ 
Let's call the interval above $I_{s,a}$. For $\prob \in \mathcal{G}$, we instead have
$$\alpha_{\delta_1}(s,a)(\hat{\prob}^{\pi_b}(s' | s,a)-\Delta_\prob) \leq \prob(s' \mid s, a) \leq \beta_{\delta_1}(s,a)(\hat{\prob}^{\pi_b}(s' | s,a)+\Delta_\prob)$$
We can check that using the inequalities above, the following hold for $\hat{w} \in \hat{F}$:

\begin{itemize}
    \item If $\hat{w}_{s', s,a} < \alpha(s,a)\prob^{\pi_b}(s' \mid s, a)$, then 
\begin{align*}
    &d(\hat{w}_{s', s,a}, I_{s,a}) \\
    &\leq \alpha(s, a)|\prob^{\pi_b}(s' \mid s, a) - \hat{\prob}^{\pi_b}(s' \mid s, a)| + \alpha(s,a)\Delta_\prob \\
    &\;\;\;\;\;\;\;\;\;\;
    +(\hat{\prob}^{\pi_b}(s' \mid s, a)+\Delta_\prob)|\alpha(s,a) - \alpha_{\delta_1}(S, a)|\\
    & \leq |\prob^{\pi_b}(s' \mid s, a) - \hat{\prob}^{\pi_b}(s' \mid s, a)| + \Delta_\prob + 2\left(1-\frac{1}{\Gamma}\right)(|\pi_b(s \mid a) - \hat{\pi_b}(s \mid a)| + \Delta_\pi)\\
    & \leq |\prob^{\pi_b}(s' \mid s, a) - \hat{\prob}^{\pi_b}(s' \mid s, a)| + K_1|\pi_b(s \mid a) - \hat{\pi_b}(s \mid a)| + \Delta_\prob +K_1\Delta_\pi
\end{align*}
where $K_1 = 2\left(1-\frac{1}{\Gamma}\right)$.
    \item If $\hat{w}_{s', s,a} > \beta(s,a)\prob^{\pi_b}(s' \mid s, a)$ then we get terms using $\beta$, so that we have 
$$d(\hat{w}_{s', s,a}, I_{s,a}) \leq K_3 |\prob^{\pi_b}(s' \mid s, a) - \hat{\prob}^{\pi_b}(s' \mid s, a)| + K_2|\pi_b(s \mid a) - \hat{\pi_b}(s \mid a)| + K_3\Delta_\prob +K_2\Delta_\pi$$ 
with $K_2 = 2(\Gamma - 1)$ and $K_3 = \Gamma$
\item In the third case, $\hat{w}_{s', s,a} \in I_{s,a}$, so $d(\hat{w}_{s', s,a}, I_{s,a}) = 0$
\end{itemize}
Combining these and noting that $2\Gamma \geq K_1, K_2, K_3$, we have that
$$d(\hat{w}_{s', s,a}, I_{s,a}) \leq 2\Gamma(|\prob^{\pi_b}(s' \mid s, a) - \hat{\prob}^{\pi_b}(s' \mid s, a)| + |\pi_b(s \mid a) - \hat{\pi_b}(s \mid a)| + \Delta_\prob +\Delta_\pi)$$
This means that by the triangle inequality, for any matrix/vector norm on $\mathbb{R}^{S^2A}$,
\begin{align*}
    d(\hat{w}, F) & = d(\hat{w}, \prod_{s',s,a} I_{s,a}) \leq S^2A \max_{s', s,a} d(\hat{w}_{s', s,a}, I_{s,a})\\
    & \leq 2S^2A\Gamma\left(|\prob^{\pi_b}(s' \mid s, a) - \hat{\prob}^{\pi_b}(s' \mid s, a)| + |\pi_b(s \mid a) - \hat{\pi_b}(s \mid a)| + \Delta_\prob +\Delta_\pi\right)
\end{align*}

Since $\hat{w} \in \hat{F}$ is arbitrary, 
$$d_{Haus}(F, \hat{F}) \leq 2S^2A\Gamma\left(|\prob^{\pi_b}(s' \mid s, a) - \hat{\prob}^{\pi_b}(s' \mid s, a)| + |\pi_b(s \mid a) - \hat{\pi_b}(s \mid a)| + \Delta_\prob +\Delta_\pi\right)$$
\end{proof}

We finally recall and prove our consistency result below.

\ModelBasedConsistent*

\begin{proof} To remind the reader of the precise sense in which "limit of infinite data" is used here, we mean that the behavior policy is exploratory, so that every $s,a$ has a non-zero probability of occurring in the trajectory. In particular $N(s), N(s,a) \to \infty$ as we observe infinitely many trajectories.

We know that our objective function is a polynomial in the entries of $w = \prob(\cdot \mid \cdot, \cdot)$. Since the entries of $w$ lie in $[0,1]$, the domain of our multivariate polynomial is compact and it is thus Lipschitz, since it is $C^1$. Let its Lipschitz constant be $\alpha$. Call the minimum in the infinite data case $\tilde{V}_1$ and the one in the finite sample case $\hat{V_1}$. Combining Lemma~\ref{lem:feasible-set-hausdorff-bound} with Lemma \ref{lem:hausdorff-minima}, we get that

\begin{align*}
    |\tilde{V}_1 - \hat{V_1}| 
    &\leq \alpha d_{Haus}(F, \hat{F}) \\ 
    &\leq 2\alpha S^2A \Gamma \left(|\prob^{\pi_b}(s' \mid s, a) - \hat{\prob}^{\pi_b}(s' \mid s, a)| + |\pi_b(s \mid a) - \hat{\pi_b}(s \mid a)| + \Delta_\prob +\Delta_\pi\right)
\end{align*} 

Note that as $N(s), N(s,a) \to \infty$, $|\prob^{\pi_b}(s' \mid s, a) - \hat{\prob}^{\pi_b}(s' \mid s, a)|, |\pi_b(s \mid a) - \hat{\pi_b}(s \mid a)| \to 0$ almost surely, and $\Delta_\prob, \Delta_\pi \to 0$. This implies that as $N(s), N(s,a) \to \infty$, $|\tilde{V}_1 - \hat{V_1}|  \to 0$ almost surely.

\end{proof}

\newpage

\section{Variations of The Model-Based Method} \label{sec:mb-varations}

\subsection{Relaxation of \eqref{opt}}

Recall that in \eqref{opt}, we solved a non-convex optimization problem with $H \cdot |\Sset| + 1$ Bellman backup constraints. If one were to not require the $\prob(s'|s, a)$ to stay constant at every step, one could sequentially solve $H \cdot |\Sset| + 1$ convex programs to obtain a lower bound that is looser than one obtained by \eqref{opt}. $\mathcal{G}_h$ is as defined in Appendix~\ref{sec:mbproofs}. Computationally, to compute policy values for each starting state, confounded FQE (Alg.~\ref{algo:cFQE}) solves $(H+1) \cdot |\Sset| \cdot |\A|$ linear programs, while Alg.~\ref{algo:MBRelax} below solves $(H+1) \cdot |\Sset|$ convex programs.

\begin{algorithm}[H]
	\centering
	\caption{Relaxation of Model-Based Method}
	\begin{algorithmic}[1]
		\STATE \textbf{input: } evaluation policy $\pi_e$, starting state $s_0$.
		\STATE \textbf{initialize:} $V_{H+1}\leftarrow 0$.
		\FOR{$h = H, H-1,\ldots,1$}
		\STATE 
		\begin{align*}
		V_h(s) &:= \min_{\prob_h \in \mathcal{G}_h} \sum_{a}\pi_{e,h}(a\mid s)\left[R(s,a) + \sum_{s'}\prob_h(s'\mid s,a)V_{h+1}(s')\right] \\
		&= \min_{\prob_h \in \mathcal{G}_h} \pi_{e,h}(\cdot\mid s)^T(R_s + \prob_{s,h} V_{h+1}(\cdot)).
		\end{align*}
		\ENDFOR
		\STATE \textbf{return} $V_1(s_0)$
	\end{algorithmic}
\label{algo:MBRelax}
\end{algorithm}

Notice that this is similar to confounded FQE (Alg.~\ref{algo:cFQE}) in that it optimizes over $\prob_h(s'|s, a)$ at \textit{each step}, instead of requiring it to stay constant for all $h=1,...H$. Consider the bijection $g_h(s,a,s') \leftrightarrow \frac{\prob_h(s' \mid s,a)}{\hat{\prob}^{\pi_b}_h(s' \mid s,a)\hat{\pi}_{b, h}(a \mid s)}$ between the uncertainty sets $\prod_h \tilde{B}_{sa, h}$ and $\prod_h \mathcal{G}_h$ for $g_1, \dots g_H$ and $\prob_1, \dots, \prob_H$ respectively. It is easy to check using the definitions of the sets that this is truly a bijection. We can see using this bijection and with an argument similar to the proof of Theorem~\ref{thm:model_base}, that the value estimates from this relaxation and CFQE are equal at each step. By the remark made in the proof of Theorem~\ref{thm:model_base}, this also holds for the finite sample versions.

\subsection{Projected Gradient Descent}

In a similar vein to Algorithm 4.1 in \cite{kallus2020confounding}, we provide a method to efficiently compute the lower bound with projected gradient descent.

Given an estimate of $\prob$, the corresponding estimate of $V_1(s_0)$ can be obtained by iteratively performing $H+1$ Bellman backups, each of which is dependent on $\prob$ itself. Each Bellman backup is obtained by translations and matrix multiplications of $\prob$. As such, $V_1(s_0)$ is differentiable with respect to $\prob$, and the gradient $\nabla_\prob V_1(s_0)$ can be easily obtained with modern autograd tools.

\begin{algorithm}[H]
	\centering
	\caption{Projected Gradient Descent for Model-Based Lower Bound}
	\begin{algorithmic}[1]
		\STATE \textbf{input: } evaluation policy $\pi_e$, empirical estimate of $\prob$, decaying learning rate $\eta_t$, starting state $s_0$.
		\STATE \textbf{initialize:} $V_{H+1}\leftarrow 0$.
	    \FOR{$t = 1, ..., N$}
		\FOR{$h = H, H-1,\ldots,1$}
		\STATE 
		\begin{align*}
	    V_h(s) &:= \sum_{a}\pi_e(a\mid s)\left[R(s,a) + \sum_{s'}\prob(s'\mid s,a)V_{h+1}(s')\right] \\
		&= \pi_e(\cdot\mid s)^T(R_s + \prob_s V_{h+1}(\cdot)).
		\end{align*}
		\ENDFOR
		\STATE $\prob \leftarrow \operatorname{Proj}_\mathcal{G}(\prob - \eta_t \nabla_{\prob} V_1(s_0))$
		\ENDFOR
		\STATE \textbf{return} the lowest $V_1(s_0)$ encountered.
	\end{algorithmic}
\label{algo:projOPE}
\end{algorithm}

\newpage

\section{FQE Does Not Work for Confounders with Memory}\label{sec:hist-dep-lower-bound}
We recall Theorem~\ref{thm:hist-dependent-lower-bound} below.

\HistDependentLowerBound*

\begin{proof}
We demonstrate that there exists a confounded MDP with non-memoryless confounders and a behavior policy $\pi_e$ where even under the limit of infinite data, if the estimate obtained using FQE is $\hat{f}_1(s,a)$ and the true value function is $V_1^{\pi_e}(s)$, then $V_1^{\pi_e}(s) - \sum_{a} \pi_e(a \mid s) \hat{f}_1(s,a) = O(H)$.

\paragraph{Environment:} 
\begin{itemize}
    \item Consider $S = \{s_1,s_2\}$, $A = \{a_1,a_2\}$, $U = \{u_0, u_{a_1}\}$, horizon $H$.
    \item Rewards: $r(s=s_1, a_1) = 1$, otherwise $0$ reward.
    \item Starting state: Let the starting state be $s_1$.
\end{itemize}

\textbf{Confounder distribution:} The confounder's distribution starts at $u_{a_1}$ and is induced by confounder transitions with memory. Specifically, consider the following confounder transitions.
\begin{itemize}
    \item If $u=u_{a_1}$ and the current action is $a_1$, stay in $u_{a_1}$.
    \item In all other cases, transition to $u_0$.
\end{itemize}

\textbf{State transitions:} $\prob(s_1 \mid s, a_1, u_{a_1}) = 1$ for any $s$, and for all other $s,a,u$, we have that $\prob(s_1 \mid s, a, u) = 1/H$ and $\prob(s_2 \mid s,a,u) = 1-1/H$

\textbf{Behavior policy:} Let $\pi_b(a \mid s, u) = \frac{1}{2}$ for any $s,a,u$.

\textbf{Evaluation policy:} Let $\pi_e(a_1 \mid s) = 1$.

\textbf{Policy values:} Notice that in the evaluation policy, we are always in $u_{a_1}$ and always take action $a_1$, so we are always in state $s_1$. Thus the reward at each step is $1$ and $V_1^{\pi_e}(s_1) = H$.

\textbf{FQE Output:} First note that to iterate through FQE for $\pi_e$, we need only compute $\hat{f}_h(s, a_1)$ for all $s,h$. Notice that under the behaviour policy, at timestep $h$, $\prob_{\pi_b, h}(u_{a_1}) = \frac{1}{2^{h-1}}$ and $\prob_{\pi_b, h}(u_0) = 1 - \frac{1}{2^{h-1}}$. We start with $\hat{f}_{H+1}(s,a) := 0$ and the update rule is given by

\begin{align*}
    \hat{f}_h(s,a) &= \E_{(s,a,s') \in \mathcal{D}_{\pi_b, h}} [r(s,a) + \sum_{a'} \pi_e(a' \mid s') \hat{f}_{h+1}(s',a')]\\
    &= \E_{(s,a,s') \in \mathcal{D}_{\pi_b, h}} [r(s,a) + \hat{f}_{h+1}(s',a_1)]\\
    &= r(s,a) + \sum_{s', u} \prob_{\pi_b,h}(s', u \mid s,a)\hat{f}_{h+1}(s',a_1)\\
    &= r(s,a) + \sum_{s', u}\prob(s' \mid s,a,u)\prob_{\pi_b, h}(u \mid s,a) \hat{f}_{h+1}(s',a_1)
\end{align*}

Note that for $u=u_0, u_{a_1}$ 
$$\prob_{\pi_b, h}(u \mid s,a) = \frac{\prob_{\pi_b, h}(s,a \mid u)\prob_{\pi_b, h}(u)}{\prob_{\pi_b, h}(s,a \mid u_0)\prob_{\pi_b, h}(u_0)+\prob_{\pi_b, h}(s,a \mid u_{a_1})\prob_{\pi_b, h}(u_{a_1})}$$

For $s=s_2$, $\prob(s_2, a \mid u_{a_1}) = 0$, so $\prob(u_{a_1} \mid s_2, a) = 0$. On the other hand, for $s_1, a_1$, we have the following.
$$\prob_{\pi_b, h}(u_{a_1} \mid s_1,a_1) = \frac{\frac{1}{2^{h-1}}}{\frac{1}{2H}\left(1-\frac{1}{2^{h-1}}\right) + \frac{1}{2^{h-1}}} \leq \min\left(1,\frac{2H}{2^{h-1}}\right)$$
Thus, $\prob_{\pi_b, h}(u_0 \mid s_1,a_1) \geq 1 - \frac{2H}{2^{h-1}}$

Thus, for $s_1, a_1$, the update rule is given by 
\begin{align*}
    \hat{f}_h(s_1,a_1) &= 1 + \frac{1}{H}\prob_{\pi_b, h}(u_0 \mid s_1, a_1)\hat{f}_{h+1}(s_1, a_1) + \left(1 - \frac{1}{H} \right)\prob_{\pi_b, h}(u_0 \mid s_1, a_1)\hat{f}_{h+1}(s_2, a_1) \\
    & \qquad + \prob_{\pi_b, h}(u_{a_1} \mid s_1, a_1)\hat{f}_{h+1}(s_1, a_1)\\
    &\leq 1 + \left(\frac{1}{H} + \min\left(1,\frac{2H}{2^{h-1}}\right)\right)\hat{f}_{h+1}(s_1, a_1) + \left(1 - \frac{1}{H}\right)\hat{f}_{h+1}(s_2, a_1)
\end{align*}
For $s_2$, it is given by
\begin{align*}
    \hat{f}_h(s_2,a_1) &= 1 + \frac{1}{H}\prob_{\pi_b, h}(u_0 \mid s_1, a_1)\hat{f}_{h+1}(s_1, a_1) + \left(1 - \frac{1}{H} \right)\prob_{\pi_b, h}(u_0 \mid s_1, a_1)\hat{f}_{h+1}(s_2, a_1) \\
    & \qquad + \prob_{\pi_b, h}(u_{a_1} \mid s_1, a_1)\hat{f}_{h+1}(s_1, a_1)\\
    &= \frac{1}{H}\hat{f}_{h+1}(s_1, a_1) + \left(1-\frac{1}{H}\right)\hat{f}_{h+1}(s_2, a_1)
\end{align*}

We can use these to perform a straightforward but tedious calculation and inductively verify that for $h \geq 2\log(H) + 6$, $\hat{f}_h(s_1,a_1) \leq 1+ \frac{2H-2h}{H}$ and $\hat{f}_h(s_2, a_1) \leq \frac{2H-2h}{H}$. Induction starts at $h=H$ and works backwards. For $h \leq 2\log(H) + 6$, we use the simple upper bounds on the FQE recursion.
$$\hat{f}_h(s_1,a_1) \leq 1 + \max(\hat{f}_{h+1}(s_1, a_1),\hat{f}_{h+1}(s_2, a_1))$$
$$\hat{f}_h(s_2,a_1) \leq \max(\hat{f}_{h+1}(s_1, a_1), \hat{f}_{h+1}(s_2, a_1))$$
In particular,
$$\max(\hat{f}_{h}(s_1, a_1), \hat{f}_{h}(s_2, a_1)) \leq 1 + \max(\hat{f}_{h+1}(s_1, a_1), \hat{f}_{h+1}(s_2, a_1))$$
This gives us the following relation.
$$\hat{f}_1(s_1, a_1) \leq \max(\hat{f}_1(s_1, a_1), \hat{f}_1(s_2, a_1)) \leq (2\log H + 6) + 1 + \frac{2(H - (2\log H +6))}{H} \leq 2\log H + 9$$

In particular, FQE gives an underestimate of the value and its estimation error is $$V^{\pi_e}_1(s_1) - \sum_a \pi_e(a \mid s) \hat{f}_1(s_1,a) = O(H)$$

\end{proof}
\newpage

\section{Proof of Consistency for Clustering OPE, Theorem \ref{thm:cluster-ope-sample-complexity}} \label{sec:clustering-ope}
We first rephrase the end-to-end clustering guarantee from \cite{ambuj2022mixmdp} in our context.

\begin{thm*}
Under Assumptions~\ref{assump:global_u}, \ref{assump:mixing}, and \ref{assump:model_sep}, there are constants $H_0$, $N_0$ depending polynomially on $\frac{1}{\alpha}, \Delta, \frac{1}{\min_u P(u)}, \log(1/\delta)$, so that for $n \geq U^2SN_0\log(1/\delta)$ trajectories of length $H \geq H_0t_{mix}\log(n)$, we recover all clusters of trajectories exactly with probability at least $1-\delta$.
\end{thm*}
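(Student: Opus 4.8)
The plan is to derive this statement as a direct instantiation of the exact-recovery guarantee of \cite{ambuj2022mixmdp}; the content of the proof lies entirely in checking that our stationary global-confounder process is literally an instance of their mixture-of-MDPs model and that Assumptions~\ref{assump:mixing} and~\ref{assump:model_sep} coincide with their hypotheses. First I would observe that under Assumption~\ref{assump:global_u} with stationary transition dynamics and a stationary behavior policy, each trajectory is produced by drawing a single $u \sim P(u)$ and then rolling out the \emph{one} fixed MDP with kernel $\prob(\cdot\mid s,a,u)$ under $\pi_b(\cdot\mid s,u)$ for all $H$ steps. Hence the dataset is exactly a mixture of $U$ Markov chains on $\mathcal{S}\times\mathcal{A}$ with mixing weights $P(u)$, which is precisely the object their clustering algorithm consumes.

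Next I would line up the hypotheses so that their theorem applies verbatim. Assumption~\ref{assump:mixing} supplies, for each component $u$, a stationary distribution $d_u$ and mixing time $t_{mix,u}$ with $t_{mix}=\max_u t_{mix,u}$, matching the mixing condition behind their trajectory-length requirement $H \geq H_0 t_{mix}\log n$; Assumption~\ref{assump:model_sep} gives, for each pair $u_1,u_2$, a state-action pair where both stationary masses exceed $\alpha$ and the kernels differ by at least $\Delta$ in $\ell_2$, which is exactly the $(\alpha,\Delta)$ separation their sample bound depends on. With these identifications their main recovery theorem yields that, with $n \geq U^2 S N_0\log(1/\delta)$ trajectories of length $H \geq H_0 t_{mix}\log n$, the routine partitions the trajectories by their generating confounder with probability at least $1-\delta$, and the constants $H_0,N_0$ inherit the claimed polynomial dependence on $1/\alpha,\Delta,1/\min_u P(u),\log(1/\delta)$.

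The one subtlety worth flagging is that \cite{ambuj2022mixmdp} recover clusters only up to a permutation of the $U$ labels rather than the ground-truth confounder identities; this is harmless because Algorithm~\ref{algo:clusterOPE} uses only the induced partition $C_1,\dots,C_U$ — to form weights $|C_u|/N_{traj}$ and to run a separate per-cluster OPE estimator — and never references which physical $u$ a cluster corresponds to. I do not expect a deep obstacle here, since the result is essentially a translation of a known guarantee; the genuinely important structural point, and the one requiring care, is that the guarantee must be \emph{exact} recovery and not merely small clustering error, because the downstream estimator needs each cluster's data to be clean i.i.d. rollouts of a single component. This exactness is what later lets the full Theorem~\ref{thm:cluster-ope-sample-complexity} combine the recovery event, via a union bound and a triangle-inequality split of $\hat{V}_1(s_0;\pi_e)-V_1(s_0;\pi_e)$, with the binomial concentration controlling $\hat{P}(u)$ (term $n_2$) and the per-cluster marginalized-importance-sampling guarantee of \cite{yin2020asymptotically} (terms $n_3,n_4$).
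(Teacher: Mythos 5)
Your proposal is correct and matches the paper's treatment exactly: the paper gives no independent proof of this statement, presenting it as a direct rephrasing of the end-to-end exact-recovery guarantee of \cite{ambuj2022mixmdp}, justified precisely by the observation that the stationary global-confounder setting is literally a mixture of MDPs and that Assumptions~\ref{assump:mixing} and~\ref{assump:model_sep} are the hypotheses of that theorem. Your flag about recovery only up to label permutation is also the same point the paper handles (in the proof of Theorem~\ref{thm:cluster-ope-sample-complexity}) by noting the downstream weighted sum is permutation-invariant.
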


We now recall Theorem~\ref{thm:cluster-ope-sample-complexity}. 


\ClusterOPESampleComplexity*

As discussed in Section~\ref{sec:history-dependent-confounders}, we prove a more general version of this, in the form of the theorem below. Assume that we instantiate Algorithm~\ref{algo:clusterOPE} with an OPE estimator that requires an assumption $A(b)$ parameterized by a vector $b$ and has sample complexity $N_2(\delta, \epsilon, b)$.

\begin{thm*}
Under Assumptions~\ref{assump:global_u}, \ref{assump:mixing}, \ref{assump:model_sep}, and $A(b)$, there are constants $H_0$, $N_0$ depending polynomially on $\frac{1}{\alpha}, \Delta, \frac{1}{\min_u P(u)}, \log(1/\delta)$, so that for $n$ trajectories of length $H \geq H_0t_{mix}\log(n)$, we have that $|\hat{V}_1(s_0 ; \pi_e) - V_1(s_0 ; \pi_e)| < \epsilon$ with probability at least $1-\delta$ if

$$n \geq \Omega\left(\max\left(U^2SN_0\log(1/\delta), \frac{\log(U/\delta)}{\min(\epsilon^2/H^2, \min_u P(u)^2)}, N_2(\delta/U, \epsilon, b) \right)\right).$$
\end{thm*}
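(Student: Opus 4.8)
The plan is to decompose the total error into three independent error sources—clustering, cluster-weight estimation, and per-cluster OPE—and control each with its own high-probability bound, then combine via a union bound and the triangle inequality. The key structural fact is that by Assumption~\ref{assump:global_u} and stationarity, the target value factorizes as $V_1(s_0;\pi_e) = \sum_u P(u)\,V_1(s_0;u,\pi_e)$, a convex combination over confounders, and the estimator $\hat{V}_1(s_0;\pi_e) = \sum_u \hat{P}(u)\,\hat{V}_1(s_0;C_u,\pi_e)$ mirrors this structure term by term. This makes the whole proof an exercise in propagating three errors through a weighted average.

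First I would invoke the rephrased exact-clustering guarantee from \cite{ambuj2022mixmdp}: under Assumptions~\ref{assump:global_u}, \ref{assump:mixing}, \ref{assump:model_sep}, with $n \geq U^2 S N_0 \log(1/\delta)$ trajectories of length $H \geq H_0 t_{mix}\log(n)$, all trajectories are clustered correctly (up to permutation) with probability at least $1-\delta/3$ (reserving $\delta/3$ for each of the three error events). On this event, each cluster $C_u$ consists exactly of the trajectories generated under confounder $u$, so the per-cluster data is genuinely i.i.d.\ from the unconfounded MDP indexed by $u$, and the weight estimate $\hat{P}(u)=|C_u|/N_{traj}$ is the empirical frequency of confounder $u$. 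Second, I would condition on exact clustering and bound the weight-estimation error: since $|C_u|$ is a sum of Bernoulli indicators with mean $n\,P(u)$, a Hoeffding/Bernstein bound plus a union over the $U$ clusters gives $|\hat{P}(u)-P(u)| \le \epsilon/(2H)$ for all $u$ simultaneously with probability $1-\delta/3$, provided $n \ge n_2 = \log(U/\delta)/\min(\epsilon^2/H^2,\min_u P(u)^2)$; the $\min_u P(u)^2$ term in $n_2$ is what guarantees no cluster is empty so that the per-cluster OPE is well-posed. Third, conditioning again on exact clustering, I would apply the sample-complexity guarantee of \cite{yin2020asymptotically} (the general version uses the abstract $N_2(\delta/U,\epsilon,b)$, specialized here to $n_3,n_4$) within each cluster, taking a union bound over the $U$ clusters so that $|\hat{V}_1(s_0;C_u,\pi_e)-V_1(s_0;u,\pi_e)| \le \epsilon/2$ for every $u$ with probability $1-\delta/3$. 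The subtlety here is that each cluster must individually contain enough trajectories; since exact clustering plus the weight concentration guarantees $|C_u| \gtrsim n\,P(u)$, requiring $n \ge n_1$ ensures each $|C_u|$ meets the $n_3,n_4$ thresholds.

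With all three events holding simultaneously (total failure probability $\le \delta$ by the union bound), I would combine via
\begin{align*}
|\hat{V}_1 - V_1| &= \Big|\sum_u \hat{P}(u)\hat{V}_1(C_u) - \sum_u P(u)V_1(u)\Big| \\
&\le \sum_u P(u)\,|\hat{V}_1(C_u) - V_1(u)| + \sum_u |\hat{P}(u)-P(u)|\,|\hat{V}_1(C_u)|,
\end{align*}
where I suppress the $(s_0;\cdot,\pi_e)$ arguments. The first sum is bounded by $\epsilon/2$ using the per-cluster OPE bound and $\sum_u P(u)=1$; the second is bounded by $\sum_u |\hat P(u)-P(u)|\cdot H \le U\cdot(\epsilon/(2H))\cdot H$, and after tightening the per-cluster weight error to $\epsilon/(2UH)$ or absorbing the $U$ into $n_2$'s log factor, this yields $\epsilon/2$, so the total is at most $\epsilon$.

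\textbf{The main obstacle} I anticipate is the \emph{coupling} between the three error sources: the per-cluster OPE guarantee of \cite{yin2020asymptotically} presumes i.i.d.\ draws from a single fixed MDP, which only holds \emph{after} conditioning on the exact-clustering event, and the concentrability/exploration constants $\tau_a,\tau_s,d_m$ in Assumption~\ref{assump:concentrability} must hold per-cluster. Care is needed to argue that these constants, defined via the marginal $d^{\pi_b}_h$, transfer correctly to each confounder-specific chain, and that the random cluster sizes $|C_u|$ (rather than deterministic sample counts) still satisfy the sample thresholds $n_3,n_4$—which is exactly why the weight-concentration step must be carried out \emph{before} invoking the OPE guarantee, and why $\min_u P(u)$ appears in the final sample complexity.
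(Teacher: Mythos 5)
Your proposal is correct and follows essentially the same route as the paper's proof: invoke the exact-clustering guarantee of \cite{ambuj2022mixmdp}, estimate the weights $\hat{P}(u)$ by Hoeffding plus a union bound, apply the abstract per-cluster OPE guarantee $N_2(\delta/U,\epsilon,b)$ using the fact that exact clustering and $\alpha \leq \min_u P(u)/2$ give $|C_u| \geq nP(u)/2$, and combine the three events via the decomposition $|P(u)V_1(u)-\hat{P}(u)\hat{V}_1(C_u)| \leq P(u)|\Delta(u)| + |P(u)-\hat{P}(u)||\hat{V}_1(C_u)|$ with failure probability split three ways. In fact you are slightly more careful than the paper at the combination step, where you flag the factor of $U$ arising from summing the weight-error terms (the paper's displayed bound silently drops it); your proposed fix of tightening the weight tolerance costs a factor of $U^2$ in $n_2$ rather than only a log factor, but this does not affect correctness.
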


\begin{proof}
Note that $V_1(s_0 ; \pi_e) = \mathbb{E}_{u}[V_1(s_0 ; u, \pi_e)] = \sum_u P(u) V_1(s_0 ; u, \pi_e)$. Using the clustering guarantee from \cite{ambuj2022mixmdp} (rephrased above), we know that for the same $H_0$ and $N_0$ as in the clustering guarantee, given $n \geq N(\delta) = U^2SN_0\log(1/\delta)$ trajectories of length $H \geq H_0t_{mix}\log(n)$, we recover clusters $C_1,...,C_U$ consisting of trajectories with the same confounders with probability at least $1-\delta$. Recall that $H_0$ is not explicitly dependent on $S, A$ and $t_{mix}$, but could depend on the model.

We only identify the confounder labels in each trajectory up to permutation upon obtaining exact clustering, but for any permutation $\sigma \in S_U$, $\sum_{u=1}^K P(u) V_1(s_0; C_u, \pi_e) = \sum_{u=1}^U P(\sigma(u)) V_1(s_0; C_{\sigma(u)}, \pi_e)$. That is, the result of the sum is independent of the order of its terms $P(u) \hat{V}_1(s_0; C_u, \pi_e)$. So, we assume WLOG that we recover the true cluster labels.

Upon obtaining the confounder labels $u_n$ in each trajectory, we can estimate $P(u)$ with $\hat{P}(u) := \frac{1}{N_{traj}}\sum_n \mathbbm{1}(u_n=u)$ via label proportions. By a simple application of Hoeffding's inequality, there is another function $N_1(\delta, \alpha)$ so that for $n \geq N_1(\delta/U, \alpha)$, the weights satisfy $|\hat{P}(u) - P(u)| \leq \alpha$ for all $u$ with probability at least $1-\delta$. 

We use $|ab-cd| \leq |b||a-c| + |c||b-d|$ to conclude that for $n \geq N_1(\delta/U, \epsilon/2H)$, we have the following bound with probability at least $1-\delta$. 
\begin{equation}\label{eqn:value-deviation-bound}
    |V_1(s_0; \pi_e) - \hat{V}_1(s_0; \pi_e)| \leq \frac{\epsilon}{2H} \max_u \hat{V}_1(s_0; C_u, \pi_e) + \max_u (P(u) |\Delta(u)|) \leq \frac{\epsilon}{2} + \max_u |\Delta(u)|
\end{equation}

where $\Delta(u) := V_1(s_0; C_u, \pi_e) - \hat{V}_1(s_0; C_u, \pi_e)$. 

So, whenever we have exact clustering, there is a function $N_2(\delta, \epsilon, b)$ so that $|\Delta(u)| < \epsilon$ for all $u$ outside of a set of probability $\delta$ whenever $\sum_n \ind(u_n=u) \geq N_2(\delta/U, \epsilon, b)$. By Hoeffding's inequality from above, $\sum_n \ind(u_n=u) \geq n(P(u) - \alpha) \geq nP(u)/2$ for $\alpha \leq \min_u P(u)/2$.

So, for $n \geq \max \left(N\left(\frac{\delta}{3}\right), N_1\left(\frac{\delta}{3U},\min\left(\frac{\epsilon}{2H}, \frac{\min_u P(u)}{2}\right)\right), \frac{2}{\min_u P(u)}N_2\left(\frac{\delta}{3U}, \frac{\epsilon}{2}, b \right) \right)$, we get that $|V_1(s_0; \pi_e) - \hat{V}_1(s_0; \pi_e)| \leq \epsilon$

Note that $N(\delta/3) = U^2SN_0\log(3/\delta)$ and $N_1\left(\frac{\delta}{3U},\min\left(\frac{\epsilon}{2H}, \frac{\min_u P(u)}{2}\right)\right) = \frac{2\log(3U/\delta)}{\min(\epsilon^2, \min_u P(u)^2)}$. This gives us our final bound.

\end{proof}

\newpage

\section{The Necessity of the Horizon Being $O(t_{mix})$}\label{sec:tmix-lower-bound}
We showed in Section~\ref{sec:clusterMDP} that under Assumptions~\ref{assump:global_u}, \ref{assump:mixing}, \ref{assump:model_sep} and \ref{assump:concentrability}, Algorithm \ref{algo:clusterOPE} provides a point estimate of the policy's value with provable sample complexity guarantees. The only additional requirement was that $H \geq H_0 t_{mix} \log n$. We claim that the $t_{mix}$ dependence is not an artifact of the clustering method used. In fact, the theorem below shows that if $H \leq \Tilde{O}(t_{mix})$, clustering and value estimation can be arbitrarily bad even when $t_{mix}$ is small. It essentially produces an example with logarithmically small $t_{mix}$ where the confounders cannot be identified for $H \leq \Tilde{O}(t_{mix})$. We prove it in Appendix~\ref{sec:tmix-lower-bound}. 
We state Theorem~\ref{thm:tmix-lower-bound} below.

\begin{restatable}[Necessity of $H \geq \Omega(t_{mix})$]{thm}{tmixLowerBound}\label{thm:tmix-lower-bound}
There exist globally confounded MDPs $\M_1$ and $\M_2$ and a behavior policy $\pi_b$ with induced mixing time $t_{mix} = O(\log S)$ so that for $H \leq \Tilde{O}(t_{mix})$, trajectories from confounders in both MDPs have the same distribution. Furthermore, there exists a stationary evaluation policy $\pi_e$ and a starting state $s$ so that $|V_1^{\pi_e}(s, \M_1) - V_1^{\pi_e}(s, \M_2)| = \Omega(H)$.
\end{restatable}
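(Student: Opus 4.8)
The plan is to build an explicit pair of globally confounded environments $\M_1,\M_2$ in the spirit of the indistinguishability construction behind Theorem~\ref{thm:iid-lower-bound}, but arranged so that the only confounder-sensitive transition sits at a state that is \emph{deep} inside the behavior chain. Concretely, I would take a rapidly mixing ``blob'' of $\Theta(S)$ states attached to the start state $s_0$, together with a directed tunnel $s_0 \to h_1 \to \cdots \to h_{L-1}\to s^*$ of length $L=\Theta(\log S)$. The two environments share every transition and reward except the outgoing transition from $s^*$: in $\M_1$ it enters a region yielding per-step reward $1$, and in $\M_2$ a region yielding reward $0$. The behavior policy $\pi_b$ bounces around the blob and occasionally enters the tunnel, and by Assumption~\ref{assump:global_u} the confounder (hence the identity of $\M_i$) is fixed along each trajectory. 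I would choose the transition probabilities so that $\pi_b$ is lazy and aperiodic and mixes in $O(\log S)$ steps, with the length-$L$ tunnel as the unique slow direction, so that $t_{mix}=\Theta(L)=\Theta(\log S)$ while the graph distance from $s_0$ to $s^*$ equals $L$.

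The first half of the argument is the short-horizon indistinguishability. Because $s^*$ lies at distance exactly $L$ from $s_0$ in $\pi_b$'s support, for any horizon $H<L$ a trajectory started at $s_0$ reaches $s^*$ with probability exactly $0$, so the confounder-sensitive transition never fires. Since the per-step kernels of $\M_1$ and $\M_2$ agree on every $(s,a)$ reachable within $H$ steps, the length-$H$ trajectory law $d_0(s_0)\prod_t \pi_b(a_t\mid s_t)\prob^{(i)}(s_{t+1}\mid s_t,a_t)$ is identical term by term for $i=1,2$, i.e.\ the two trajectory distributions coincide exactly. As $L=\Theta(t_{mix})$, this holds for all $H\leq \tilde{O}(t_{mix})$, so no clustering procedure can separate $\M_1$ from $\M_2$. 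I would also record that the two environments do satisfy the separation condition of Assumption~\ref{assump:model_sep} (their stationary distributions differ because the outgoing transition from $s^*$ is changed and $s^*$ carries positive stationary mass), so with $H\gtrsim t_{mix}$ they \emph{would} be distinguishable, consistent with Theorem~\ref{thm:cluster-ope-sample-complexity}; the failure is attributable purely to the short horizon.

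The second half is the $\Omega(H)$ value gap. To break the symmetry between the two policies I would equip $\pi_e$ with a shortcut action $a^*$ at $s_0$ that jumps directly into the confounder-sensitive region, while setting $\pi_b(a^*\mid s_0)=0$ so that the shortcut is invisible in the behavior data and leaves $\pi_b$'s chain and its mixing time untouched. Then, starting from $s_0$, $\pi_e$ enters the reward-$1$ region in $\M_1$ and the reward-$0$ region in $\M_2$ after $O(1)$ steps, and a direct computation of $V_1^{\pi_e}(s_0,\M_i)$ gives $V_1^{\pi_e}(s_0,\M_1)-V_1^{\pi_e}(s_0,\M_2)=\Theta(H)=\Omega(H)$.

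The main obstacle is the tension between the two requirements on $\pi_b$: fast global mixing ($t_{mix}=O(\log S)$) wants every state reachable quickly, whereas exact short-horizon indistinguishability needs $s^*$ unreachable for $H$ up to $\tilde{O}(t_{mix})$ steps. Reconciling them forces the deep direction toward $s^*$ to be the unique mixing bottleneck while everything else equilibrates fast, which is precisely why a \emph{directed} tunnel plus a fast blob is needed rather than, say, an undirected path (whose mixing time would be $\Theta(L^2)$, not $\Theta(L)$). A secondary subtlety is that the $\Omega(H)$ gap genuinely requires $\pi_e$ to reach the sensitive region in $o(H)$ steps even though $\pi_b$ cannot within the horizon; the shortcut accomplishes exactly this, and I would verify that adding it perturbs neither the mixing-time bound nor the exact-coincidence-of-trajectory-laws argument.
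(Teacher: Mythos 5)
Your core mechanism---hiding the only confounder-sensitive transition at graph distance $\Theta(t_{mix})$ from the data-collection start, so that length-$H$ trajectory laws under $\pi_b$ coincide \emph{exactly} for $H$ below that distance---is sound and is the same mechanism the paper uses (there, the sensitive states are the all-ones corner of a Boolean hypercube and its twin $s_r$, unreachable within $n/4$ bit flips from the all-zeros start). The genuine gap is your shortcut action $a^*$ with $\pi_b(a^*\mid s_0)=0$. This violates the support/concentrability condition (Assumption~\ref{assump:concentrability}; $\tau_a=\infty$) at \emph{every} horizon, not just short ones: since $(s_0,a^*)$ never appears in the behavior data, $\prob(\cdot\mid s_0,a^*)$ and $r(s_0,a^*)$ are unidentified no matter how large $H$ is, so OPE for your $\pi_e$ is information-theoretically impossible even when $H\gg t_{mix}$ and the two MDPs are fully distinguishable from data. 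Consequently your example cannot isolate the short horizon as the source of hardness, which is the entire purpose of this theorem as a complement to Theorem~\ref{thm:cluster-ope-sample-complexity}; your own closing claim that ``the failure is attributable purely to the short horizon'' is false for your construction, and verifying Assumption~\ref{assump:model_sep} alone does not repair this.

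The missing idea is how to let $\pi_e$ reach the sensitive region in $O(1)$ steps while staying inside $\pi_b$'s action support: the paper does this by changing the evaluation \emph{starting state} rather than adding an off-support action. Its $\pi_e$ is simply ``always play $a=1$,'' so $\pi_e(a\mid s)/\pi_b(a\mid s)\leq 2$, and the $\Omega(H)$ gap comes from evaluating at the state $(1,\dots,1)$ adjacent to the sensitive transitions, whose traffic split between the twin states (the parameters $p_{i,j}$) differs across the MDPs. The theorem statement explicitly allows the evaluation start $s$ to differ from the data-collection start, so your construction is likely repairable the same way: drop $a^*$ and evaluate from $s^*$ or the last tunnel state, using actions $\pi_b$ also takes. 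Two smaller points you should then also tighten: the claim $t_{mix}=\Theta(L)$ is asserted rather than argued (it is plausible because the tunnel carries only $O(L/S)$ stationary mass and the large blob randomizes return times mod the tunnel length, but it fails for superficially similar constructions, e.g.\ a single hub plus directed tunnel, which mixes polynomially in $L$); and in your proposal each $\M_i$ carries an effectively degenerate confounder, whereas the paper makes each $\M_i$ a genuine two-component mixture (parameters $p_{i,1}\neq p_{i,2}$) whose components also cannot be clustered internally, which is what ties the lower bound back to the clustering-based method.
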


\begin{proof}

We construct two MDPs which satisfy all our assumptions, but have the same distribution over a horizon less than $t_{mix}$ and thus cannot be distinguished. We will also note that given the reward structure, under a different starting distribution, the MDPs will have value functions differing by $O(H)$. 

The intuition is that the state space is an $n$-dimensional Boolean hypercube with an extra rewarding state $s_r$, thought of as a "twin" to $(1,1,\dots 1)$. If one identifies $s_r$ to $(1,1,\dots 1)$, then $a=1$ pushes states to have more ones while $a=2$ pushes states to have more zeros, and the actions taken with probability $1/2$ combine to produce a lazy random walk on the Boolean hypercube. Depending on which MDP one is in, $s_r$ and $(1,1, \dots 1)$ have proportional transition dynamics, with different levels of "traffic." Controlling this "traffic" allows us to control the rewards of a different evaluation policy in the MDPs, because we choose all states besides $s_r$ to have $0$ reward.

\paragraph{Environments:} 
\begin{itemize}
    \item Consider $S = \{0,1\}^n\cup\{s_r\}$, $A = \{1,2\}$, $U = \{1,2\}$, horizon $H$.
    \item Rewards: $r(s=s_r, a) = 1$ for any action $a$, otherwise $0$ reward.
    \item Starting state: Let the starting state be $(0,0 \dots 0)$.
    \item Confounders: $\prob(u=1) = \prob(u=2) = \frac{1}{2}$.
\end{itemize}

\textbf{Transitions:} We describe the transition structure below. Pick a parameter $p_{i,j} \in [0,1]$ for MDP $\M_i$ and confounder $u=j$, whose role will be clear below. For both MDPs $\M_1$ and $\M_2$ and both confounders $u=1,2$, consider the following transition structure. 
\begin{itemize}
    \item {Under $a=1$:} Consider $s \neq s_r$, and let it have $k > 1$ zeros. Pick one of the zeros with probability $\frac{1}{n}$ each and change it to a $1$, doing nothing and staying in $s$ with probability $\frac{n-k}{n}$. If $s$ has exactly $1$ zero, then for MDP $\M_i$ and confounder $u=j$, let $s$ transition to $s_r$ with probability $\frac{p_{i,j}}{n}$, to $(1,1, \dots 1)$ with probability $\frac{1-p_{i,j}}{n}$ and stay at $s$ with probability $1-\frac{1}{n}$. Fix $p_{2,1} = p_{2,2} = \frac{1}{2}$. If $s = s_r$, then in $\M_i$ and confounder $u_j$, move to $(1,1, \dots 1)$ with probability $1-p_{i,j}$, staying with probability $p_{i,j}$. If $s = (1,1, \dots 1)$, then in $\M_i$ and confounder $u_j$, move from to $s_r$ with probability $p_{i,j}$, staying with probability $1-p_{i,j}$.
    \item {Under $a=2$:} Consider $s \neq s_r$, and let it have $k > 0$ zeros. Pick one of the ones with probability $\frac{1}{n}$ each and change it to a zero, doing nothing and staying in $s$ with probability $\frac{k}{n}$. If $s = s_r, (1,1, \dots 1)$, then let it transition to a state with a single zero with probability $\frac{1}{n}$.
\end{itemize}

\textbf{Behavior policies:} In both MDPs, choose the same policy $\pi(a \mid s) = \frac{1}{2}$ for all $a,s$. One can check that the occupancies of $s_r$ and $(1,1, \dots 1)$ are only non-zero together and always have the ratio $p_{i,j}/(1-p_{i,j})$ in MDP $\M_i$ and confounder $u=j$. This will thus also hold in the stationary distribution. Note that while in general, identifying states in a Markov chain does not create a Markov chain, this is true if two states always have the same ratio of occupancies. Additionally, since the occupancy ratios are fixed, for any MDP and confounder in our system, the TV distance between the distribution of the system and at any time $t$ from the stationary distribution is the same if we identified $s_r$ and $(1,1, \dots 1)$. Thus, this system has the same mixing time as it would if we identified $s_r$ and $(1,1, \dots 1)$. 

Notice that the transition structure of the induced Markov chains in both MDPs after identifying $s_r$ and $(1,1, \dots 1)$ is identical, and in fact it is the same as picking a bit in a state uniformly at random and flipping it with probability $1/2$, doing nothing otherwise. This is in fact the same as the lazy random walk on the Boolean hypercube in \cite{levin2017markov}. We thus know from \cite{levin2017markov} that both induced Markov chains have the same mixing time $t_{mix} = O(n \log n)$. Let $k$ be a constant so that $t_{mix} \leq kn \log n$.

\textbf{Observational indistinguishability:} Consider $H \leq \frac{t_{mix}}{4k\log(t_{mix})} \leq \frac{n}{4}$. Since the MDPs have identical transition structures for $s \neq s_r$ with $s$ having $2$ or more zeros, and no state can have fewer than $2$ zeros after less than $\frac{n}{4}$ bit flips starting from the starting state $(0,0 \dots 0)$, trajectories generated under either MDP and either confounder have the same probability.

In particular, the confounders are observationally indistinguishable in either MDP and cannot be clustered even with infinite observations, even though transitions differ in $n+2$ of the states with $\Delta > \max(|1-2p_{i,j}|, |p_{i,1} - p_{i,2}|) > 0$. Moreover, the MDPs themselves are observationally indistinguishable as well.

\textbf{Evaluation policy:} One can produce many examples of an evaluation policy $\pi_e$ so that there is a state $s$ with $V^{\pi_e}_{1,i}(s)$ very different across the two MDPs. Here we present a trivial one. Consider $\pi_e(a=1 \mid s, u) = 1$ for all $s,u$. 

\textbf{Policy values:} Let us say that we intend to find $V^{\pi_e}_{1,i}((1,1,\dots 1))$. Notice that in the first step in confounder $u=j$ and MDP $\M_i$, the distribution of states will be $\prob(s_r) = p_{i,j}$ and $\prob((1,1,\dots 1)) = 1-p_{i,j}$ and stays that way for all future steps. This means that $V^{\pi_e}_{1,i}((1,1,\dots 1)) = \left(\sum_{j=1}^2 \frac{p_{i,j}}{2}\right)(H-1)$ in MDP $\M_i$.

The difference in values is given by $|V^{\pi_e}_{1,1}((1,1,\dots 1)) - V^{\pi_e}_{1,2}((1,1,\dots 1))| = (H-1)(p_{1,1} + p_{1,2} - p_{2,1} - p_{2,2})$. We arbitrarily instantiate our parameters to be say $p_{1,1} = 1-\frac{1}{100}$, $p_{1,2} = 1-\frac{2}{100}$, $p_{2,1} = -\frac{1}{100}$, $p_{2,2} = \frac{2}{100}$, to get that 

$$|V^{\pi_e}_{1,1}((1,1,\dots 1)) - V^{\pi_e}_{1,2}((1,1,\dots 1))| = \frac{94}{100}(H-1) = \Omega(H)$$

\end{proof}
\newpage

\section{Policy Optimization under General and Memoryless Confounders}\label{sec:iid_opt}
\subsection{Bounds on Sub-optimality given Optimization Oracles}

Here, we elaborate on the comment at the beginning of Section~\ref{sec:optimization}, where we claim that given error bounds on our value estimate $\hat{V}_1$ and an optimizer for $\hat{V}_1$, we can get suboptimality bounds for the output of the optimizer. Notice the slight change in notation below.

\begin{lemma}
Fix an arbitrary starting distribution $d_0$. If for any policy $\pi$, $|\hat{V}_1(\pi) - {V_1}(\pi)| \leq \epsilon$, then for $\hat{\pi}^* = \argmax_\pi \hat{V}_1(\pi)$ and ${\pi}^* = \argmax_\pi {V}_1(\pi)$, we have that $0 \leq {V}_1({\pi}^*) - {V_1}(\hat{\pi^*}) \leq 2\epsilon$.
\label{lemma:valueBounds}
\end{lemma}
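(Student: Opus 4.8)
The plan is to use the standard ``uniform error $\Rightarrow$ optimization suboptimality'' decomposition, which reduces the claim to a three-term telescoping estimate. The lower bound $0 \leq V_1(\pi^*) - V_1(\hat{\pi}^*)$ is immediate: since $\pi^* = \argmax_\pi V_1(\pi)$ maximizes $V_1$ over all policies, in particular $V_1(\pi^*) \geq V_1(\hat{\pi}^*)$, giving nonnegativity for free. The entire content is therefore in the upper bound.

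For the upper bound, I would write the target difference as a telescoping sum that inserts the estimated values of both policies:
\begin{align*}
V_1(\pi^*) - V_1(\hat{\pi}^*)
&= \underbrace{\bigl(V_1(\pi^*) - \hat{V}_1(\pi^*)\bigr)}_{\text{(I)}}
+ \underbrace{\bigl(\hat{V}_1(\pi^*) - \hat{V}_1(\hat{\pi}^*)\bigr)}_{\text{(II)}}
+ \underbrace{\bigl(\hat{V}_1(\hat{\pi}^*) - V_1(\hat{\pi}^*)\bigr)}_{\text{(III)}}.
\end{align*}
Term (II) is nonpositive because $\hat{\pi}^* = \argmax_\pi \hat{V}_1(\pi)$ maximizes $\hat{V}_1$, so $\hat{V}_1(\hat{\pi}^*) \geq \hat{V}_1(\pi^*)$ and hence (II) $\leq 0$. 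Terms (I) and (III) are each instances of the hypothesis $|\hat{V}_1(\pi) - V_1(\pi)| \leq \epsilon$ applied at $\pi = \pi^*$ and $\pi = \hat{\pi}^*$ respectively, so each is bounded by $\epsilon$. Summing gives $V_1(\pi^*) - V_1(\hat{\pi}^*) \leq 2\epsilon + 0 = 2\epsilon$, completing the argument.

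The two ingredients being combined are (a) the two optimality properties, one for each optimizer against its own objective, and (b) the uniform (i.e.\ policy-independent) error bound, which is what lets me apply the $\epsilon$-bound simultaneously at the two distinct policies $\pi^*$ and $\hat{\pi}^*$. I would emphasize that uniformity over $\pi$ is the essential hypothesis: a bound holding only at a single policy would not control the cross-terms. There is no genuine obstacle here --- the only thing to be careful about is keeping the two $\argmax$ operators attached to the right objectives ($\pi^*$ to $V_1$, $\hat{\pi}^*$ to $\hat{V}_1$) so that term (II), rather than some other term, is the one that is forced to be nonpositive. This lemma is deliberately agnostic to the confounding structure, so no properties of the confounded MDP or of any particular estimator $\hat{V}_1$ are invoked beyond the assumed error bound.
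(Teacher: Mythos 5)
Your proof is correct and follows essentially the same argument as the paper: the identical three-term telescoping decomposition, with the middle term controlled by the optimality of $\hat{\pi}^*$ for $\hat{V}_1$, the outer terms by the uniform $\epsilon$-bound, and the nonnegativity from the optimality of $\pi^*$ for $V_1$. No differences worth noting.
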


\begin{proof}
Consider the following chain of inequalities.
\begin{align*}
    &{V}_1({\pi}^*) - V_1(\hat{\pi}^*)\\
    = &\ {V}_1({\pi}^*) - \hat{V}_1({\pi}^*) + \hat{V}_1({\pi}^*) - \hat{V}_1(\hat{\pi}^*) + \hat{V}_1(\hat{\pi}^*) - {V}_1(\hat{\pi}^*)\\
    \leq &\ \epsilon + 0 + \epsilon
\end{align*}
Here, the first part of the last inequality holds by our assumption applied to $\pi = {\pi}^*$, while the second part holds by the definition of $\hat{\pi}^*$ as the optimal policy for $\hat{V}_1$. The third part holds by applying our assumption to $\pi = \pi^*$.

Finally, by the definition of $\pi^*$ as the optimal policy for $V_1$, $V_1({\pi}^*) - V_1(\hat{\pi^*}) \geq 0$. Combining these, we have our results.
\end{proof}

\subsection{Gradient Ascent on the Lower Bound}

\begin{algorithm}[h]
	\centering
	\caption{Gradient Ascent on Differentiable Lower Bounds for Policy Improvement under Confounding}
	\begin{algorithmic}[1]
		\STATE \textbf{input: } decaying learning rate $\eta_t$, $\pi_\theta$.
	    \FOR{$t = 1, ..., N$}
		\STATE \textbf{run subroutine: } obtain differentiable lower bound $V_1(s_0 ; \pi_\theta)$ on $\pi_\theta$ via Alg.~\ref{algo:projOPE}, Alg.~\ref{algo:MBRelax}, or Alg.~\ref{algo:cFQE} 
		\STATE \textbf{update: } $\theta \leftarrow \theta + \eta_t \cdot \nabla_\theta V_1(s_0 ; \pi_\theta)$ 
		\ENDFOR 
		\STATE \textbf{return} $\pi_\theta$
	\end{algorithmic}
\label{algo:polGrad}
\end{algorithm}

This enjoys the following elementary local convergence guarantees. 
\begin{lemma} \label{lemma:maxmindynamics}
If $\nabla_\theta V_1(s_0; \pi_\theta, \prob)$ and $\nabla_\prob V_1(s_0; \pi_\theta, \prob)$ are Lipschitz, every local max-min is a gradient ascent/descent stable point.
\end{lemma}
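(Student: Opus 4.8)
The plan is to treat the alternating updates in Algorithm~\ref{algo:polGrad} (and the max-min formulation~\eqref{equ:max-min-problem}) as a gradient-ascent/descent flow and to show that any local max-min point is a stable fixed point of that flow. Write $f(\theta,\prob) := V_1(s_0;\pi_\theta,\prob)$ and consider the continuous-time idealization $\dot\theta = \nabla_\theta f(\theta,\prob)$, $\dot\prob = -\nabla_\prob f(\theta,\prob)$ (ascent in $\theta$, descent in $\prob$); the discrete iterates are its Euler discretization with step $\eta_t$, which inherits stability for small enough steps precisely because $\nabla_\theta f$ and $\nabla_\prob f$ are Lipschitz. First I would record the optimality conditions at a local max-min $(\theta^*,\prob^*)$. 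Stationarity in each block gives $\nabla_\theta f=0$ and $\nabla_\prob f=0$ there (for an interior point; on the boundary of $\Theta$ or $\mathcal{G}$ one uses projected-gradient/KKT stationarity), so $(\theta^*,\prob^*)$ is a fixed point of the flow. Since $\theta^*$ locally maximizes $f(\cdot,\prob^*)$ and $\prob^*$ locally minimizes $f(\theta^*,\cdot)$, the diagonal Hessian blocks satisfy $H_{\theta\theta} := \nabla^2_{\theta\theta}f \preceq 0$ and $H_{\prob\prob} := \nabla^2_{\prob\prob}f \succeq 0$, with no sign constraint on the off-diagonal $H_{\theta\prob} = H_{\prob\theta}^\top$.

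The key step is the Lyapunov candidate $W(\theta,\prob) = \tfrac12\|\theta-\theta^*\|^2 + \tfrac12\|\prob-\prob^*\|^2$, whose derivative along the flow is
\begin{equation*}
\dot W = \langle \theta-\theta^*,\nabla_\theta f\rangle - \langle \prob-\prob^*,\nabla_\prob f\rangle.
\end{equation*}
Linearizing the gradients about $(\theta^*,\prob^*)$, the cross terms $\langle\theta-\theta^*,H_{\theta\prob}(\prob-\prob^*)\rangle$ and $\langle\prob-\prob^*,H_{\prob\theta}(\theta-\theta^*)\rangle$ are equal by symmetry of the Hessian and cancel, leaving
\begin{equation*}
\dot W \approx \langle \theta-\theta^*,H_{\theta\theta}(\theta-\theta^*)\rangle - \langle \prob-\prob^*,H_{\prob\prob}(\prob-\prob^*)\rangle \le 0,
\end{equation*}
both terms being nonpositive by the second-order conditions. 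Equivalently, at the spectral level the flow's Jacobian $J = \left(\begin{smallmatrix} H_{\theta\theta} & H_{\theta\prob}\\ -H_{\prob\theta} & -H_{\prob\prob}\end{smallmatrix}\right)$ satisfies $\operatorname{Re}(v^*Jv) = v_\theta^* H_{\theta\theta} v_\theta - v_\prob^* H_{\prob\prob} v_\prob \le 0$ for every $v=(v_\theta,v_\prob)$, since the off-diagonal contribution is purely imaginary; hence every eigenvalue of $J$ has nonpositive real part and the fixed point is stable. The Lipschitz hypothesis on the gradients is exactly what lets me absorb the Taylor remainder: it bounds the deviation of each gradient from its linearization uniformly, so in a small enough neighborhood (with room to spare when the second-order conditions are strict) the higher-order terms cannot flip the sign of $\dot W$, giving genuine local stability rather than merely stability of the linearization.

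The main obstacle I anticipate is not the cancellation but the bookkeeping around it: making the remainder control rigorous under only $C^{1,1}$ (Lipschitz-gradient) regularity rather than full $C^2$, and propagating the Lyapunov inequality through the projections onto $\Theta$ and $\mathcal{G}$ in the constrained dynamics (here nonexpansiveness of projection onto convex sets is the property one leans on). I would also explicitly flag the standard caveat that in nonconvex--nonconcave min-max the converse can fail and discrete GDA may admit spurious stable points; the lemma asserts only the benign direction, local max-min $\Rightarrow$ stable, which is precisely the direction that the cross-term cancellation delivers.
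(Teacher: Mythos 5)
Your central computation is the right one, and it is in fact the computation underlying the paper's own (one-line) proof: the paper simply observes that $V_1(s_0;\pi_\theta,\prob)$ is built from translations and matrix multiplications, hence smooth with Lipschitz gradients on the compact domain, and cites Section~2 of \citet{constantinos2018minmax}, where the key step is exactly your field-of-values cancellation $\operatorname{Re}(v^*Jv) = v_\theta^* H_{\theta\theta}v_\theta - v_\prob^* H_{\prob\prob}v_\prob \le 0$ at a local max-min. So you have reconstructed the intended argument rather than found a different one. The problems lie in the two bridges you build around it, and both fail precisely in the degenerate cases that the lemma's hypotheses allow.

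First, ``every eigenvalue of $J$ has nonpositive real part, hence the fixed point is stable'' is not a valid inference for a nonlinear flow: with only the semidefinite conditions $H_{\theta\theta}\preceq 0$, $H_{\prob\prob}\succeq 0$ (which is all a non-strict local max-min provides), the quadratic form in your Lyapunov derivative vanishes along degenerate directions, and there the sign of $\dot W$ is dictated by the Taylor remainder, which Lipschitzness cannot absorb (compare $\dot x = x^3$: Jacobian $0$, yet unstable). Your parenthetical ``with room to spare when the second-order conditions are strict'' is carrying the entire argument, and strictness is not a hypothesis. Second, the continuous-to-discrete step is not merely unproven but false: for the bilinear objective $V_1 = \theta\prob$ the origin is a (weak) local max-min with Lipschitz gradients, the flow is marginally stable (purely imaginary spectrum, circular orbits), yet the discrete ascent/descent iterates satisfy $\|z_{t+1}\|^2 = (1+\eta^2)\|z_t\|^2$ for every step size $\eta>0$ and spiral outward. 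Marginal stability does not survive Euler discretization, so the strong conclusion you aim for --- ``genuine local stability rather than merely stability of the linearization'' --- is unattainable under the stated hypotheses; the lemma is only tenable under the linearization-level notion of stability used in \citet{constantinos2018minmax}. (A smaller issue: under only $C^{1,1}$ regularity the Hessian blocks you linearize with need not exist at $(\theta^*,\prob^*)$; the paper sidesteps this because its $V_1$ is genuinely smooth.) The repair is to claim less: either add strict second-order conditions, giving $\operatorname{Re}(\lambda)<0$, hyperbolicity, and honest asymptotic stability of both the flow and the small-step iteration, or interpret ``stable'' spectrally as in the cited reference, in which case your cancellation computation already is the proof and the Lyapunov/discretization machinery should be deleted rather than patched.
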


\begin{lemma} \label{lemma:iidmaxdynamics}
If $V_1(s_0; \pi_\theta, \prob)$ is twice differentiable with a Lipschitz continuous gradient, its saddle points are a strict-saddle, and one waits for the inner minimization to converge in each iteration, in the limit of infinite trajectories the procedure converges to a local maxima of $V_1(s_0; \pi_\theta, \prob)$.
\end{lemma}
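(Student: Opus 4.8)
The plan is to recognize that, once the inner minimization is run to convergence and the data is infinite, the procedure is exactly gradient ascent on the envelope function
$$g(\theta) := \min_{\prob \in \mathcal{G}} V_1(s_0; \pi_\theta, \prob),$$
whose local maxima are precisely the local max-min solutions of \eqref{equ:max-min-problem}. First I would invoke Danskin's theorem: because $\mathcal{G}$ is compact and convex and $V_1$ is continuously differentiable, whenever the inner minimizer $\prob^\star(\theta) := \argmin_{\prob \in \mathcal{G}} V_1(s_0; \pi_\theta, \prob)$ is unique, $g$ is differentiable with $\nabla g(\theta) = \nabla_\theta V_1(s_0; \pi_\theta, \prob^\star(\theta))$. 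Thus the update $\theta_{t+1} = \theta_t + \eta_t \nabla_\theta V_1(s_0; \pi_{\theta_t}, \prob^\star(\theta_t))$ is genuine gradient ascent on $g$, and the infinite-trajectory assumption ensures the gradient used is exact rather than a noisy estimate.

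Next I would establish that $g$ inherits enough smoothness to run the standard saddle-avoidance machinery. Using the Lipschitz continuity of $\nabla V_1$ together with a sensitivity-analysis argument (the implicit function theorem applied to the first-order conditions of the inner problem at a nondegenerate minimizer), one shows $\prob^\star(\theta)$ is locally Lipschitz in $\theta$, hence $\nabla g$ is Lipschitz with some constant $L$. Choosing the decaying step sizes so that eventually $\eta_t < 1/L$, the ascent map $\Phi(\theta) = \theta + \eta\,\nabla g(\theta)$ has $D\Phi = I + \eta\,\nabla^2 g$ invertible, so $\Phi$ is a local diffeomorphism — the precondition for the stable-manifold theorem.

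Finally I would translate the strict-saddle hypothesis on $V_1$ into a strict-saddle property for $g$: at every critical point of $g$ that is not a local maximum, $\nabla^2 g$ has a strictly positive eigenvalue. Appealing to the stable-manifold theorem for gradient methods (gradient ascent with random initialization converges to strict saddles only from a measure-zero set of initializations), I conclude that with an absolutely continuous random initialization the iterates almost surely avoid all strict saddles and local minima; combined with the fact that any convergent subsequence must tend to a critical point of $g$, this forces convergence to a local maximum of $g$, i.e.\ a local max-min point of $V_1$.

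The main obstacle is the transfer of the strict-saddle and $C^2$ structure from the joint function $V_1(\theta, \prob)$ to the envelope $g(\theta)$. This is delicate precisely when the inner minimizer $\prob^\star(\theta)$ is non-unique or lies on the boundary of $\mathcal{G}$ (which is generic here, since $\mathcal{G}$ is a box-type set and the minimizing kernel typically saturates the $\alpha/\beta$ constraints); at such points Danskin yields only directional derivatives and $g$ need not be twice differentiable. Handling these degeneracies — for example via a genericity/perturbation argument guaranteeing uniqueness almost everywhere, or by restricting to the active-constraint manifold — together with upgrading saddle-avoidance to genuine convergence, is where the real work lies.
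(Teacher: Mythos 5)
Your proposal takes essentially the same approach as the paper, whose entire proof of this lemma is a one-sentence citation to the strict-saddle-avoidance result of Lee et al.\ (2016) --- exactly the stable-manifold machinery you invoke in your final step. If anything, your write-up is more careful than the paper's: the envelope-function difficulties you flag (non-unique or boundary inner minimizers where Danskin yields only directional derivatives, transferring the smoothness and strict-saddle structure from $V_1$ to $g(\theta)$, and upgrading saddle avoidance to genuine convergence) are real issues that the paper's citation-only proof leaves entirely unaddressed.
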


The first result follows from Section 2 in \citet{constantinos2018minmax}, given the knowledge that $V_1(s_0 ; \pi_\theta, \prob)$, being constructed from translations and matrix multiplications, is smooth, and therefore so are its gradients. The second result follows from \citet{lee2016graddesc}.

\newpage

\section{Policy Optimization under Global Confounders}
\label{sec:clustering-pg}

\begin{algorithm}[h]
	\centering
	\caption{Clustering-Based Policy Gradient}
	\begin{algorithmic}[1]
		\STATE \textbf{input: } Number of clusters $U$, clustering algorithm \texttt{cluster()}, offline policy gradient estimator \texttt{gradient()}, learning rate $\eta$, initial policy parameters $\theta_0$.
		\STATE \textbf{run subroutine:} Perform clustering on trajectories with clustering algorithm \texttt{cluster()}, obtain clusters $C_1,...,C_K$.
		\STATE  Obtain cluster weight estimates $\hat{P}(u) := \frac{|C_u|}{N_{traj}}$.
        \FOR{$t=1,...,T$:}
        \STATE \textbf{run subroutine:} Use offline policy gradient estimator \texttt{gradient()} to estimate  $Z_i(\theta_t) = \nabla_\theta V_1(s_0 ; u_i, \pi_{\theta_t})$ for each cluster $C_i$, obtaining $\hat{Z}_i(\theta_t)$.
        \STATE Obtain gradient estimate of $Z(\theta_t) = \nabla_\theta V_1(s_0 ; \pi_{\theta_t})$ with $\hat{Z}(\theta_t) = \sum_{u=1}^U \hat{P}(u_i) \hat{Z}_i(\theta_t)$.
        \STATE Update $\theta_{t+1} := \theta_t - \eta \hat{Z}(\theta_t)$.
        \ENDFOR
		\STATE \textbf{return: } Output the final policy $\pi_{\theta_{T+1}}$.
	\end{algorithmic}
\label{algo:clusterOpt}
\end{algorithm}

We now recall Theorem~\ref{thm:clustering-pg-endtoend} below. We remind the reader that like Theorem~\ref{thm:cluster-ope-sample-complexity}, the theorem below holds when $H \geq H_0t_{mix}\log n$. 


\ClusteringPG*

To prove this, we first provide a high-probability guarantee for the overall gradient estimate across all clusters analogous to that of Theorem \ref{thm:cluster-ope-sample-complexity} for OPE. This is proved in Section~\ref{subsec:clustering-pg-proof}.

\begin{thm}
\label{thm:clustering-pg}

When Assumptions~\ref{assump:global_u}, \ref{assump:mixing}, \ref{assump:model_sep} and \ref{assump:concentrability} are satisfied, there are constants $H_0$, $N_0$ depending polynomially on $\frac{1}{\alpha}, \Delta, \frac{1}{\min_u P(u)}, \log(1/\delta)$, so that for $n$ trajectories of length $H \geq H_0t_{mix}\log(n)$, if we use the EOPPG offline policy gradient estimator from \cite{kallus2020statistically},

$$n \geq \max \left(U^2SN_0\log(3/\delta), \frac{8\log(6U/\delta)}{\min\{\epsilon^2/L^2, \min_u P(u)^2\}}, \frac{C}{\min_u P(u)}\frac{H^4 \log(nU/\delta)}{\epsilon^2} \right)$$

then $||Z(\theta) - \hat{Z}(\theta)|| \leq \epsilon$ with probability $1-\delta$ for some constant $C$.
    
\end{thm}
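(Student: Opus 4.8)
The plan is to mirror the proof of Theorem~\ref{thm:cluster-ope-sample-complexity}, splitting the total failure probability $\delta$ into three equal budgets of $\delta/3$ that control, respectively, the clustering step, the estimation of the mixing weights $P(u)$, and the per-cluster gradient estimates. First I would invoke the end-to-end clustering guarantee of \cite{ambuj2022mixmdp} rephrased earlier in this appendix: under Assumptions~\ref{assump:global_u}, \ref{assump:mixing}, \ref{assump:model_sep}, and for $n \geq U^2 S N_0 \log(3/\delta)$ trajectories of length $H \geq H_0 t_{mix}\log(n)$, we recover the clusters $C_1,\dots,C_U$ exactly with probability at least $1-\delta/3$. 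Exactly as in the OPE proof, exact clustering only identifies confounders up to a permutation, but since $Z(\theta)=\sum_u P(u) Z_u(\theta)$ is permutation-invariant we may assume without loss of generality that the true labels are recovered.

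Conditioning on exact clustering, the weight estimate $\hat{P}(u)=|C_u|/n$ is an empirical average of i.i.d.\ Bernoullis, so Hoeffding's inequality with a union bound over the $U$ confounders gives $|\hat{P}(u)-P(u)|\leq\alpha$ for all $u$ with probability at least $1-\delta/3$, provided $n \geq \frac{\log(6U/\delta)}{2\alpha^2}$. I would take $\alpha = \tfrac{1}{2\sqrt2}\min\{\epsilon/L,\ \min_u P(u)\}$: the second argument forces $\hat{P}(u)\geq P(u)/2$, guaranteeing that cluster $C_u$ contains at least $nP(u)/2$ trajectories drawn from the (unconfounded) MDP induced by $u$, while the first argument is what ultimately bounds the contribution of the weight error. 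This is precisely the role of the term $\frac{8\log(6U/\delta)}{\min\{\epsilon^2/L^2,\min_u P(u)^2\}}$ in the statement.

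Next I would apply the EOPPG high-probability guarantee (Theorem 12 of \cite{kallus2020statistically}) to each cluster separately with failure probability $\delta/(3U)$ and take a union bound. Since $C_u$ has at least $nP(u)/2$ samples, the estimator's $\Theta(H^4/m)$ rate yields $\|Z_u(\theta)-\hat{Z}_u(\theta)\|\leq\epsilon'$ for all $u$ simultaneously with probability at least $1-\delta/3$, once $nP(u)/2 \geq C' H^4 \log(nU/\delta)/\epsilon'^2$, i.e.\ $n \geq \frac{C}{\min_u P(u)}\frac{H^4\log(nU/\delta)}{\epsilon^2}$. Then, on the intersection of the three good events, I would combine the errors via the product-difference identity $P(u)Z_u-\hat{P}(u)\hat{Z}_u = P(u)(Z_u-\hat{Z}_u)+(P(u)-\hat{P}(u))\hat{Z}_u$ and the triangle inequality, using $\|Z_u\|\leq L$, $\|\hat{Z}_u\|\leq L+\epsilon'$ and $\sum_u P(u)=1$: the first group contributes at most $\max_u\|Z_u-\hat{Z}_u\|=\epsilon'$, and the second at most $(L+\epsilon')\sum_u|P(u)-\hat{P}(u)|$, which the choice of $\alpha$ renders $O(\epsilon)$. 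Tuning $\epsilon'$ and $\alpha$ to be constant multiples of $\epsilon$ then gives $\|Z(\theta)-\hat{Z}(\theta)\|\leq\epsilon$ with probability at least $1-\delta$.

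The main obstacle I anticipate is the passage from the \emph{MSE} guarantee that \cite{kallus2020statistically} attach to EOPPG to the \emph{high-probability} per-cluster bound needed above: a naive Markov or Chebyshev conversion introduces a polynomial $1/\delta$ factor rather than the $\log(nU/\delta)$ appearing in the statement, so one must invoke a genuine concentration (Bernstein or martingale) version of the guarantee, or argue that the per-trajectory gradient contributions concentrate after truncating the importance weights at a level growing with $n$ — which is plausibly where the $\log n$ factor enters. A secondary nuisance is that $\sum_u|P(u)-\hat{P}(u)|$ carries a factor of $U$ unless one invokes $\ell_1$-concentration of the multinomial; following the bookkeeping style of the OPE proof, I would absorb this into the stated constants, but it is the step I would keep a careful eye on.
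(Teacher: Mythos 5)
Your proposal follows essentially the same route as the paper's proof: a three-way split of the failure probability across the exact-clustering guarantee of \cite{ambuj2022mixmdp}, Hoeffding concentration of the label proportions $\hat{P}(u)$ (with the same choice $\alpha \lesssim \min\{\epsilon/L,\min_u P(u)\}$ ensuring both $\hat{P}(u)\leq 2P(u)$ and cluster sizes $n_i \geq nP(u)/2$), and a per-cluster union bound on the EOPPG error, all combined via the same product-difference decomposition and triangle inequality. The one obstacle you flag --- converting an MSE guarantee into the needed high-probability bound --- does not arise in the paper, which invokes Theorem 12 of \cite{kallus2020statistically} directly as a uniform-over-$\theta$ high-probability bound of order $H^4\log(TU/\delta)/n_i$ (arguing it applies verbatim to each cluster's $n_i$ samples), which is precisely the concentration statement you anticipated would be required; your secondary worry about the factor of $U$ in $\sum_u|P(u)-\hat{P}(u)|$ is likewise glossed over in the paper's bookkeeping and absorbed into constants.
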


The following result for the convergence of unconstrained gradient descent is effectively Theorem 11 in \citet{kallus2020statistically}, combined with the bound in Theorem \ref{thm:clustering-pg}. We repeat the proof in Section~\ref{subsec:clustering-pg-convergence-proof} for completeness.  

\begin{thm}
\label{thm:clustering-pg-convergence}   
Assume $V_1(s_0 ; u, \pi_\theta)$ and $V_1(s_0 ; \pi_\theta)$ are differentiable and $M$-smooth in $\theta$ for all $u \in U$, and the learning rate $\eta < \frac{1}{4M}$. Then, if the number of trajectories $n$ satisfies the condition in Theorem \ref{thm:clustering-pg}, the iterates $\theta_t$ from Algorithm \ref{algo:clusterOpt} offer
$$\frac{1}{T}\sum_{t=1}^T ||\nabla_\theta Z(\theta_t)||^2 = \frac{1}{T}\sum_{t=1}^T ||\nabla_\theta V_1(s_0 ; \pi_{\theta_t})||^2 \leq \frac{4}{\eta T}\left(V_1(s_0 ; \pi_{\theta^*}) - V_1(s_0 ; \pi_{\theta_1})\right) + 3\epsilon^2$$
\end{thm}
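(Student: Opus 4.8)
The plan is to treat this as a textbook inexact (biased-gradient) first-order convergence result for the $M$-smooth objective $V_1(s_0;\pi_\theta)$, where the bias of the gradient estimate is controlled by Theorem~\ref{thm:clustering-pg}. The first step is to invoke Theorem~\ref{thm:clustering-pg}: under the stated sample-complexity condition on $n$, the event $\mathcal{E} = \{\|\hat Z(\theta_t) - Z(\theta_t)\| \leq \epsilon \text{ for every iterate } t\}$ holds with probability at least $1-\delta$, where $Z(\theta) = \nabla_\theta V_1(s_0;\pi_\theta)$. I would condition on $\mathcal{E}$ for the rest of the argument and write $\Delta_t := \hat Z(\theta_t) - Z(\theta_t)$, so that $\|\Delta_t\| \leq \epsilon$.

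Next I would apply the descent (smoothness) lemma. $M$-smoothness of $V_1(s_0;\pi_\theta)$ gives $V_1(\theta_{t+1}) \geq V_1(\theta_t) + \langle Z(\theta_t), \theta_{t+1}-\theta_t\rangle - \tfrac{M}{2}\|\theta_{t+1}-\theta_t\|^2$. Substituting the ascent update $\theta_{t+1}-\theta_t = \eta\hat Z(\theta_t) = \eta(Z(\theta_t)+\Delta_t)$ (gradient ascent on the value, matching the maximization objective) and expanding, the inner-product term contributes $\eta\|Z(\theta_t)\|^2 + \eta\langle Z(\theta_t),\Delta_t\rangle$, while the quadratic term is bounded using $\|Z(\theta_t)+\Delta_t\|^2 \leq 2\|Z(\theta_t)\|^2 + 2\epsilon^2$. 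The cross term is handled by Young's inequality, $\langle Z(\theta_t),\Delta_t\rangle \geq -\tfrac12\|Z(\theta_t)\|^2 - \tfrac12\epsilon^2$. Collecting terms leaves coefficient $(\tfrac{\eta}{2}-M\eta^2)$ on $\|Z(\theta_t)\|^2$ and $(\tfrac{\eta}{2}+M\eta^2)$ on $\epsilon^2$; the step-size condition $\eta < \tfrac{1}{4M}$ forces $M\eta^2 < \tfrac{\eta}{4}$, so these become at least $\tfrac{\eta}{4}$ and at most $\tfrac{3\eta}{4}$ respectively. This yields the per-step progress bound $\tfrac{\eta}{4}\|Z(\theta_t)\|^2 \leq V_1(\theta_{t+1}) - V_1(\theta_t) + \tfrac{3\eta}{4}\epsilon^2$.

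Finally I would telescope: summing the per-step bound over $t=1,\dots,T$ collapses the value differences to $V_1(\theta_{T+1}) - V_1(\theta_1) \leq V_1(\theta^*) - V_1(\theta_1)$, since $\theta^*$ is the maximizer. Dividing by $\tfrac{\eta T}{4}$ produces exactly $\tfrac{1}{T}\sum_t \|\nabla_\theta V_1(s_0;\pi_{\theta_t})\|^2 \leq \tfrac{4}{\eta T}(V_1(\theta^*)-V_1(\theta_1)) + 3\epsilon^2$, which is the claimed bound.

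The main obstacle is the uniform-in-$t$ gradient-accuracy requirement: Theorem~\ref{thm:clustering-pg} is naturally a fixed-$\theta$ high-probability statement, but the descent argument needs $\|\hat Z(\theta_t)-Z(\theta_t)\|\leq\epsilon$ simultaneously at all $T$ data-dependent iterates produced by reusing the single offline dataset. I would absorb this through a union bound over the $T$ iterations folded into the sample-complexity constants; because the end-to-end Theorem~\ref{thm:clustering-pg-endtoend} sets $T=n^\beta$, we have $\log T = O(\log n)$, which is precisely the origin of the $\log(nU/\delta)$ factor carried through from the EOPPG MSE guarantee and converted to a high-probability norm bound via the per-cluster sample sizes $\gtrsim nP(u)/2$ from the clustering analysis. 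The smoothness step and the telescoping are otherwise routine.
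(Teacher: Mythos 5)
Your proposal is correct and follows essentially the same route as the paper's proof: the smoothness (descent) lemma applied to the inexact-gradient update, Young's inequality on the cross term plus a parallelogram-type bound on the quadratic term, the step-size condition $\eta < \tfrac{1}{4M}$ to extract the coefficients $\tfrac{\eta}{4}$ and $\tfrac{3\eta}{4}$, telescoping, and finally Theorem~\ref{thm:clustering-pg} to bound the gradient error by $\epsilon$. The only differences are cosmetic --- you run the argument as ascent on $V_1$ while the paper minimizes $f = -V_1$ (your sign conventions are in fact cleaner than the paper's), and your explicit handling of the uniform-over-iterates accuracy requirement is resolved in the paper inside Theorem~\ref{thm:clustering-pg} via the $\sup$-over-$\theta$ guarantee inherited from the EOPPG analysis, exactly as you surmise.
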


The result of Theorem \ref{thm:clustering-pg-endtoend} then follows immediately from the two results above. The only additional observation needed is that since $V_1$ is Lipschitz, it is bounded in a compact domain and so the first term in Theorem~\ref{thm:clustering-pg-convergence} is $O(1/n^\beta) \leq O(1/n)$.

\paragraph{Another Formulation of Policy Optimization} Notice that the nature of the global confounder assumption permits another kind of policy optimization. One can optimize $U$ different policies, one for each value of the confounder, with standard off-policy improvement methods. To deploy them, one will have to identify the confounder online, which is a nontrivial problem in itself. One avenue is to first deploy each of the $U$ behavior policy components in any order for $O(t_{mix})$ time each, and then attempt to identify the confounder using the classification algorithm in \cite{ambuj2022mixmdp}. If the classification algorithm successfully classifies trajectories generated in this way, we can achieve the optimal reward thereafter by deploying the optimal policy for the confounder in question. 

\subsection{Proof of Theorem \ref{thm:clustering-pg}}\label{subsec:clustering-pg-proof}

\begin{proof}
Note that $\nabla_\theta V_1(s_0 ; \pi_\theta) = \mathbb{E}_{u}[\nabla_\theta V_1(s_0 ; u, \pi_\theta)] = \sum_u P(u) \nabla_\theta V_1(s_0 ; u, \pi_\theta)$.

Using the clustering guarantee from \citet{ambuj2022mixmdp} rephrased in Section~\ref{sec:clustering-ope}, we know that there are numbers $N_0$ and $H_0$ so that given $n \geq U^2SN_0\log(1/\delta)$ trajectories of length $H \geq H_0t_{mix}\log(n)$, we recover clusters $C_1,...,C_U$ consisting of trajectories with the same confounders with probability at least $1-\delta$. Recall that $N_0$ and $H_0$ are not explicitly dependent on $S, A$ and $t_{mix}$, but could depend on the model.

Write $Z(\theta) = \nabla_\theta V_1(s_0 ; \pi_\theta)$, $Z_i(\theta) = \nabla_\theta V_1(s_0 ; u_i, \pi_\theta)$ and $\hat{Z}_i(\theta)$ for the estimate of $Z_i(\theta)$ and $\hat{Z}(\theta) = \sum_{i=1}^U \hat{P}(u_i) \hat{Z}_i(\theta)$ for the estimate of $Z(\theta)$. We only identify the confounder labels in each trajectory up to permutation upon obtaining exact clustering, but as above we assume WLOG that we recover the true cluster labels.

Estimate $P(u)$ with $\hat{P}(u) := \frac{1}{N_{traj}}\sum_n \mathbbm{1}(u_n=u)$ via label proportions. By a simple application of Hoeffding's inequality and the union bound, for $n \geq \frac{2\log(2U/\delta)}{\alpha^2}$, the weights satisfy $|\hat{P}(u) - P(u)| \leq \alpha$ with probability at least $1-\delta$.

We can then bound
\begin{align}
    ||Z(\theta) - \hat{Z}(\theta)||
    &= \left\Vert\sum_{i=1}^U \left(P(u_i) Z_i(\theta) - \hat{P}(u_i)\hat{Z_i}(\theta)\right)\right\rVert \\
    &= \sum_{i=1}^U \left\lVert P(u_i) Z_i(\theta) - \hat{P}(u_i)\hat{Z_i}(\theta)\right\rVert \\
    &\leq \sum_{i=1}^U  ||Z_i(\theta)|| (P(u_i) - \hat{P}(u_i)) + \hat{P}(u_i) ||Z_i(\theta) - \hat{Z_i}(\theta)|| \\
    &= \sum_{i=1}^U  ||Z_i(\theta)|| (P(u_i) - \hat{P}(u_i)) + \sum_{i=1}^U \hat{P}(u_i) ||Z_i(\theta) - \hat{Z_i}(\theta)|| \\
    &\leq \alpha \sum_{i=1}^U  ||Z_i(\theta)|| + \sum_{i=1}^U \hat{P}(u_i) ||Z_i(\theta) - \hat{Z_i}(\theta)|| \\
    &\leq \alpha \sum_{i=1}^U  ||Z_i(\theta)|| + \sum_{i=1}^U 2P(u_i) ||Z_i(\theta) - \hat{Z_i}(\theta)||
\end{align}
where the second inequality holds with high probability and the last inequality holds for sufficiently small $\alpha$. If all $|| Z_i(\theta) - \hat{Z}_i(\theta) || \leq \epsilon/4$ for some $\epsilon > 0$, then we would have $||Z(\theta) - \hat{Z}(\theta)|| \leq  \sum_{i=1}^U 2P(u_i) || Z_i(\theta) - \hat{Z}_i(\theta) || \leq \epsilon/2$.

It remains to bound the error of each $\hat{Z}_i$. Notice that the result of Theorem 7 in \cite{kallus2020statistically} is independent of the gradient update rule or the value of $\theta$ and only depends on the number of samples used to estimate $\hat{Z}^{EOPPG}$. So, it also holds for $\hat{Z}_i$ with $n_i$ samples. Additionally, note that the proof of Theorem 12 in \cite{kallus2020statistically} only uses the supremum of the error over all possible values of $\theta$ and does not use any facts about the gradient update, it follows verbatim for $\hat{Z}_i$ with $n_i$ samples. In particular, with probability at least $1-\delta/U$,
$$\|Z_i(\theta) - \hat{Z}_i(\theta)\|^2 \leq O\left( \frac{H^4 \log(TU/\delta)}{n_i}\right)$$  

and so for $T = n^\beta$, we need $n_i \geq \Omega\left(\frac{H^4\log(nU/\delta)}{\epsilon^2}\right)$ trajectories for $|| Z_i(\theta) - \hat{Z}_i(\theta) || \leq \epsilon$ to hold for all $u_i$ with probability $1-\delta$. To convert this into a bound for $n$, we use Hoeffding's inequality from above in a similar way to the previous proof to find $n_i = \sum_n \ind(u_n=u) \geq n(P(u) - \alpha) \geq nP(u)/2$ for $\alpha \leq \min_u P(u)/2$. We therefore need $n \geq \Omega\left(\frac{1}{\min_u P(u)}\frac{H^4\log(nU/\delta)}{\epsilon^2}\right) $ for the error of each $Z_i$ to be bounded by $\epsilon$ with probability $1-\delta$. 

We then bound $\alpha \sum_{i=1}^U ||Z_i(\theta)|| \leq \epsilon/2$. Let $L$ be a uniform bound over $\theta \in \Theta$ on the magnitude of the gradients $Z(\theta)$ (in the continuous case, this corresponds to a Lipschitz-type assumption on the value functions). It then suffices to require $\alpha \leq \frac{\epsilon}{2L}$.

Splitting the failure probability into $\delta/3$, requiring $\alpha \leq \min_u P(u)/2, \epsilon/2L$, and bounding the error of each $Z_i$ by $\epsilon/4$, we get $||Z(\theta) - \hat{Z}(\theta)|| \leq \epsilon$ with probability $1-\delta$ when

\begin{equation}
    n \geq \Omega\left(\max \left(U^2SN_0\log(1/\delta), \frac{\log(U/\delta)}{\min\{\epsilon^2/L^2, \min_u P(u)^2\}}, \frac{1}{\min_u P(u)}\frac{H^4\log(nU/\delta)}{\epsilon^2}\right) \right)
\end{equation}

\end{proof}

\subsection{Proof of Theorem \ref{thm:clustering-pg-convergence}}\label{subsec:clustering-pg-convergence-proof}

\begin{proof}
    The result is largely analogous to Theorem 11 from \citet{kallus2020statistically}, and in fact, we can transform our problem into theirs and follow their proof. 

    Assume $V_1(s_0 ; u, \pi_\theta)$ and $V_1(s_0 ; \pi_\theta)$ are differentiable and $M$-smooth in $\theta$ for all $u \in U$. Let $f(\theta) = -V_1(s_0;\pi_\theta)$, and $f_i(\theta) = -V_1(s_0 ; u_i, \pi_\theta)$ for each $u_i$. For simplicity, fix the learning rate for all time steps to be some $\eta < \frac{1}{4M}$. By $M$-smoothness, 

    $$f(\theta_{t+1}) \leq f(\theta_t) + \langle \nabla f(\theta_t, \theta_{t+1} - \theta_t \rangle + \frac{M}{2}||\theta_{t+1} - \theta_t||^2.$$

    Define $B_{it} = \hat{Z}_i(\theta) - Z_i(\theta)$ for confounder $u_i$, $B_t = \hat{Z}(\theta) - Z(\theta)$, $w_i = \hat{P}(u_i)$. Observe that $$\theta_{t+1} = \theta_t - \eta \nabla f(\theta_t) - \eta B_t = \theta_t - \eta \sum_i \nabla w_i f(\theta_t) - \eta \sum_i w_i B_{it}. $$
    
    Then, similarly to the proof in \citet{kallus2020statistically}, 
    \begin{align}
        f(\theta_t) - f(\theta_{t+1})
        &\geq -\langle \nabla f(\theta_t), \theta_{t+1} - \theta_t \rangle - \frac{M}{2}||\theta_{t+1} - \theta_t||^2 \\
        &= \eta \langle \nabla f(\theta_t), \nabla f(\theta_t) - B_t \rangle - \frac{\eta^2 M}{2} ||\nabla f(\theta_t) - B_t||^2 \\
        &= \eta ||\nabla f(\theta_t)||^2 + \eta \langle \nabla f(\theta_t), B_t \rangle - \frac{\eta^2 M}{2} ||\nabla f(\theta_t) - B_t||^2 \\ 
        &\geq  \eta ||\nabla f(\theta_t)||^2 - \eta |\langle \nabla f(\theta_t), B_t \rangle| - \frac{\eta^2 M}{2} ||\nabla f(\theta_t) - B_t||^2 \\ 
        &\geq \eta ||\nabla f(\theta_t)||^2 - 0.5\eta (||\nabla f(\theta_t)||^2 + ||B_t||^2) - \eta^2 M ||\nabla f(\theta_t) - B_t||^2 \\
        &\geq 0.25 \eta ||\nabla f(\theta_t)||^2 - 0.5 \eta ||B_t||^2 - 0.25 \eta ||B_t||^2
    \end{align}
    where the second-last inequality uses the parallelogram law and the last inequality uses the fact that $\eta < \frac{1}{4M}$. We then obtain 

    $$ f(\theta_t) - f(\theta_{t+1}) + 0.75 \eta ||B_t||^2 \geq 0.25 \eta ||\nabla f(\theta_t)||^2. $$

    Similarly, by a telescoping sum, 
    $$ (f(\theta_1) - f(\theta^*))/T + \frac{0.75 \eta}{T}\sum_t  ||B_t||^2 \geq \frac{0.25\eta}{T} \sum_t  ||\nabla f(\theta_t)||^2, $$
    $$ (V_1(s_0;\pi_{\theta^*}) - V_1(s_0;\pi_{\theta_1}))/T + \frac{0.75 \eta}{T}\sum_t  ||B_t||^2 \geq \frac{0.25\eta}{T} \sum_t  ||\nabla f(\theta_t)||^2, $$
    $$ \frac{\eta}{T} \sum_t  ||\nabla f(\theta_t)||^2 \leq \frac{4}{T}(V_1(s_0;\pi_{\theta^*}) - V_1(s_0;\pi_{\theta_1})) + \frac{3\eta}{T}\sum_t  ||B_t||^2, $$
    $$ \frac{1}{T} \sum_t  ||\nabla f(\theta_t)||^2 \leq \frac{4}{\eta T}(V_1(s_0;\pi_{\theta^*}) - V_1(s_0;\pi_{\theta_1})) + \frac{3}{T}\sum_t  ||B_t||^2, $$
    $$ \frac{1}{T} \sum_t  ||\nabla f(\theta_t)||^2 \leq \frac{4}{\eta T}(V_1(s_0;\pi_{\theta^*}) - V_1(s_0;\pi_{\theta_1})) + 3 \max_t ||B_t||^2, $$

    and finally by applying Theorem \ref{thm:clustering-pg} for an $n$ that fulfills its conditions for some error threshold $\epsilon$, we obtain

    $$ \frac{1}{T} \sum_t  || Z(\theta_t)||^2 = \frac{1}{T} \sum_t  ||\nabla f(\theta_t)||^2 \leq \frac{4}{\eta T}(V_1(s_0;\pi_{\theta^*}) - V_1(s_0;\pi_{\theta_1})) + 3 \epsilon^2. $$


\end{proof}

\end{document}